\newtheorem{thm}{Theorem}[section]
\newtheorem{lem}[thm]{Lemma}
\newtheorem{prop}[thm]{Proposition}
\newtheorem{cor}[thm]{Corollary}
\newtheorem{rem}[thm]{Remark}
\newtheorem{defn}{Definition}[section]
\newcommand{\eps}{\varepsilon}
\renewcommand{\hat}{\widehat}
\renewcommand{\tilde}{\widetilde}
\newcommand{\defeq}{\stackrel{{\rm def}}{=}}
\newcommand{\F}{\mathcal F}
\newcommand{\A}{\mathcal A}
\newcommand{\E}{{\mathbb E}}
\newcommand{\EXPIX}{\mathrm{\sc EXP.IX }}
\newcommand{\BEXP}{\mathrm{\sc BEXP }}
\newcommand{\MUA}{\mathrm{\sc FULL }}
\newcommand{\MUM}{\mathrm{\sc MUA }}
\newcommand{\EXPThree}{\mathrm{\sc EXP3 }}
\newcommand{\EXPG}{\mathrm{\sc EXP3G }}
\newcommand{\UCB}{\mathrm{\sc UCB }}
\newcommand{\UCBN}{\mathrm{\sc UCBN }}
\newcommand{\GOB}{\mathrm{\sc GOB.LIN }}
\newcommand{\MABN}{\mathrm{\sc EXPN }}
\newcommand{\EXPN}{\mathrm{\sc EXPN }}
\newcommand{\GOBLIN}{\mathrm{\sc GOB.LIN }}
\begin{document}

\title{Lean From Thy Neighbor: \\ Stochastic \& Adversarial Bandits in a Network}

\author{ L. Elisa Celis \\ elisa.celis@epfl.ch \\
       \'{E}cole Polytechnique F\'{e}d\'{e}rale de Lausanne (EPFL)\\$\;$\\
%       Lausanne, 1015, Switzerland \\
 Farnood Salehi \\   farnood.salehi@epfl.ch \\
%       \addr Department of Computer and Communication Sciences\\
       \'{E}cole Polytechnique F\'{e}d\'{e}rale de Lausanne (EPFL)
%       Lausanne, 1015, Switzerland
       }
       
%       \editor{ }

\maketitle

%\begin{keywords}
%Online Learning, Stochastic Bandits, Adversarial Bandits, Social Network \\
%\end{keywords}

\begin{abstract}
An individual's decisions are often guided by \emph{those of his or her peers}, i.e., neighbors in a social network. Presumably, being privy to the experiences of others aids in learning and decision making, but how much advantage does an individual gain by observing her neighbors? Such problems make appearances in sociology and economics and, in this paper, we  present a novel model to capture such decision-making processes and appeal to the classical multi-armed bandit framework to analyze it. Each individual, in addition to her own actions, can observe the actions and rewards obtained by her neighbors, and can use all of this information in order to minimize her own regret. We provide algorithms for this setting, both for stochastic and adversarial bandits, and show that their regret smoothly interpolates between the regret in the classical bandit setting and that of the full-information setting as a function of the neighbors' exploration. In the stochastic setting the additional information must simply be incorporated into the usual estimation of the rewards, while in the adversarial setting this is attained by constructing a new unbiased estimator for the rewards and appropriately bounding the amount of additional information provided by the neighbors. These algorithms are optimal up to log factors; despite the fact that the agents act independently and selfishly, this implies that it is an approximate Nash equilibria for all agents to use our algorithms. Further, we show via empirical simulations that our algorithms, often significantly, outperform existing algorithms that one could apply to this setting. 
\end{abstract}

%%%%%%%%%%%%%
%%%%%%%%%%

\section{Introduction}

Individuals often have access to {information}, via their social or economic network, that they can use to make improved decisions. 
This phenomenon has been observed widely in the social and natural sciences. 
For instance, a recent work (\cite{Y2012}) studies farmers who, every year, have to decide which kind of seed to plant (not just what kind of crop, but which variety of seed) in order to attain the most profit (i.e., revenue - cost). 
In their study, \cite{Y2012}  finds that farmers' decisions are based on a) their own experience in previous years of how different varieties performed, and b) the experiences of peers attained either directly (explicitly via conversations with social contacts) or indirectly (implicitly by observing the farming practices of peers).
Moreover, the information farmers used is primarily from peers in their {\em physical neighborhood} -- not only because these are where their contacts are most likely to be, but also because the profit is correlated due to similar soil and weather conditions. 
These connections between peers then form a network of farmers across the country, where locally, each farmer is trying to learn the best seed for their farm using her own information and that of her neighbors.
As another example, consider WI-FI networks in which nodes want to send their data across the best frequency band. 
Nodes could obtain the current quality of the band their peers are using indirectly through capacity estimation or directly by message passing, and use this information to determine which band to use.
Similar social learning phenomena appear in many other areas in various disguises -- e.g., in the acquisition of consumer products by individuals, the adoption of new technologies, the prevalence and spread of corruption, and in the behavior of animals such as squirrels; see \cite{LBG1948,KL1955,ZAA2007,S2006,AS2012}. 
A natural question then is: \emph{How should an individual incorporate the  information from their neighbors in order to make the best decisions, and how much improvement can such information bring?} 

\begin{wrapfigure}{l}{.5\textwidth} 
\vspace{-.1in}
\begin{center}
\includegraphics[width=2.75in]{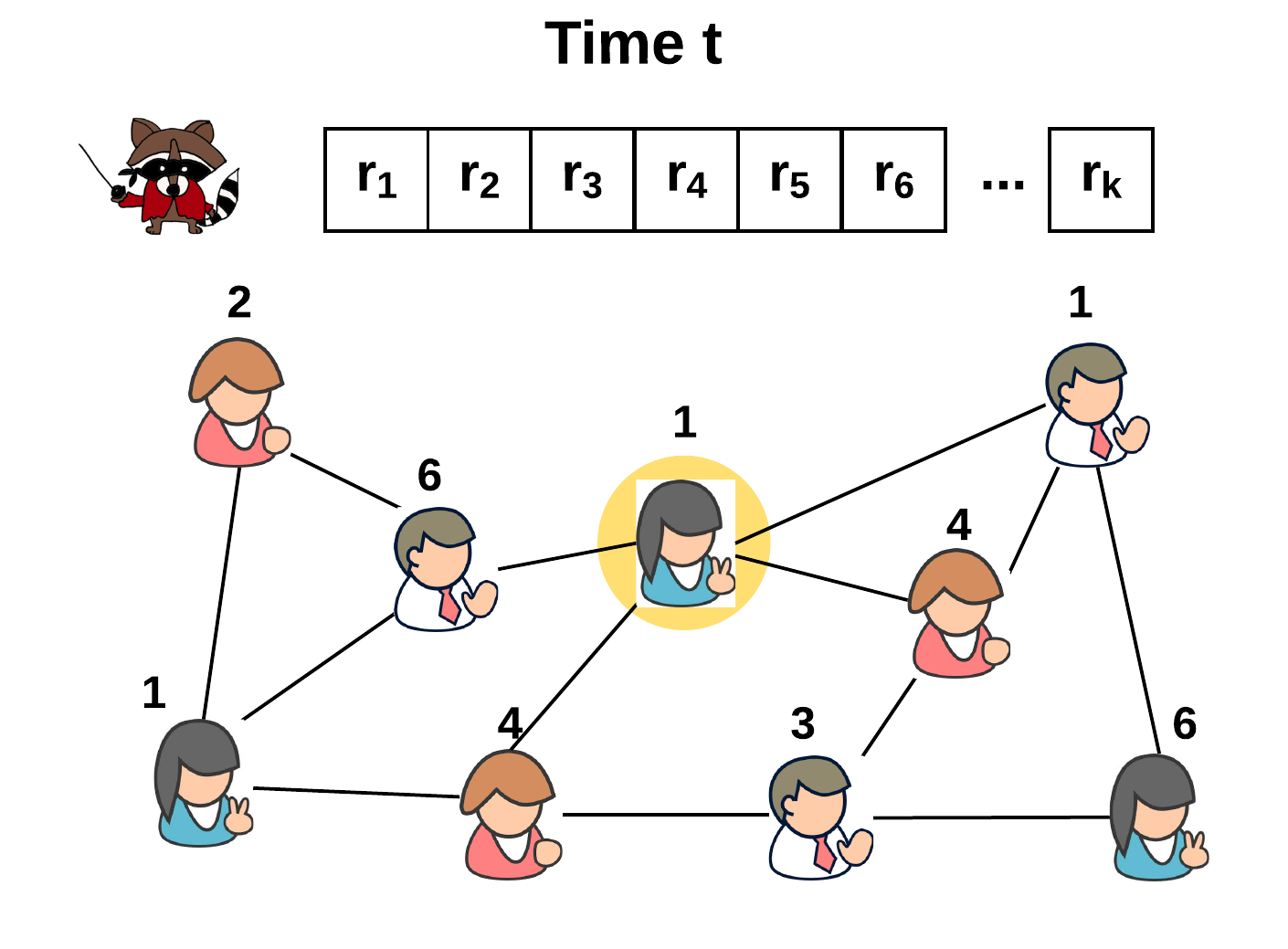}
\vspace{-.1in}
\caption{\small{A social network in which all individuals play against the same bandit, i.e., if two individuals select the same arm at the same time step, they observe the same reward (up to noise). 
 At each time $t$, each individual selects an arm (shown), and then observes its reward along with the actions/rewards of her neighbors. E.g., the yellow circled individual would observe the reward of actions 1, 4 and 6 in this time step.}}
\label{fig:model}
\end{center}
\vspace{-.1in}
\end{wrapfigure}

Consider the following  formulation geared towards capturing the type of settings mentioned above 
(see also Figure~\ref{fig:model}): at each time step each individual selects one of $K$ possible actions,  %(seeds), 
observes the value or reward of selecting that action, and observes the actions, values and/or decision process of their neighbors in the network. 
This selection and observation is repeated again and again, and each individual has the end goal of identifying the action $a^\star \in K$ that brings them the best value over all time steps; i.e., minimizing the \emph{regret}.  
This formulation seems to suggest that the problem is suited for study using the bandit optimization framework, except that now there is \emph{additional information available to an individual via her neighbors}.

Towards this, one approach could be to consider the framework of bandits with \emph{side observations} for which, in the adversarial setting, variants of the multiplicative-weight update algorithm have been developed with success. 
Informally, side observations just mean that at each time step, in addition to observing the reward of a selected action $a(t)$, one may {observe} (but not receive) rewards from a set of other actions $S(t)$. 
A recent body of work has explored how to minimize regret for various different models of $S(t)$. 
In the \emph{free observation model}, the individual is allowed to select $S(t)$ up to some cardinality (e.g., \cite{AKTT2015}).  
However, if one tried to apply such algorithms to the social settings  considered above, it would require an individual to decide which actions her neighbors should take, and hence is not feasible as a solution in this setting. 
In another line of  work (e.g., \cite{ACDK2015}) an \emph{action-network model} has been studied: 
Here, the {actions} form a network and the individual observes the rewards of the neighbors of the \emph{action} she selects (as opposed to the rewards of the actions that \emph{her} neighbors select). 
The action-network is often taken to be exogenous and can be changing over time. Thus, one can apply the algorithms developed in the action-network setting to the social setting above by defining $S(t)$ to be the set of actions selected by the individual's neighbors; however, this may not always be optimal for the social setting as neighbors can provide even more information (see {Section~\ref{sec:better}}). 
The above approaches have been developed in independent contexts and hence geared towards different settings.
Towards obtaining optimal results in the social setting described above, 
the challenge is to adequately model the information from neighbors that can aid in learning, leverage it appropriately, and quantify the advantage it provides. 
In this paper we present such a model 
and show how one can incorporate the additional social information in order to obtain optimal results.
More specifically, in the stochastic setting we show that incorporating side information in a simple way gives rise to a near-optimal algorithm.
Furthermore, in the adversarial setting, we present a modified multiplicative-weight update algorithm that uses an new unbiased estimator to incorporate this information appropriately into the estimation of the reward of each action. 
The proofs requires us to overcome some additional hurdles in order to bound the amount of information gleaned from the neighbors and attain optimal bounds on the regret. 
Our algorithms outperform other state-of-the-art bandit algorithms both theoretically (Section~\ref{sec:adv}) and empirically (Section~\ref{sec:empirical}). 
%

%%%%%%%%%

\subsection{Summary of Our Results}
\label{sec:ourmodel}

Since our model appeals to the bandit  framework, we start by recalling its salient features: 
In the bandit optimization setting, there are $K$ potential actions (\emph{arms}), and the individual selects one arm at each time step. 
Each arm $j$ has an (unknown) \emph{reward} $g_j(t)$ at \emph{time} $t = 1, \ldots, T$. 
The individual receives the reward $g_{a(t)}(t)$ for the selected arm $a(t)$, while the rewards $g_j(t)$ for all other actions $j \neq a(t)$ remain unknown. 
Ideally, the individual would like to select the arm with the best overall reward, i.e., 
$ j^\star = \mbox{arg}\max_j \E\left[ \sum_{t=1}^{T}g_j(t)\right]. $
However, in the absence of knowledge about the reward structure, this is not feasible. 
Instead, as is prevalent in the online learning literature, $j^\star$ is used as a \emph{benchmark} and the \emph{regret} $R$ (i.e., the difference between the individual's rewards and those of the best arm) is minimized. 
Formally,  
\[R \defeq \E\left[\sum_{t=1}^{T} g_{j^\star}(t) - \sum_{t=1}^{T} g_{a(t)}(t)\right],\]
where the expectation is taken over the randomness in the rewards (if stochastic) and the randomness in the algorithm.\footnote{{This is often referred to as \emph{pseudo-regret}; in this paper we simply refer to it as \emph{regret}.}} 
An important divide arises from how the rewards are decided: 
for \emph{stochastic bandits}, the rewards $g_j(t)$ are  i.i.d. from an (unknown) distribution $\F_j$, and for 
\emph{adversarial bandits} (also known as non-stochastic bandits), the rewards $g_j(t)$ are set in an arbitrary manner by an \emph{adversary} that knows the individual's algorithm and past coin flips. 
In realistic settings, such as the examples mentioned in the introduction, the situation is often somewhere in-between. 
Thus, in this work we consider both extremes by analyzing both the stochastic and adversarial settings. 
In either case, algorithms must carefully tradeoff between \emph{exploration} (gaining new knowledge about the rewards) and \emph{exploitation} (using current knowledge to maximize rewards) in order to minimize regret. 
In either case, algorithms must carefully tradeoff between \emph{exploration} (gaining new knowledge about the rewards) and \emph{exploitation} (using current knowledge to maximize rewards) in order to minimize regret.

\paragraph{\textnormal{\textbf{The Model.}}}

Firstly, we assume all individuals play against the same multi-armed bandit: 
in the stochastic setting, the reward distributions $\F_i$ is the same for arm $i$ for all individuals (although the realizations at any given time may differ), and in the adversarial setting the reward vector $\mathbf{g}(t)$ selected by the adversary at time $t$ is the same for all individuals. 
Clearly, there must be some similarity in the rewards between neighbors for social learning to occur.  
Our results also extend to the setting in which the distributions or rewards are correlated, e.g., via 0-mean noise (i.e., each individual $i$ receives the reward as above + individual noise); for ease of presentation we omit this extension, the proofs follow analogously.  %
Secondly, we assume that each individual can observe the following for all \emph{neighbors} $i$: 
\begin{enumerate}[label={(\arabic*)},topsep=1pt,itemsep=1pt,partopsep=0ex,parsep=0ex,,leftmargin=.75in]
\item the actions $a_i(t)$,
\item the rewards $g_{a_i(t)}(t)$, and
\item (for the adversarial setting only) the probability distribution each neighbor used to select an arm at \emph{the previous} time step. %
\end{enumerate}
Assumptions (1) and (2) are natural and directly inspired by applications such as those mentioned in the introduction; individuals either directly or indirectly observe their neighbors' actions and rewards. 
(3) additionally assumes a limited knowledge of how neighbors made their decisions on a step-by-step basis, without having to assume we know their overall algorithm or restricting their behavior in any way. 
All individuals are free to select their probability distributions arbitrarily (and depend arbitrarily on each other), and each one can draw her decision independently of the rest. 
While it would be nice to drop assumption (3) entirely, this would prevent us from attaining optimal regret bounds (see Proposition~\ref{thm:arm_lb} and the discussion in Appendix~\ref{sec:comparison}).\footnote{Alternatively, under different assumptions (e.g., if we assume the neighbors are using an algorithm such as $\EXPThree$) we can estimate these distributions which would suffice.}

Importantly, an individual 
\begin{enumerate}[label={(\arabic*)},topsep=1pt,itemsep=1pt,partopsep=0ex,parsep=0ex,leftmargin=.75in] 
\setcounter{enumi}{3}
\item  \emph{does not} know about the actions and rewards of individuals beyond her neighbors, 
\item \emph{does not} know any global properties of the network, 
\item \emph{does not} know which algorithm other individuals (including neighbors) are using, and 
\item \emph{cannot} dictate or coerce other individuals to act a certain way. 
\end{enumerate}
Removing any of assumptions (4)-(7) would be unnatural for the social learning setting described above: 
If (4) does not hold, we would simply consider such an individual a neighbor, removing (5) is unnatural as the network can be very large and we cannot expect to have knowledge of distant individuals, removing (6) seems impractical as it would mean that individuals  have a detailed knowledge of how neighbors select actions, and allowing individuals to be coerced as in (7) would be in conflict with the idea that every individual seeks to improve her own performance.
% OK
Hence, given (1)-(7), any improvement in the individual's regret arises solely from \emph{passive} observation of \emph{local} information.

\begin{rem}
We emphasize that, by setting up the model as above, agents can act independently and work to selfishly minimize their own regret. Hence, (as our algorithms are near-optimal) it is always an individual's (approximate) best-response to use our algorithms, \emph{regardless of what her neighbors do}. Putting it another way, when the action space is the set of algorithms, it is an (approximate) Nash equilibrium for all nodes to use ours -- no individual can significantly improve their regret by deviating to use an alternate algorithm. This gives rise to an interesting set of questions regarding the average regret of a network (depending on its structure) in equilibrium, or the expected regret in equilibrium of a node as a function of its position in the network. Some immediate implications about the properties of such equilibria follow directly from our results.
\end{rem}

\paragraph{The Algorithm in the Stochastic Setting.} 

Algorithms for the classical stochastic bandit setting \cite{agrawal2012analysis, ACF2002, GC2011} maintain metrics about the observed samples, and determine which arm to select based on these metrics. 
Hence, a natural strategy would be for an individual in the networked setting to take one of these algorithms and incorporate samples obtained from neighbors along with her own without differentiating between the two. 
Indeed, we this simple insight suffices to get near-optimal results. 
Our $\UCBN$ algorithm extends the classic $\UCB$ algorithm by incorporating all observed samples indiscriminately. We show that this suffices to improve performance, often dramatically, both asymptotically and in silico. These results are presented in Section~\ref{sec:tech_stoch} and Appendix~\ref{sec:stochastic}. 
We show that the regret of our algorithm interpolates between $O(1)$ and the $O(K\ln T)$ regret for the classic bandit setting  
depending on the amount of exploration conducted by the neighbors (see {Theorem~\ref{thm:UCBN_ub}}), and  
these bounds are asymptotically optimal (see {Theorem~\ref{thm:UCBN_LB}}). 
As a corollary, in the complete network with $b$ vertices, if all neighbors use our algorithm (i.e., in equilibrium) the regret reduces to $O(\frac{K \ln T}{b})$, which is optimal as even a fully centralized approach can speed up learning by at most a factor of the increase in the number of available samples, i.e., $b$ (see {Corollary~\ref{allUCBN}}).
The theoretical results are presented in {Section~\ref{sec:tech_stoch}} and Appendix~\ref{sec:stochastic}), and the empirical results in {Section~\ref{sec:empirical_stoch}}).

\paragraph{\textnormal{\textbf{The Algorithm in the Adversarial Setting.}}}

Algorithms for classic adversarial bandits are, typically, variations of the multiplicative weights update method. 
Such algorithms maintain a weight for each arm, and (multiplicatively) update that weight based on the observed reward(s). 
An individual then selects an action proportionally to the weight vector. 
It is easy to verify that if we na\"ively incorporate samples from neighbors as if they were our own into the weight vector there is no improvement in the regret. 
Our variation of the multiplicative-weights update algorithm a) incorporates information in a clever manner (that allows for new improved bounds) by using a different unbiased estimator for the rewards and then b) adapts according to the behavior of its neighbors by tuning its update parameters. 
We show that the regret of our algorithm provably interpolates between the  $O(\sqrt{KT\ln K})$ regret for the classical bandit setting and the $O(\sqrt{T\ln K})$ regret for the full-information setting (where the full vector $\mathbf{g}(t)$ is observed at each time step) depending on the {amount} of exploration conducted by the neighbors (see {Theorem~\ref{thm:mabn_ub}}), 
and is optimal up to log factors (see {Theorem~\ref{thm:gen_lb} \& \ref{thm:arm_lb}}). 
Moreover, our algorithm improves performance over the state-of-the art in silico.
The theoretical results are presented in {Section~\ref{sec:adv}} and Appendix~\ref{sec:adversarial}), and the empirical results in {Section~\ref{sec:empirical_adv}}).

\section{Related Work}
\label{sec:related_work}

Distributed learning in a network is a broad topic and has been studied  under various names in several disciplines.
However, to the best of our knowledge, our model for learning from neighbors, along with its assumptions and non-assumptions (1)-(7) which are motivated by relevant settings in sociology and economics, is novel.
 Here we briefly survey the closest relatives to our work.

In the study of \emph{non-strategic learning on networks}, individuals are connected via a network, and each individual has a finite set of actions with probabilistic rewards whose distributions depend on the \emph{state of the world} (see \cite{G2005}, Chapter 2 for a survey). 
Indeed, would be similar to our model in the stochastic setting. 
However, work in this area has focused on studying variants of a \emph{greedy} algorithm, and answering the question of whether learning (i.e., discovery of the state of the world, and hence convergence to the best action) occurs asymptotically (see, e.g.,~\cite{BG1998,EF1993,BG2001,GK2003,GJ2010}). 
 Instead, we are concerned with \emph{regret}, which could be loosely interpreted as the \emph{rate of convergence}. 
 
%%%

%%%
%%%
Recall that in models of bandits with \emph{side observations}, in addition to observing $g_{a(t)}(t)$, one may \emph{observe} (but not receive) additional rewards $g_{S(t)}(t)$. The set of arms $S(t)$ depends on the particular model of side observations. 
In the free observations model, the individual can select $B$ additional arms to observe at each time step; i.e., $|S(t)| = B$ and the individual  selects $S(t)$ for all $t$. Such models have been studied both for stochastic (\cite{YM2009}) and adversarial (\cite{AMS2012, AKTT2015}) bandits. 
Without assumption (7), we could apply such algorithms directly because an individual could dictate which actions her neighbors should take. In the social setting we cannot hope to control our neighbor's decisions in this manner.
Still, we show that the performance of our algorithm is equivalent empirically to such algorithms (see Section~\ref{sec:empirical}).
%%%
% 
In the arm-network (or action-network) setting, 
the individual observes the rewards of the neighbors of the {arm} she selects. 
Such stochastic (\cite{CKLB2012,BES2014}) and adversarial~(\cite{MS2011,ACGM2013,KNVM2014,ACDK2015}) bandit settings have been studied. 
While one could apply these algorithms to the social setting, some social information, in particular from assumption (3), is left on the table.
Leveraging this allows us to provably outperform such approaches (see {Section~\ref{sec:better}}), and empirically the difference can be dramatic (see {Figure~\ref{fig:main}}).

%%%
Other work has considered bounding the \emph{cumulative} regret of all individuals, rather than individuals minimizing their own regret. 
Towards this, centralized algorithms for various versions of stochastic bandits have been studied, in particular for the complete graph (\cite{BES2013, SBHOJK2013,CGZ2013}). 
Although the centralized setting is not the object of our study, as a corollary, we obtain a centralized algorithm for adversarial bandits that is optimal on the complete network (see {Section~\ref{sec:centralized}}).
%%%

\section{Technical Overview for the Stochastic Setting}
\label{sec:tech_stoch}

To describe our algorithm, let us first revisit the $\UCB$ algorithm, first introduced by Auer et. al. \cite{ACF2002} and since widely extended and studied~(see, e.g., \cite{B2010, MMS2011, GC2011}).  
$\UCB$ is an asymptotically optimal algorithm for stochastic bandits with many well-studied variants (see, e.g., \cite{BanditBook} for an overview). 
The idea behind the algorithm uses the principle of \emph{optimism in the face of uncertainty}; the algorithm maintains an optimistic \emph{upper bound} on the mean reward of each arm, and selects an arm with maximal upper bound. As is standard, we assume the probability distributions satisfy Hoeffding's lemma.
Then, arm $j$ at time $t$ has an upper bound 
\begin{equation} \label{rule}
U_j(t) = \hat\mu_j(t) + \sqrt{\frac{\alpha \ln(t)}{2n_j(t)}} 
\end{equation}
which holds with probability at least $1- t^{-\alpha}$ 
when $\hat\mu_j(t)$ is the sample mean of arm $j$, and $n_j(t)$ is the number of samples we have for $j$. 
At time $t$, an arm $a(t) \in \mbox{arg}\max_j \{U_j(t)\}$ %
is selected. 

We make a simple extension to $\UCB$ for an agent on a network: the agent simply incorporates \emph{all} samples and \emph{all} rewards into $n_j$ and $\hat\mu_j$ regardless of whether it came from her action or was observed from one of her neighbors. 
We denote this algorithm by  $\UCBN$, and 
note that it can be implemented by an individual irrespective of the graph structure and the algorithm(s) her neighbors may employ.   
The regret of  $\UCBN$ algorithm is upper bounded as follows. 
\begin{thm} \label{thm:UCBN_ub}
Consider an agent with neighbors who play arbitrarily.  Let $n_i^\prime(t)$ be the number times arm $i$ has been selected by one of her neighbors by time $t$. %
	Then, the regret  of $\UCBN$ for any  $\alpha >2$ is
	\begin{equation}\label{bound2}
		\begin{aligned}
			R \leq  \sum_{i,\Delta_i>0}\left(\max \left \{\max_{t=1,..,T} \left\{\frac{2\alpha\ln t}{\Delta_i}-n^\prime_i(t)\Delta_i\right\} , 0  \right\} + \frac{\alpha}{\alpha-2}\right), 
		\end{aligned}
	\end{equation}
	where $\Delta_i$ is the difference between $\mu_{i^\star}$ and  $\mu_i$.
\end{thm}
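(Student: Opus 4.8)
The plan is to follow the classical \UCB\ regret analysis of \cite{ACF2002}, but to track separately the contribution of the samples supplied by the neighbors. Since the rewards are i.i.d.\ and all agents face the same bandit, I begin with the standard decomposition $R = \sum_{i:\Delta_i>0}\Delta_i\,\E[N_i(T)]$, where $N_i(T)$ is the number of times the agent \emph{herself} plays arm $i$; the whole task thus reduces to bounding $\E[N_i(T)]$ for each suboptimal $i$. The agent plays $i$ at time $t$ only when $U_i(t)\ge U_{i^\star}(t)$, and a short case analysis shows this forces at least one of three events: (a) the optimal arm is underestimated, $\hat\mu_{i^\star}(t)\le \mu_{i^\star}-\sqrt{\alpha\ln t/(2n_{i^\star}(t))}$; (b) arm $i$ is overestimated, $\hat\mu_i(t)\ge \mu_i+\sqrt{\alpha\ln t/(2n_i(t))}$; or (c) arm $i$ is still under-sampled, $n_i(t) < 2\alpha\ln t/\Delta_i^2$. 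Indeed, if both (a) and (b) fail then $U_{i^\star}(t)>\mu_{i^\star}=\mu_i+\Delta_i$ while $U_i(t)<\mu_i+2\sqrt{\alpha\ln t/(2n_i(t))}$, so $U_i(t)\ge U_{i^\star}(t)$ can hold only if $2\sqrt{\alpha\ln t/(2n_i(t))}>\Delta_i$, which is precisely (c).

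The crucial --- and genuinely new --- step is bounding the number of rounds in which the agent plays $i$ because of the under-sampling event (c). The key observation is that the pooled sample count splits as $n_i(t)=N_i(t)+n_i'(t)$, so (c) reads $N_i(t) < 2\alpha\ln t/\Delta_i^2 - n_i'(t)$. Every play of $i$ increments $N_i$ by one, so whenever such a play is triggered by (c) we must have $N_i(t) < m_i \defeq \max_{t\le T}\{2\alpha\ln t/\Delta_i^2 - n_i'(t)\}$; consequently the number of (c)-triggered plays is at most $\max\{m_i,0\}$, up to integer rounding. Here I want to stress the structural difference from the classical setting: without neighbors the threshold $2\alpha\ln t/\Delta_i^2$ is monotone in $t$, so the binding constraint is simply its value at $t=T$; once the decreasing term $-n_i'(t)$ is subtracted the threshold is no longer monotone, and one must take the maximum over $t$. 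Taking the positive part $\max\{\cdot,0\}$ then captures the desirable phenomenon that if the neighbors have already sampled $i$ enough at every time step, the agent need never explore it herself. Multiplying this count by $\Delta_i$ produces exactly the term $\max\{\max_t\{2\alpha\ln t/\Delta_i - n_i'(t)\Delta_i\},0\}$.

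It remains to control the contribution of the concentration-failure events (a) and (b). Since $\E[N_i(T)] \le \max\{m_i,0\} + \sum_{t=1}^T \P(\text{(a) or (b) at }t)$, I bound the tail probabilities with Hoeffding's lemma: for each fixed number of samples $s$ a deviation of size $\sqrt{\alpha\ln t/(2s)}$ has probability at most $t^{-\alpha}$, and a union bound over the (at most $t$) possible values of $n_i(t)$ and $n_{i^\star}(t)$ --- the standard device for handling the fact that the pooled samples are collected adaptively --- gives $\P(\text{(a) or (b)}) = O(t^{1-\alpha})$. The sum $\sum_{t\ge1} t^{1-\alpha}$ converges precisely when $\alpha>2$, which is the source of the hypothesis, and the integral estimate yields the constant $\tfrac{\alpha}{\alpha-2}$ after multiplying by $\Delta_i\le 1$; summing both contributions over all suboptimal $i$ gives the claimed bound. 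I expect the main obstacle to be making the argument of the second paragraph fully rigorous: one must verify that pooling the agent's and the neighbors' draws for arm $i$ still yields a set of i.i.d.\ $\F_i$ samples to which the fixed-$s$ Hoeffding bound applies (so that the adaptive, possibly reward-dependent, choices of both the agent and her neighbors do not bias $\hat\mu_i$), and one must handle the non-monotone threshold and the integer rounding carefully so that the leftover slack is absorbed into the $\tfrac{\alpha}{\alpha-2}$ term rather than inflating the leading $\ln T$ factor.
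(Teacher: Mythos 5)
Your proposal matches the paper's own proof essentially step for step: the same decomposition of the regret into the agent's own play counts, the same three-event case analysis (the paper's Lemma~\ref{lema3}), the same counter argument that plays triggered by the under-sampling event are bounded by the non-monotone threshold $2\alpha\ln t/\Delta_i^2 - n_i'(t)$ maximized over $t$ and clipped at zero, and the same $t^{-\alpha}$-type concentration bounds summed to give the $\frac{\alpha}{\alpha-2}$ term. If anything you are more careful than the paper on the adaptive-sampling subtlety (the paper conditions on $n_i(t)$ and applies Hoeffding directly, while you union bound over sample counts); just note that the pooled count can be as large as $(b+1)t$ rather than $t$, so your union bound as stated yields a neighbor-dependent additive constant rather than exactly $\frac{\alpha}{\alpha-2}$ --- a bookkeeping looseness affecting only the $O(1)$ term, and one the paper's shortcut quietly shares.
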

This result is asymptotically optimal (see Theorem~\ref{thm:UCBN_LB}). 
The regret differs from the regret of the classic $\UCB$ regret by the $-n_j^\prime(T)\Delta_j$ term, and, depending on the behavior of the neighbors, can potentially take the agent from logarithmic to constant regret.

Clearly, the performance of an agent must depend on the behavior of her neighbors. In the worst case, if there are \emph{clumsy} agents who always select the same arm, then our regret is not improved much. However, as long as the agent has at least one neighbor who explores an arm uniformly at random with probability $\eps_t \in \omega(\frac{K \ln t}{t})$ at time $t$ (e.g., this occurs if a neighbor uses an adaptive greedy algorithm), then the regret is $O(1)$! Hence, this allows us to interpret neighbor behavior to our regret seamlessly. 
As an instructive example, consider the setting where all agents use $\UCBN$ in a complete graph. The regret in this setting is $O\left(\frac{K\ln T}{b}\right)$. 
In other words, the regret of an agent using $\UCBN$ is a factor $O(\sfrac 1 b)$ less than that of an agent using $\UCB$ -- indeed we cannot hope to do better, even in a completely centralized setting.
The proof parallels the proofs for the original UCB results (see, e.g., \cite{BanditBook} for a template), and can be found along with further discussion in Appendix~\ref{sec:stochastic}.
While the story for the stochastic setting turns out to be simple and easy to manage, the adversarial setting, as we see below, turns out to be more challenging.

\section{Technical Overview for the Adversarial Setting}
\label{sec:adv}

\subsection{Preliminaries}

The multiplicative weight update method has been discovered many times in many fields over the past century (see ~\cite{AHK2012} for an overview). It is a simple yet surprisingly powerful way to \emph{conservatively} update beliefs about the benefit of a given arm and is extremely effective for adversarial bandits and is asymptotically optimal up to log factors (see, e.g.,~\cite{ACFS2002, FKM2005, ACGM2013}). 
Such algorithms for the full information setting (where all rewards are observed at each time step) maintain a vector of weights $w_j$ for each arm $j$, and (multiplicatively) update it at each time step: 
$
w_j(t+1)=w_j(t) e^{\delta   g_j(t)},  
$
where $0 \leq g_j(t) \leq 1$ is the \emph{reward} observed for arm $j$ at time $t$ and $\delta$ is the \emph{update parameter}.
The probability of choosing arm $j$ at time $t$ is proportional to the weight $w_j(t)$, namely,
$p_j(t)= \frac{w_j(t)}{W_t}$ 
where $w_j(0) = \sfrac 1 N$,  and $W_t = \sum_{j=1}^N\ w_j(t)$.
This algorithm, for an optimal choice of $\delta$ has regret $\Theta(\sqrt{T\ln K})$.
Extending to the bandit setting uses a simple trick; instead of using $g_j(t)$ to update, we use an \emph{unbiased estimator} $\hat g_j(t)$ for $g_j(t)$ (see, e.g., \cite{ACFS2002, FKM2005}).
Since we no longer observe the reward of all of the arms, we must also ensure some exploration should be added to the algorithm. This is achieved by setting a lower bound $\eta \in [0,1]$ (the exploration parameter) on the probability of exploration:  
$p_j(t)=(1-\eta) \frac{w_j(t)}{W_t}+ \frac{\eta}{K}.$
This algorithm, also known as $\EXPThree$ \cite{ACFS2002}, achieves regret $O(\sqrt{TK\ln K})$, and is optimal up to log factors for the right choice of parameters 
(see \cite{BanditBook} for an exposition). 

\subsection{Formal Statement of Results}

We call our algorithm  in the adversarial setting $\MABN$. 
Recall that $p_j(t)$ is the probability that an individual selects arm $j$ at time $t$. Let $q_j^i(t)$ be the probability that her neighbor $i$ selects arm $j$ at time $t$. We denote the number of an individual's neighbors by $b$. Note that the number of nodes in a network, denoted by $N$, may be much larger, but the remaining network does not play a role in the algorithm or main results.
\begin{thm}
\label{thm:mabn_ub}
Given an individual with $b$ neighbors who are playing arbitrarily, the regret when using the $\MABN$ algorithm is 
	\begin{equation}
	R_\EXPN \leq \mathop{\mathbb{E}}\left[2\sqrt{\left(T+\sum_{t=1}^{T}\gamma_t\right) \ln K}\right]
	\end{equation}
	where $\gamma_t=\sum_{j=1}^{K}\frac{p_j(t)}{p_j(t)+\sum_{\ell=1}^{b}q_j^\ell(t)}$.
\end{thm}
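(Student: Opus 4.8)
The plan is to mimic the classical multiplicative-weights (\EXPThree) potential argument, the new ingredient being an unbiased estimator that pools the individual's own observation with those harvested from her $b$ neighbors. Writing $C_j(t) \defeq \mathbf 1[a(t)=j] + \sum_{\ell=1}^{b}\mathbf 1[a_\ell(t)=j]$ for the number of times arm $j$ is observed at step $t$ and $P_j(t)\defeq p_j(t)+\sum_{\ell=1}^{b}q_j^\ell(t)$ for its conditional mean, I would define
\[
\hat g_j(t) \defeq \frac{g_j(t)\, C_j(t)}{P_j(t)}.
\]
This is computable because the individual knows $q_j^\ell(t)$ by assumption (3), and since $\E[C_j(t)\mid\F_{t-1}]=P_j(t)$ by linearity of expectation (which holds regardless of any dependence among the agents' \emph{distributions}), it is unbiased: $\E[\hat g_j(t)\mid\F_{t-1}]=g_j(t)$. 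First I would run the usual weight-potential telescoping on $\ln W_t$ with $w_j(t+1)=w_j(t)e^{\delta\hat g_j(t)}$ and $p_j(t)=w_j(t)/W_t$: lower-bounding the final potential by the single term $j^\star$ and upper-bounding each increment via $e^x\le 1+x+x^2$ (valid under the parameter choice ensuring $\delta\hat g_j(t)\le 1$), then taking expectations and using unbiasedness to identify $\E[\sum_t\hat g_{j^\star}(t)]=\sum_t g_{j^\star}(t)$ and $\E[\sum_{t,j}p_j(t)\hat g_j(t)]=\E[\sum_t g_{a(t)}(t)]$, yields
\[
R \le \frac{\ln K}{\delta} + \delta\,\E\!\left[\sum_{t=1}^{T}\sum_{j=1}^{K}p_j(t)\hat g_j(t)^2\right].
\]

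The heart of the argument, and the step I expect to be the main obstacle, is controlling the second-moment term, i.e. ``bounding the amount of information gleaned from the neighbors.'' Naively treating $C_j(t)$ as an indicator fails, since it is a count that can be as large as $b+1$; the crude bound $C_j(t)^2\le(b+1)C_j(t)$ would inject a spurious factor of $b$. Instead I would use that, conditioned on $\F_{t-1}$, the agents draw their arms \emph{independently} (the model grants independent draws given the, possibly correlated, distributions), so $C_j(t)$ is a sum of independent Bernoullis whose conditional variance is at most its mean, $\mathrm{Var}(C_j(t)\mid\F_{t-1})\le P_j(t)$, giving $\E[C_j(t)^2\mid\F_{t-1}]\le P_j(t)+P_j(t)^2$. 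Using $g_j(t)\le 1$ this produces
\[
\E\!\left[\sum_{j} p_j(t)\hat g_j(t)^2 \;\middle|\; \F_{t-1}\right] \le \sum_{j}\frac{p_j(t)}{P_j(t)^2}\bigl(P_j(t)+P_j(t)^2\bigr) = \sum_j\frac{p_j(t)}{P_j(t)} + \sum_j p_j(t) = \gamma_t + 1 .
\]
The variance term yields exactly the bandit-like quantity $\gamma_t$ (which collapses to $K$ with no neighbors and shrinks toward $0$ as the neighbors cover the arms), while the mean-squared term contributes the full-information-like constant $1$; summing over $t$ bounds the bracket by $T+\E[\sum_t\gamma_t]$.

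Finally, to obtain the stated bound with the expectation \emph{outside} the square root — which is strictly sharper than the fixed-$\delta$ bound $2\sqrt{(T+\E[\sum_t\gamma_t])\ln K}$ one gets by optimizing a single $\delta$ and applying Jensen — I would tune the update parameter adaptively. Since $\gamma_t$ is $\F_{t-1}$-measurable and observable, again by assumption (3), the individual can drive a time-varying rate $\delta_t$ by the running sum $t+\sum_{s\le t}\gamma_s$ (or invoke a doubling schedule), which is precisely the sense in which $\MABN$ ``adapts to the behavior of its neighbors.'' Carrying the telescoping through with a time-varying rate and optimizing pathwise replaces $\tfrac{\ln K}{\delta}+\delta(\cdots)$ by $2\sqrt{(T+\sum_t\gamma_t)\ln K}$ inside a single expectation, giving the claim. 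The technical care here lies in the time-varying-rate telescoping and in verifying $\delta_t\hat g_j(t)\le 1$ throughout; the genuinely new content, however, is the estimator and the variance/mean split above.
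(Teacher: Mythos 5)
Your overall architecture (unbiased pooled estimator, multiplicative-weights potential argument, second moment bounded by $\gamma_t+1$, adaptive rate plus the Auer-style summation lemma, expectation taken outside at the end) matches the paper's proof, but one step fails as written. You run the potential argument on the gains side, invoking $e^x\le 1+x+x^2$ ``under the parameter choice ensuring $\delta\hat g_j(t)\le 1$'' --- and no such parameter choice exists in your setup. You take $p_j(t)=w_j(t)/W_t$ with no exploration mixing, and the neighbors play arbitrarily, so $P_j(t)=p_j(t)+\sum_{\ell}q_j^\ell(t)$ has no positive lower bound: once arm $j$'s weight collapses and no neighbor covers it, $P_j(t)$ can be exponentially small while $C_j(t)\ge 1$ still occurs with positive probability, making $\hat g_j(t)=g_j(t)C_j(t)/P_j(t)$ unbounded. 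Meanwhile your schedule $\delta_t\approx\sqrt{\ln K/(t+\sum_{s\le t}\gamma_s)}$ is bounded below by roughly $\sqrt{\ln K/((K+1)T)}$, so $\delta_t\hat g_j(t)\le 1$ cannot be guaranteed; this is not deferred ``technical care'' but a genuine obstruction (it is exactly why classical $\EXPThree$ needs its $\eta/K$ mixing coupled to the learning rate, and restoring $\eta>0$ here would reintroduce an additive $\eta T$ term and at best recover the weaker fixed-parameter bound of the paper's appendix, not the claimed theorem). The paper's proof sidesteps this entirely by running the second-order expansion on $\exp(-\delta_t\hat g_j(t))$, using $\ln x\le x-1$ together with $e^{-x}-1+x\le x^2/2$, which holds for \emph{all} $x\ge 0$ with no boundedness requirement; substituting that step for yours makes the rest of your argument go through.

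Modulo that repair, your route genuinely differs from the paper's in two places, and favorably. The paper's estimator is indicator-based, $\hat g_j(t)=g_j(t)/p_j'(t)$ on the event that someone plays $j$, where $p_j'(t)=1-(1-p_j(t))\prod_{\ell}(1-q_j^\ell(t))$, and bounding $\sum_j p_j(t)/p_j'(t)$ by $\gamma_t+1$ requires a separate product inequality (Lemma~\ref{lem:helper}). Your count-based estimator $g_j(t)C_j(t)/P_j(t)$ reaches the same bound $\gamma_t+1$ directly from the conditional independence of the agents' draws plus the fact that a sum of independent Bernoullis has variance at most its mean --- a cleaner argument that also makes explicit where independence of the draws (but not of the distributions) is actually used.
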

Before discussing the proof, we first re-write the results in a way that makes them easier to interpret.

\label{sec:interpolate}
{For ease of presentation, momentarily assume that for all arms $j \in [K]$, neighbors $i \in [b]$, and times $t \in [T]$ we have that $q_j^i(t) \geq \frac{\eps_i}{K}$ for some $\eps_i \in (0,1]$.\footnote{This assumption is not required for the proof of Theorem~\ref{thm:mabn_ub}, only for the ease of interpretation in Equation~\ref{regretbound_simple}. Note that if a neighbor is running any variant of the multiplicative weight update method, this condition is satisfied. Removing this assumption requires the \emph{number} of non-zero $\eps_j$ to be tracked for each $j$, and these numbers would appear in the regret bound. }
We can then reinterpret $R_\MABN$ as a function of the bandit regret ($R_{\EXPThree}$) and full information information regret ($R_{\MUA}$) as follows: 
	\begin{equation}\label{regretbound_simple} %{Regret1}
		{R_{\MABN}}= \left\{
		\begin{array}{ll}
			R_{\MUA}\cdot \sqrt{\beta} & \Theta \leq 1-\sqrt{\frac{\ln K}{\beta T}} \\
			
			R_{\EXPThree} \cdot \sqrt{\frac{\beta+K/\Theta}{2K}} 
			&  1-\sqrt{\frac{\ln K}{(\beta+ \frac K \Theta)T}} \leq \Theta 
					\end{array} \right.
	\end{equation} 
where
$\Theta = \Pi_{i=1}^b \left(1-\frac{\eps_i}{K}\right)  \mbox{ and } \beta = \frac{1}{1-\left(1-\frac{1}{K}\right)\Theta} + 1.$

\noindent In particular, note that when $\Theta = 1$, none of the individual's neighbors maintain a probability distribution that is bounded away from 0 for all arms. In other words, the neighbors are not exploring effectively. In this case, $\beta = K+1$ and  $R_{\MABN} \in O(\sqrt{TK\ln K})$, the same as in the classical bandit setting. On the other hand, for example, when $\Theta \leq \sfrac 1 2$, then $\beta \leq 3$ and hence $R_{\MABN} \in O(\sqrt{T \ln K})$, the same as in the full-information setting. Hence, this algorithm smoothly interpolates between bandit regret and full information regret as a function of the neighbors' exploration.

The proofs of Theorem~\ref{thm:mabn_ub} and Equation~\ref{regretbound_simple} {appear in {Section~\ref{sec:gammaubmain} and Appendix \ref{sec:adversarial}}} respectively. 
At first, the proofs parallel standard approaches to analyze the multiplicative-weight update method; the crucial difference is a new unbiased estimator that is used in order to incorporate the neighbors' information (see Section~\ref{sec:algorithm_adv}).  
This leads to the following bound on the regret:
\[R \leq \frac{\ln K}{\delta} + \eta   T +  \delta T \sum_{j=1}^K \frac{p_j(t)}{p_j'(t)}.\]
The technical obstacle then becomes attaining tight bounds on the $\sum_{j=1}^K \frac{p_j(t)}{p_j'(t)}$ term (see Lemmas~\ref{lem:UB} and \ref{lem:helper}). 
 What remains is then a straightforward (albeit tedious) optimization problem on the parameters $\eta$ and $\delta$.

We can further show that Theorem~\ref{thm:mabn_ub} is optimal up to log factors. 
\begin{thm}
\label{thm:gen_lb}
Given an individual with $b$ neighbors who are playing arbitrarily, the regret when using the $\MABN$ algorithm is 
\[R_{\MABN} = \Omega \left(\sqrt{T + \sum_{t=1}^T \gamma_t}\right)\]
where $\gamma_t=\sum_{j=1}^{K}\frac{p_j(t)}{p_j(t)+\sum_{\ell=1}^{b}q_j^\ell(t)}$ as defined above.
\end{thm}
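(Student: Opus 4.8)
The plan is to establish a matching lower bound by constructing, for any fixed sequence of neighbor probability distributions $(q^\ell_j(t))$, an adversarial reward instance on which $\MABN$ (and in fact any algorithm that only observes its own reward plus those of the neighbors' chosen arms) must incur regret $\Omega(\sqrt{T + \sum_t \gamma_t})$. The first step is to note that $\gamma_t = \sum_j \frac{p_j(t)}{p_j(t)+\sum_\ell q^\ell_j(t)}$ measures, arm by arm, how much the individual's own selection probability dominates the total "observation mass" on that arm; when all neighbors put very little mass on arm $j$, the term is close to $1$, and summing over $j$ it ranges in $[1,K]$. So the target bound interpolates between $\Omega(\sqrt{T})$ (full information, neighbors cover everything) and $\Omega(\sqrt{KT})$ (classical bandit, neighbors useless), exactly mirroring Theorem~\ref{thm:mabn_ub}.

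The core of the argument is the standard stochastic-adversarial lower bound technique (à la Auer--Cesaro-Bianchi--Freund--Schapire): fix a ``hard'' arm $j^\star$ drawn uniformly from some subset $S \subseteq [K]$ of candidate arms, and set rewards to be i.i.d.\ Bernoulli$(\tfrac12)$ for all arms except $j^\star$, which gets Bernoulli$(\tfrac12 + \epsilon)$. The individual can only distinguish $j^\star$ from the rest by accumulating samples of it — either by pulling it herself or by having a neighbor pull it. The key information-theoretic step is a KL-divergence / Pinsker bound: the total variation distance between the observation process under the null (all arms $\tfrac12$) and under the alternative (arm $j^\star$ boosted) is controlled by $\epsilon^2 \cdot \E[\text{number of samples of } j^\star]$, where a ``sample of $j^\star$'' is collected on step $t$ with probability $p_{j^\star}(t) + \sum_\ell q^\ell_{j^\star}(t)$. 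One then relates $\E[\text{samples of } j^\star]$ back to $\gamma_t$: by definition $\gamma_t = \sum_j \frac{p_j(t)}{p_j(t)+\sum_\ell q^\ell_j(t)}$, and choosing $S$ to be (roughly) the set of arms $j$ where $p_j(t)+\sum_\ell q^\ell_j(t)$ is not too large guarantees that the average over $j^\star \in S$ of the expected sample count is $O\!\left(\frac{\sum_t 1}{|S|}\right) = O(T/|S|)$ while $\sum_t \gamma_t \le |S|\,T$-type bookkeeping ties the two together. Then the usual chain — regret $\ge \epsilon T \cdot \P[\text{algorithm fails to concentrate on } j^\star] \ge \epsilon T(1 - \text{TV})$ — and optimizing $\epsilon \sim \sqrt{|S|/T}$ yields regret $\Omega(\epsilon T) = \Omega(\sqrt{|S| T})$, which is then shown to be $\Omega(\sqrt{T + \sum_t \gamma_t})$ by the choice of $S$.

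A subtlety I would handle carefully: $p_j(t)$ is itself a random, algorithm-dependent quantity, whereas $\gamma_t$ appears in the theorem statement as if it were a fixed target; the clean reading is that $\gamma_t$ is evaluated along the algorithm's actual run, so the lower bound must be stated in expectation with $\gamma_t$ inside (as it is). To make the reduction work I would therefore fix the adversary and the neighbor strategies first, run $\MABN$, and argue that whatever trajectory of $p_j(t)$ it produces, there is a choice of $j^\star$ (depending only on the neighbor distributions, which are external, plus the observation that the algorithm cannot simultaneously put large mass on many arms since $\sum_j p_j(t) = 1$) making the samples of $j^\star$ small in expectation. Concretely, since $\sum_j p_j(t) = 1$, a uniformly random arm receives expected own-mass $1/K$; combined with the neighbor mass this is essentially the reciprocal structure in $\gamma_t$, and an averaging/Markov argument over arms gives the bound.

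The main obstacle I expect is the bookkeeping that converts the per-arm reciprocal sum $\sum_t \gamma_t$ into a single ``effective number of arms'' $m$ such that (i) the hard instance lives on $m$ arms and (ii) $\sqrt{mT} \gtrsim \sqrt{T + \sum_t \gamma_t}$. Because $\gamma_t$ can vary with $t$ while the hard-arm subset $S$ is fixed across all $t$, one must either (a) show $\sum_t \gamma_t \le mT$ for the right $m$ and use a single static $S$, accepting a possible constant loss, or (b) if $\gamma_t$ fluctuates wildly, split time into epochs on which $\gamma_t$ is roughly constant and run a fresh lower-bound instance per epoch, summing the $\Omega(\sqrt{\cdot})$ contributions via concavity of $\sqrt{\cdot}$. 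Option (b) is more robust and is the route I would take if (a) runs into trouble; in either case the final step is a routine application of $\sum_i \sqrt{x_i} \ge \sqrt{\sum_i x_i}$ and the parameter optimization, which I would not grind through here.
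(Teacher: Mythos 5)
Your proposal follows essentially the same route as the paper: both adapt the Auer--Cesa-Bianchi--Freund--Schapire lower-bound construction (a uniformly random ``good'' arm with Bernoulli $\tfrac12+\epsilon$ rewards versus Bernoulli $\tfrac12$ elsewhere), replace the observation count of the good arm by the probability that the individual \emph{or} a neighbor selects it, control distinguishability via the resulting KL/relative-entropy term, and optimize $\epsilon$ against the total expected number of observations to recover $\Omega\bigl(\sqrt{T+\sum_t \gamma_t}\bigr)$. The paper dispenses with your subset-$S$/epoch machinery by simply setting $\epsilon = c\sqrt{K/\sum_i \mathbb{E}_{unif}[N_i]}$ with $\mathbb{E}_{unif}[N_i]=\sum_t p'_i(t)$ over all $K$ arms, but the underlying argument is the same.
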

This information theoretic lower bound follows by extending the lower bound attained for the classic bandit setting  (see, e.g., \cite{ACF2002}), and is given in {Appendix \ref{sec:lowerbound}}. 
This theorem shows that the analysis of our algorithm is tight up to log factors and will help us establish dominance over other potential approaches as discussed in Section~\ref{sec:better} -- a weaker lower bound that is not algorithm-dependent (which matches the above bound for pathological cases such as when all neighbors always play the same action) is presented in Theorem~\ref{thm:arm_lb}.

%%%%%%
\subsection{The $\MABN$ Algorithm}
\label{sec:main}
\label{sec:algorithm_adv}

Key to our $\MABN$ algorithm is the following new unbiased estimator for the rewards:
\begin{equation} \label{eq:MABNestimator} 
	\hat{g}_j(t)=
	\begin{cases}
	\frac{g_j(t)}{p_j^\prime(t)} &\mbox{if some individuals selects action j at time t}\\
	0 &\mbox{otherwise,}
	\end{cases}
\end{equation}
where $p^\prime_j(t)$ is the probability that at least one  individual selects action $j$, i.e.,
\begin{equation} \label{pPrime}
p^\prime_j(t) \defeq 1- (1-p_j(t))(1-q^1_j(t))\cdots(1-q^b_j(t)).
\end{equation}
The algorithm then updates the weights according to $w_j(t+1)=w_j(t)  e^{\delta \hat{g}_j(t)} = w_j(0) e^{\delta \sum_{s=1}^t \hat{g}_j(s)},$
where $w_j(0) = 1$, 
and updates the probability distributions according to 
$ p_j(t)=(1-\eta)\frac{w_j(t)}{W_t}+\frac{\eta}{K} $
 where $W_t = \sum_j w_j(t)$. 
Note that this algorithm can be implemented irrespective of the network structure and depends only on the information obtained locally from neighbors as defined in our model.
In essence, the key to our algorithm is two fold:
\begin{enumerate}[topsep=0pt,itemsep=0pt,partopsep=0ex,parsep=0ex]
\item \emph{Design} a new unbiased estimator $\hat{g}_j(t)$ that incorporates the side observations obtained from neighbors: Unlike for stochastic bandits, na\"ive  estimators do not suffice, and a new approach is required.\footnote{We must ensure that in bounding $\mathop{\mathbb{E}}[\hat{g}^2_j]$, we get some improvement over the usual bandit setting; it is easy to verify that such bounds do not hold for na\"ive estimators such as the average of the neighbors' estimators.} %
\item \emph{Decouple} the exploration and exploitation parameters: When an individual's neighbors explore a lot, she could benefit by  \emph{free-riding} off of the exploration of her neighbors; this is accomplished by decreasing her exploration parameter $\eta$. However, if we take $\delta = \eta/K$ as in $\EXPThree$, this dampens our updates. Hence we need $\eta$ and $\delta$ to act independently. % 
\end{enumerate}

\noindent This analysis is presented in {Appendix \ref{sec:oldub}.

\subsection{Proof of Theorem~\ref{thm:mabn_ub}}
\label{sec:gammaubmain}

While the above version of the algorithm gives a natural interpretation of the parameters, in order to attain the stronger regret bound in Theorem~\ref{thm:mabn_ub} we take a slightly different approach. Instead of decoupling the parameters $\eta$ and $\delta$, we instead use an adaptive $\delta_t$ that changes based on the amount of information received from the neighbors.\footnote{Adaptive $\delta_t$ are often used when $T$ is unknown; here the adaptive $\delta_t$ serves a different function by allowing us to explicitly respond to the neighbors' actions.}   In particular, we let $\delta_t=\sqrt{\frac{\ln K}{\sum_{c=1}^{t}(1+\gamma_c)}}$ and can take $\eta = 0$. 
Importantly, note that we use the same unbiased estimator in either version of the algorithm.

\begin{proof}[Proof of Theorem~\ref{thm:mabn_ub}]
The first part of proof (from Equation~\eqref{eqmain:eq1} to Equation~\eqref{eqmain:main}) parallels the traditional analysis for analyzing multiplicative weight update algorithms; for completeness we present the steps without going into the details (see \cite{BanditBook} for an exposition). We first write $\mathop{\mathbb{E}}[\hat{g}_j(t)]$ as follows:
	\begin{equation}\label{eqmain:eq1}
	\mathop{\mathbb{E}}[\hat{g}_j(t)]=\frac{1}{\delta_t}\left(\ln \mathop{\mathbb{E}}\left[\exp\left(-\delta_t\left(\hat{g}_j(t)-\mathop{\mathbb{E}}[\hat{g}_j(t)]\right)\right)\right]
	-\ln \mathop{\mathbb{E}}\left[\exp\left(-\delta_t\hat{g}_j(t)\right)\right]\right)
	\end{equation}
	where the expectation is over the randomness of the estimator and choice of the arm.
	We will now consider the right-hand side of the equation and upper bound the two terms separately. %,
	\begin{equation}
	\begin{aligned}
	\frac{1}{\delta_t}\ln \mathop{\mathbb{E}}\left[\exp\left(-\delta_t(\hat{g}_j(t)-\mathop{\mathbb{E}}[\hat{g}_j(t)])\right)\right] &=
	\frac{1}{\delta_t}\ln \mathop{\mathbb{E}}\left[\exp(-\delta_t\hat{g}_j(t)\right]+\mathop{\mathbb{E}}[\hat{g}_j(t)]) \\&\leq
	\frac{1}{\delta_t}\mathop{\mathbb{E}}[\exp(-\delta_t\hat{g}_j(t))-1+\delta_t\hat{g}_j(t)] %\\&
	\leq
	\frac{\delta_t}{2}\mathop{\mathbb{E}}[\hat{g}^2_j(t)],
	\end{aligned}
	\end{equation}
	where we use the inequalities $\ln x\leq x-1$ and $\exp(-x)-1+x\leq x^2/2$ for $x\geq 0$.
	Now, let $\hat{G}_j(t)=\sum_{a=1}^{t}\hat{g}_j(t)$, and let $\psi(t)=\frac{1}{\delta_t}\ln\left(\frac{1}{K}\sum_{j=1}^{K}\exp(-\delta_t \hat{G}_j(t))\right)$. 
	Hence, 
	\begin{equation}\label{eqmain:eq2}
	%\label{eqmain:eq3}
	\begin{aligned}
	-\frac{1}{\delta_t} &\ln \mathop{\mathbb{E}}_{a_t\sim p^\prime_t}\mathop{\mathbb{E}}_{j\sim p^\prime_t} \exp(-\delta_t \hat{g}_j(t))  \leq
	-\frac{1}{\delta_t} \mathop{\mathbb{E}}_{a_t\sim p^\prime_t} \ln \mathop{\mathbb{E}}_{j\sim p^\prime_t} \exp(-\delta_t \hat{g}_j(t)) \\ & 
	-\frac{1}{\delta_t} \mathop{\mathbb{E}}_{a_t\sim p^\prime_t}\left[ \ln\left( \sum_{j=1}^{K}\frac{\exp(-\delta_t \hat{G}_j(t))}{\sum_{c=1}^{K}\exp(-\delta_t \hat{G}_c(t-1))}\right)\right] 
	\leq \psi(t-1)-\psi(t)
	\end{aligned}
	\end{equation} 
	where we recall that $\eta = 0$, the first %and last 
	inequality we use Jensen's inequality, 
	 and note that $\delta_t$ is decreasing in $t$.
	By summing up the terms in Equation~\eqref{eqmain:eq1} and ~\eqref{eqmain:eq2} over all $t$ we get
	\begin{equation}
	\sum_{t=1}^{T}\mathop{\mathbb{E}}[\hat{g}_j(t)] \leq \sum_{t=1}^{T}\frac{\delta_t}{2} \mathop{\mathbb{E}}[\hat{g}^2_j(t)]
	-\mathop{\mathbb{E}}_{a_t\sim p^\prime_t}\psi(T).
	\end{equation}
	
\noindent	We now bound $-\psi(T)$ as follows: 
	\begin{equation}
	\begin{aligned}
	-\psi(T) &=  \frac{\ln K}{\delta_T} - \frac{1}{\delta_T} \ln \left(\sum_{j=1}^{K}\exp(-\delta_T \hat{G}_j(T))\right) %\\&
	\leq
	\frac{\ln K}{\delta_T} - \frac{1}{\delta_T} \ln \left(\exp(-\delta_T \hat{G}_k(T))\right)  %\\&
	=
	\frac{\ln K}{\delta_T} + \hat{G}_k(T).
	\end{aligned}
	\end{equation}
	Plugging this into in the above inequality yields
	\begin{equation} \label{eqmain:main}
	\sum_{t=1}^{T}\mathop{\mathbb{E}}[\hat{g}_j(t)] \leq \sum_{t=1}^{T}\frac{\delta_t}{2} \mathop{\mathbb{E}}[\hat{g}^2_j(t)]
	+\frac{\ln K}{\delta_T}
	+\mathop{\mathbb{E}}_{a_t\sim p^\prime_t} [\hat{G}_k(T)].
	\end{equation}

\noindent Now, we note that given $p_j(t)$ at time $t$ we have %  
	\begin{subequations} \label{Expectation}
		\begin{align}
		\E\left[\hat{g_j}(t)\right]&=g_j(t), \\ 
		\E\left[\sum_{t=1}^{T}\sum_{i=1}^{K} p_j(t) \hat{g}_j(t)\right] 
		&= \sum_{t=1}^{T}\sum_{i=1}^{K} p_j(t)g_j(t)=\E\left[\sum_{t=1}^{T} g_{a(t)}(t)\right], \mbox{ and }\\ 
		\E\left[\sum_{t=1}^{T}\sum_{i=1}^{K}p_j(t)\hat{g}^2_j(t)\right] 
		&= \sum_{t=1}^{T}\sum_{i=1}^{K}\frac{p_j(t)}{p^\prime_j(t)}  g^2_j(t) 
		\leq  \sum_{t=1}^{T}\sum_{i=1}^{K}\frac{p_j(t)}{p^\prime_j(t)}   \label{ex3}
		\end{align}
	\end{subequations}
	as  $g_j(t)\leq 1$, and where  the expectation is over randomness of the algorithm. %
Now, attaining a good bound on the regret boils down to attaining a good bound on $\sum_{i=1}^{K}\frac{p_j(t)}{p^\prime_j(t)}$. Towards this, we need a technical lemma that, in effect, allows bounds the amount of information received from neighbors.
 \begin{lem}
 \label{lem:helper}
 	$\sum_{j=1}^{K} \frac{p_j}{1-(1-p_j)(1-q^1_j)\cdots(1-q^b_j)} \leq \sum_{j=1}^{K}\frac{p_j}{p_j+q^1_j+q^2_j+\cdots+q^b_j}+1$.
 \end{lem}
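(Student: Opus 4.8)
The plan is to reduce the stated inequality to a per-arm bound whose slack is \emph{exactly} $p_j$, so that summing over arms and invoking $\sum_{j=1}^{K} p_j = 1$ produces the additive $+1$ on the right-hand side. Write $p^\prime_j = 1-(1-p_j)(1-q^1_j)\cdots(1-q^b_j)$ for the left-hand denominator (this is precisely the quantity defined in Equation~\eqref{pPrime}) and $S_j = p_j + q^1_j + \cdots + q^b_j$ for the right-hand denominator. I claim it suffices to prove, for every arm $j$, the termwise inequality
\[
\frac{p_j}{p^\prime_j} \;\leq\; \frac{p_j}{S_j} + p_j ,
\]
since summing this over $j \in [K]$ and using that $(p_j)_j$ is a probability distribution gives exactly the lemma. (When $p_j = 0$ the inequality is trivial; when $p_j > 0$, note that the agent's own choice already contributes, so $p^\prime_j \geq p_j > 0$ and no term is ill-defined.)

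Next I would cancel $p_j > 0$ and rearrange. The termwise target becomes $\tfrac{1}{p^\prime_j} - \tfrac{1}{S_j} \leq 1$, equivalently $S_j - p^\prime_j \leq p^\prime_j S_j$. Substituting $p^\prime_j = 1 - P_j$ with $P_j = (1-p_j)\prod_{\ell=1}^{b}(1-q^\ell_j)$ and collecting terms, a one-line manipulation reduces the whole claim to the clean product inequality $P_j\,(1 + S_j) \leq 1$. Enumerating the $b+1$ numbers $p_j, q^1_j, \ldots, q^b_j$ as $x_0, \ldots, x_b \in [0,1]$, this is
\[
\prod_{i}(1 - x_i)\,\Bigl(1 + \sum_{i} x_i\Bigr) \;\leq\; 1 .
\]

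To finish I would establish this last inequality by comparing reciprocals. If some $x_i = 1$ the product on the left is $0$ and the bound is immediate, so assume all $x_i \in [0,1)$. For each $i$ we have $\tfrac{1}{1-x_i} \geq 1 + x_i$, and since all factors are nonnegative, expanding $\prod_i (1+x_i)$ and discarding the nonnegative higher-order cross terms gives $\prod_i (1+x_i) \geq 1 + \sum_i x_i$. Chaining these,
\[
\prod_{i} \frac{1}{1-x_i} \;\geq\; \prod_{i}(1+x_i) \;\geq\; 1 + \sum_{i} x_i ,
\]
which is exactly $\prod_i(1-x_i)\bigl(1 + \sum_i x_i\bigr) \leq 1$. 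This closes the reduction back up to the termwise bound and hence the lemma.

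The only real conceptual obstacle is identifying the correct slack. A union-bound comparison shows $p^\prime_j \leq S_j$, hence $\tfrac{p_j}{p^\prime_j} \geq \tfrac{p_j}{S_j}$ termwise --- the \emph{wrong} direction --- so there is no hope of a clean termwise comparison without an error term; the key observation is that the per-arm excess is bounded by $p_j$, whose total is precisely $1$. Everything downstream is elementary, with the one nontrivial estimate being the product inequality $\prod_i(1-x_i)(1+\sum_i x_i) \leq 1$, which I expect to be the crux of the argument.
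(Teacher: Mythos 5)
Your proof is correct and is essentially the paper's own argument: the product inequality $\prod_i(1-x_i)\bigl(1+\sum_i x_i\bigr)\leq 1$ that you identify as the crux is exactly the paper's auxiliary Lemma~\ref{lemma1}, proved there by the same two estimates ($(1-x_i)(1+x_i)\leq 1$ and $\prod_i(1+x_i)\geq 1+\sum_i x_i$), and the additive $+1$ likewise comes from summing the per-arm slack $p_j$ against $\sum_j p_j=1$. The only difference is expository direction: you reduce the claim down to the product inequality and then prove it, whereas the paper proves the product inequality first and substitutes it into the denominator $1-(1-p_j)\prod_{\ell}(1-q_j^{\ell})$.
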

The proof is presented in Appendix \ref{sec:gammaubmain}. 
Using this Lemma and combining all of the above, we get  
	\begin{equation}\label{eqmain:regret_bound}
	R \leq \frac{\ln K}{\delta_T} +   \frac{1}{2}  \sum_{t=1}^{T}\delta_t(1+\gamma_t).
	\end{equation}
	To conclude the proof, recall that $\delta_t=\sqrt{\frac{\ln K}{\sum_{c=1}^{t}(1+\gamma_c)}}$, use Lemma 3.5 of \cite{auer2002adaptive}, and take expectation of the both sides of Equation~\eqref{eqmain:regret_bound}. 
\end{proof}

\subsection{Comparison to Alternate Approaches} 
\label{sec:feedback_graph}
\label{sec:better}

\begin{wrapfigure}{r}{.4\textwidth} 
  \begin{center}
  \vspace{-.65in}
	\includegraphics[width=2.75in]{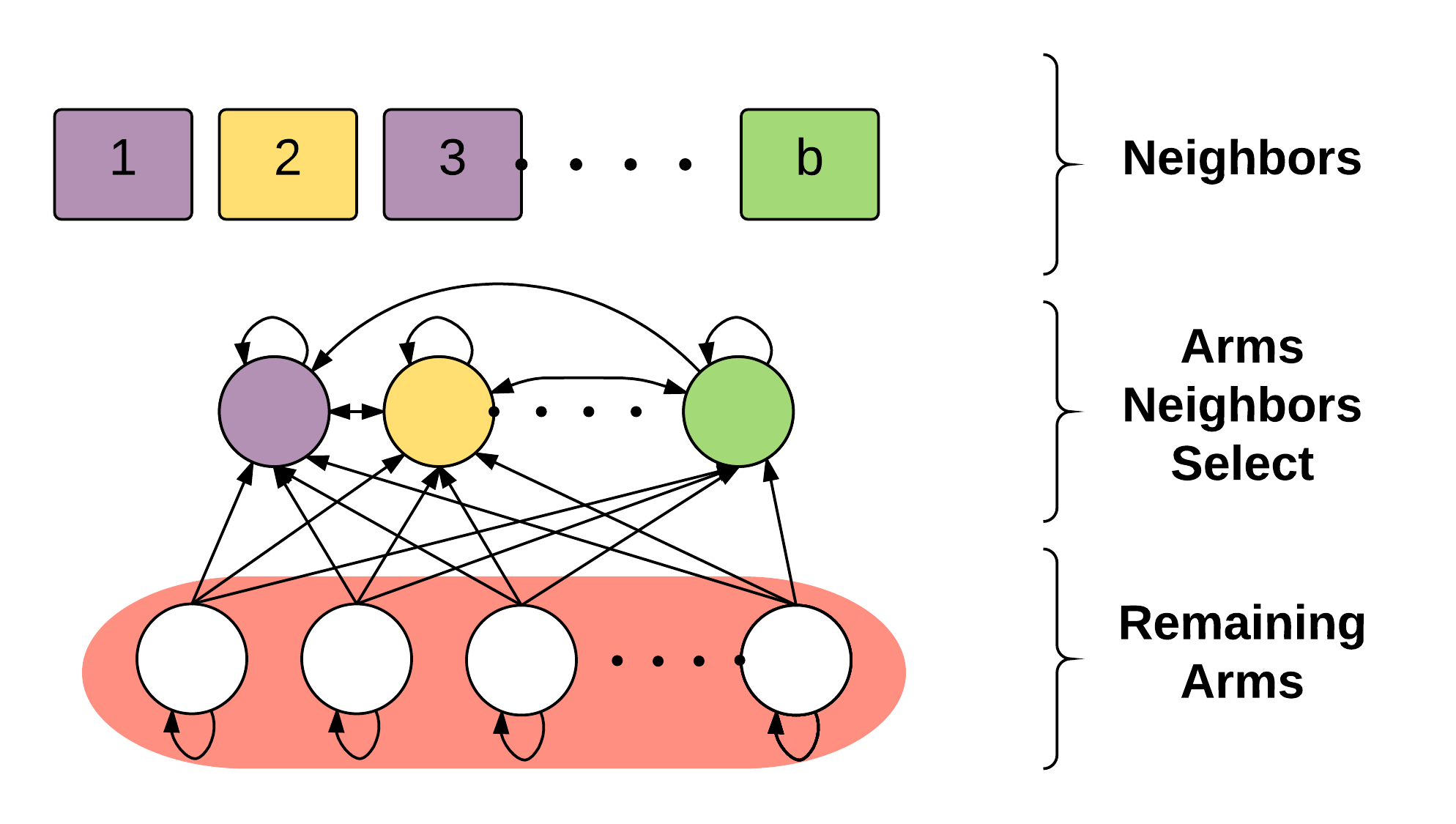}
	\caption{\small{The arm-network has an edge from arm $u$ to arm $v$ if, having selected arm $u$, we observe the reward of arm $v$. Arms selected by neighbors in the social network are form a clique, and the remaining arms have self loops and edges to all selected arms. }%
	}
		\label{fig:arm_network}
		  \end{center}
		    \vspace{-.25in}
\end{wrapfigure}

Instead of developing a new algorithm, we could have attempted to leverage an existing one. The most natural one to try is from the arm-network setting which is as follows: there is a single individual and the bandit's arms form an arm-network which can change over time. An edge from arm $u$ to arm $v$ means that by choosing arm $u$ we observe the reward of arm $v$.
Thus we could, in retrospect at each time step, recreate an arm-network (see Figure~\ref{fig:arm_network}) and apply an arm-network algorithm. We consider $\EXPG$ (\cite{ACDK2015}), which is the state-of-the-art solution for such problems, and performed best amongst arm-network algorithms in our empirical simulations. 
However, we prove in {Appendix~\ref{sec:comparison}} that our algorithm is at least as good.
\begin{prop} \label{thm:better}
$R_{\MABN} = O(R_{\EXPG})$.
\end{prop}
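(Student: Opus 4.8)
The plan is to compare the two algorithms through their (tight, up to log factors) regret guarantees on the reconstructed feedback graphs $G_t$ of Figure~\ref{fig:arm_network}, and to show that the quantity controlling $R_{\MABN}$ is dominated by the quantity controlling $R_{\EXPG}$. First I would recall the guarantee for $\EXPG$ from \cite{ACDK2015}: each $G_t$ is \emph{strongly observable} (every non-selected arm carries a self-loop and is hence self-observable, while every selected arm is observed by every other arm), so the time-varying feedback-graph bound applies and gives $R_{\EXPG} = O\big(\sqrt{\ln K \sum_{t=1}^T \alpha(G_t)}\big)$ up to log factors, where $\alpha(G_t)$ is the independence number; moreover this bound is tight up to logs, so it is legitimate to compare it against our upper bound.

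Next I would compute $\alpha(G_t)$ explicitly. Let $S_t$ be the set of distinct arms selected by the $b$ neighbors at time $t$. Since the selected arms form a clique and no two non-selected arms share an edge (non-selected arms only point to selected arms), the non-selected arms form the largest independent set, giving $\alpha(G_t) = \max(K-|S_t|,\,1)$. Taking expectations over the neighbors' draws, $\E[\,|S_t|\,] = \sum_{j=1}^K \big(1-\prod_{\ell=1}^b(1-q_j^\ell(t))\big)$, hence $\E[\alpha(G_t)] \ge \sum_{j=1}^K \prod_{\ell=1}^b (1-q_j^\ell(t))$ and also $\E[\alpha(G_t)]\ge 1$.

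The heart of the argument, and the main obstacle, is to relate our quantity $\gamma_t = \sum_{j}\frac{p_j(t)}{p_j(t)+Q_j(t)}$ (with $Q_j(t)=\sum_{\ell}q_j^\ell(t)$) from Theorem~\ref{thm:mabn_ub} to $\E[\alpha(G_t)]$, bridging the gap that $\gamma_t$ is phrased via the neighbors' \emph{probabilities} whereas $\alpha(G_t)$ is phrased via their \emph{realized} actions. I would establish the per-arm inequality $\frac{p_j}{p_j+Q_j}\le \prod_\ell(1-q_j^\ell)+p_j$: when $p_j+Q_j\ge 1$ the left side minus $p_j$ is non-positive, and otherwise $\frac{p_j}{p_j+Q_j}-p_j = \frac{p_j}{p_j+Q_j}(1-p_j-Q_j)\le 1-Q_j\le\prod_\ell(1-q_j^\ell)$ by the union bound $\prod_\ell(1-q_j^\ell)\ge 1-Q_j$. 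Summing over $j$ and using $\sum_j p_j=1$ yields $\gamma_t \le \sum_j\prod_\ell(1-q_j^\ell)+1 \le \E[\alpha(G_t)]+1$.

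Finally I would combine the pieces. Using $\E[\alpha(G_t)]\ge 1$ we get $T\le \sum_t \E[\alpha(G_t)]$, so $T+\sum_t\gamma_t \le 2T + \sum_t \E[\alpha(G_t)] \le 3\sum_t\E[\alpha(G_t)]$. Plugging into the bound of Theorem~\ref{thm:mabn_ub} gives $R_{\MABN} = O\big(\sqrt{\ln K\,(T+\sum_t\gamma_t)}\big) = O\big(\sqrt{\ln K\sum_t\E[\alpha(G_t)]}\big) = O(R_{\EXPG})$, where the last step uses the $\EXPG$ guarantee together with Jensen's inequality to move the expectation outside the square root. The two places demanding care are the correct evaluation of the independence number (the directed edges, self-loops, and the $\max(\cdot,1)$ corner case when the neighbors collectively cover all arms) and the per-arm inequality above, which is exactly where the special coverage structure of the social model is exploited.
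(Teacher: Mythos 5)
Your reconstruction of the feedback graph and its independence number matches the paper (up to an immaterial $\pm 1$), and your per-arm inequality $\frac{p_j}{p_j+Q_j}\le \prod_\ell(1-q_j^\ell)+p_j$ is correct and clean: it yields $\gamma_t \le \E\left[\alpha(G_t)\mid \mathcal{F}_{t-1}\right]+1$, which is a genuinely different reduction from the paper's Lemma~\ref{lem:bound} (the paper instead lower-bounds $\alpha_t$ \emph{per realization} by $\sum_j p_j(t)/(p_j(t)+\sum_\ell 1\{j=a_t^\ell\})$). The gap is in your final step. The quantity you must dominate, $R_{\EXPG}$, is governed for these (random, neighbor-generated) graphs by $\E\left[\sqrt{\ln K\sum_t \alpha(G_t)}\right]$ --- expectation \emph{outside} the square root --- whereas your chain terminates at $\sqrt{\ln K\sum_t\E[\alpha(G_t)]}$ --- expectation \emph{inside}. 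Jensen's inequality moves the expectation outward only in the direction $\E[\sqrt{X}]\le\sqrt{\E[X]}$; that is, it certifies your quantity as the \emph{larger} of the two, which is the opposite of what you need. Concluding would require a reverse Jensen inequality, and that is false here in general: already for $T=1$, if the $b\ge K$ neighbors all play arm $1$ with probability $1/\sqrt{K}$ and otherwise spread over all $K$ arms, then $\sqrt{\E[\alpha_1]}=\Theta(K^{1/4})$ while $\E[\sqrt{\alpha_1}]=\Theta(1)$, so the two sides of your last equality differ by a polynomial factor in $K$. (A related imprecision feeds into this: Theorem~\ref{thm:mabn_ub} itself has the expectation outside the root and $\gamma_t$ is history-dependent, so your intermediate inequalities are really statements about conditional expectations, not deterministic quantities.)

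The paper's proof is structured precisely to avoid this directionality trap, and it shows how to repair your argument. Lemma~\ref{lem:bound} keeps realized actions throughout, giving the pointwise bound $\alpha_t \ge \sum_j \frac{p_j(t)}{p_j(t)+\sum_\ell \hat q_j^\ell(t)}$ with $\hat q_j^\ell(t)=1\{j=a_t^\ell\}$, and Lemma~\ref{lem:alpha_gamma} then applies Jensen to the \emph{convex} map
\begin{equation*}
\phi(\hat q)=\sqrt{\sum_{t=1}^{T}\sum_{j=1}^{K}\frac{p_j(t)}{p_j(t)+\sum_{\ell=1}^{b}\hat q_j^\ell(t)}},
\end{equation*}
i.e., with the square root built into the function before taking expectations. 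Convexity gives $\phi(q)\le \E[\phi(\hat q)]$, hence $\sqrt{\sum_t \gamma_t}\le \E\left[\sqrt{\sum_t\alpha_t}\right]$: the expectation lands outside the root on the correct (larger) side. In contrast, once you have averaged $\alpha_t$ round by round, you cannot get back underneath the square root without an additional martingale concentration argument for $\sum_t\alpha_t$ around $\sum_t\E[\alpha_t\mid\mathcal{F}_{t-1}]$, which you do not supply and which in any case only works in restricted regimes (roughly $T$ large relative to $K$) and introduces extra additive terms. So the middle of your proof is salvageable and even elegant, but as written the proposition does not follow.
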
 
Moreover, as we will see in Section~\ref{sec:empirical}, the regret of $\MABN$ is often drastically better empirically.  
This because $\EXPG$, and other similar algorithms, were developed from different settings in which it was not possible to make use of the probability distributions afforded to us by assumption (3). Indeed, without this assumption, one can get a stronger lower bound than the one presented above.
\begin{prop}
\label{thm:arm_lb}
Let $n_t$ be the size of the set of arms selected (arbitrarily) by all of the individual's neighbors at time $t$. Then, the regret $R_{\mathcal A}$ for any algorithm $\mathcal A$ in our setting without assumption (3) is 
$
R_{\mathcal A} = \Omega \left(\sqrt{T + \sum_{t=1}^T (K-n_t)}\right).
$
\end{prop}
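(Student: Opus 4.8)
The plan is to prove this information-theoretically by a change-of-measure argument, following the classical multi-armed bandit lower bound of \cite{ACF2002} (the same template behind Theorem~\ref{thm:gen_lb}) but charging the learner for the arms the neighbors fail to reveal. Since the adversarial setting contains the stochastic one, it suffices to exhibit one hard stochastic family. I would take $K$ Bernoulli instances indexed by a planted optimal arm $i^\star$: in instance $i^\star$, arm $i^\star$ has mean $\tfrac12+\eps$ and all other arms have mean $\tfrac12$, with $\eps$ to be tuned. The adversary also drives the neighbors, and here is where dropping assumption~(3) is used: because the learner never sees the selection probabilities $q^\ell_j(t)$, it cannot form the debiasing estimator of Equation~\eqref{eq:MABNestimator}, so the adversary is free to let the neighbors follow a fixed, instance-\emph{oblivious} randomized strategy (so that merely watching their actions leaks nothing about $i^\star$) whose realized reveals the learner cannot re-weight. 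The strategy is chosen to concentrate each round's $n_t$ revealed arms onto a small, frequently-reused pool, leaving a reserved pool of $\approx K-n_t$ arms per round whose rewards the learner can only learn by pulling them itself; the planted arm is drawn from this reserved region.

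The core is then the standard two-step bound. Writing $T_i$ for the number of self-pulls of arm $i$ and $O_i$ for the number of rounds in which arm $i$'s reward is observed at all (self-played or neighbor-revealed), a routine computation gives $\mathrm{KL}(\P_0 \,\|\, \P_{i}) \le c\,\eps^2\,\E_0[O_i]$, where $\P_0$ is the law of the observation stream under the all-$\tfrac12$ instance and $c$ absorbs $\mathrm{kl}(\tfrac12\|\tfrac12+\eps)=\Theta(\eps^2)$. The crucial accounting is that each round contributes one self-observation plus $n_t$ revealed arms, so $\sum_i \E_0[O_i] \le T + \sum_t n_t$, while for arms in the reserved pool the $\E_0[O_i]$ are forced to be small. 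Applying Pinsker to pass from $\E_{i}[T_i]$ to $\E_0[T_i]$ at cost $O(T\eps\sqrt{\E_0[O_i]})$, then Cauchy--Schwarz across the planted arms together with $\sum_i \E_0[T_i] = T$, shows no algorithm can make the average of $\E_{i}[T_i]$ close to $T$ unless $\eps$ is small. Since $\mathrm{Regret}_i \ge \eps(T - \E_{i}[T_i])$, averaging over $i$ (so the worst case dominates the average) and optimizing $\eps \asymp \sqrt{K/(T+\sum_t n_t)}$ extracts a regret of order $\sqrt{\,\sum_t(K-n_t)\,}$, the reserved-pool deficit $K-n_t$ summed over rounds being exactly the self-exploration burden the construction imposes.

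The additive $T$ inside the square root is the information-theoretic floor that survives even under full information, i.e. when all arms are revealed every round; I would obtain it separately from the standard $\Omega(\sqrt T)$ prediction-with-expert-advice bound and splice the two regimes together using $\sqrt a + \sqrt b \ge \sqrt{a+b}$. The main obstacle is the bookkeeping that turns the aggregate budget $T+\sum_t n_t$ into the per-round deficit $\sum_t(K-n_t)=KT-\sum_t n_t$: the Cauchy--Schwarz step must be applied over an averaging set of planted arms seeing the \emph{small} observation counts of the reserved arms rather than the large counts of the frequently-revealed ones, and this is precisely where the adversary's freedom to choose \emph{which} arms the neighbors reveal—available only because assumption~(3) is dropped—is doing the work. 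Handling this concentration under the hard constraint that $|S(t)|=n_t$ at every step, getting the $\eps$-balance and constants uniform across time-varying $n_t$, and confirming that restoring assumption~(3) would collapse the bound back to the stronger achievable guarantee of Theorem~\ref{thm:mabn_ub}, is the delicate (if otherwise routine) part of the argument.
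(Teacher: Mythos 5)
Your high-level plan---have the neighbors concentrate their reveals on a reused pool, plant the good arm among the never-revealed arms, and run the planted-arm KL/Pinsker argument there---is the right core idea, and it is essentially the argument underlying the result the paper actually invokes: the paper's proof is a one-line reduction, observing that without assumption (3) the learner's feedback is exactly that of the arm-network of Section~\ref{sec:better}, whose feedback graph at time $t$ has independence number $K-n_t+1$, and then citing the known lower bound (Theorem~5 of \cite{ACGMMS2014}), since $\sum_t (K-n_t+1) = T + \sum_t (K-n_t)$. However, your execution has a genuine internal inconsistency. The tuning $\eps \asymp \sqrt{K/(T+\sum_t n_t)}$ belongs to the ``global'' accounting in which the planted arm ranges over all $K$ arms and every observation (self-pull or reveal) is charged to the KL; that argument only yields $\Omega\bigl(T\sqrt{K/(T+\sum_t n_t)}\bigr)$, which is \emph{not} the claimed bound---take $n_t \equiv K/2$, where it gives $\Theta(\sqrt{T})$ while the proposition asserts $\Omega(\sqrt{KT})$, a gap of order $\sqrt K$. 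To reach $\sqrt{\sum_t (K-n_t)}$ you must average only over the reserved pool of size $m=K-n$ and charge only self-pulls (so $\sum_{i\in \mathrm{pool}} \E_0[O_i] \le T$), with the tuning $\eps \asymp \sqrt{m/T}$; you gesture at exactly this in your last paragraph, but the $\eps$ you optimize is the one for the other, weaker argument, so as written the conclusion does not follow from the stated steps.

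Second, and more fundamentally, your opening commitment to ``one hard stochastic family'' cannot deliver the proposition for time-varying $n_t$. Suppose $n_t = K$ on half the rounds and $n_t=0$ on the rest: the constraint $|S(t)|=n_t$ forces \emph{every} arm, including any planted arm, to be revealed on $T/2$ rounds, so with i.i.d.\ Bernoulli rewards the learner gets $T/2$ free samples of each arm, identifies the planted arm after roughly $\log (K)/\eps^2$ of them, and suffers only $\tilde O(\sqrt T)$ regret---yet $\sum_t (K-n_t)=KT/2$, so the proposition claims $\Omega(\sqrt{KT})$. No concentration of reveals avoids this; the repair must use a time-inhomogeneous, genuinely adversarial construction in which the $\eps$ signal is carried only on rounds whose deficit $K-n_t$ exceeds a threshold $m$ (so the planted arm can stay hidden there) while all rewards are uninformative on the remaining rounds, followed by an optimization over $m$ using the layer-cake identity $\sum_t (K-n_t) = \sum_{m\ge 1} |\{t : K-n_t \ge m\}|$ (which, done this way, loses a $\log K$ factor). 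That is a structural change to your construction---precisely the point where the adversarial nature of the setting is needed---not the routine bookkeeping your sketch defers it to.
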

The proof follows from Theorem~5 of \cite{ACGMMS2014}. % 
Our $\EXPN$ algorithm is often able to beat this bound by leveraging (3). {For example, this proposition implies that if we have a complete network on $b$ vertices where $\log K \ll b \ll K$, then $R_\EXPG = \Omega\left(\sqrt{(K-b)T}\right)$ while in our case $R_\EXPN = O\left(\sqrt{\frac{K}{b}T}\right)$ (see Corollary \ref{cor:complete}).}

\subsection{A Centralized Solution for the Network}
\label{sec:centralized}

Our model and algorithm are formulated for an individual because this allows us to draw the most general conclusions -- bounding the individual's regret as a \emph{function} of the neighbors' behavior. 
However, a surprising feature is that 
it can also be made into a centralized solution.  
In the general case, this requires assuming there is an external coordinator that can select a maximum-degree individual to lead and direct the rest on how to act as follows: 
Let $v^\star$ be the maximum degree node selected. The coordinator directs $v^\star$ to use the $\EXPN$ algorithm. The remaining nodes $u$ are each assigned a neighbor $v_u$ that lies on the shortest path between them and $v^\star$, and are directed to copy the probability distribution that $v_u$ used in the previous time step. %
\begin{thm}
\label{thm:bmax}
Using  the above centralized algorithm, the regret of \emph{all} individuals is at most
	\begin{equation}
	\label{eq:bmax}
	R = O\left(\Delta + \sqrt{\left(1+\frac{K}{1+b_{max}}\right)T \ln K}\right),
	\end{equation}
	where $b_{max}$ is the degree of $v^\star$ and $\Delta$ is the diameter of the network. 
\end{thm}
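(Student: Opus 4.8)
The plan is to split the regret bound into two parts: first bound the regret of the designated leader $v^\star$, and then bound the regret of every other node by relating its play to that of $v^\star$ and paying for the propagation delay along the shortest path.

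For the leader, the key observation is that every neighbor of $v^\star$ lies at distance one, is therefore assigned $v_u = v^\star$, and hence replays at time $t$ exactly the distribution $v^\star$ used at time $t-1$. Thus $v^\star$ faces neighbors with $q^\ell_j(t) = p_j(t-1)$ for all $\ell \in [b_{max}]$, so the quantity governing Theorem~\ref{thm:mabn_ub} becomes $\gamma_t = \sum_{j=1}^K \frac{p_j(t)}{p_j(t) + b_{max}\,p_j(t-1)}$. I would then show that consecutive distributions of $\MABN$ are multiplicatively close, i.e. $p_j(t) \le C\,p_j(t-1)$ for a universal constant $C$: since $w_j(t) = w_j(t-1)e^{\delta_{t-1}\hat g_j(t-1)}$ and the normalizer $W_t$ is non-decreasing, the one-step ratio satisfies $p_j(t)/p_j(t-1) \le e^{\delta_{t-1}\hat g_j(t-1)}$. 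Bounding the exponent $\delta_{t-1}\hat g_j(t-1) = O(1)$ is the technical crux here, and is where one controls the magnitude of the update (e.g. via the exploration floor of the decoupled version of $\MABN$, which lower-bounds $p'_j$ and hence caps $\hat g_j$). Plugging $p_j(t)\le C p_j(t-1)$ into $\gamma_t$ gives $\gamma_t = O\!\big(\tfrac{K}{1+b_{max}}\big)$ for every $t$, so that $\sum_t \gamma_t = O\!\big(\tfrac{KT}{1+b_{max}}\big)$, and feeding this into Theorem~\ref{thm:mabn_ub} yields $R_{v^\star} = O\!\big(\sqrt{(1+\tfrac{K}{1+b_{max}})T\ln K}\big)$, the $\sqrt{\cdot}$ term of the claim.

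For a non-leader $u$ at distance $d$ from $v^\star$, an easy induction on $d$ shows that $u$ replays $v^\star$'s distribution delayed by exactly $d$ steps, $p^u(t) = p^{v^\star}(t-d)$, with $d \le \Delta$. I would then split the horizon: the first $d \le \Delta$ rounds, during which $u$ has not yet received a propagated distribution, contribute at most $\Delta$ to its regret (each per-round pseudo-regret $g_{j^\star}(t) - \langle p^u(t),g(t)\rangle$ is at most one); this warm-up is the source of the additive $\Delta$. For the remaining rounds, the regret of $u$ differs from that of $v^\star$ by the delay-mismatch term $\sum_t \langle p^{v^\star}(t) - p^{v^\star}(t-d), g(t)\rangle$, and the goal is to show this is also $O(\Delta)$.

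This last term is the main obstacle: a crude triangle-inequality bound by the path length $d\sum_s\|p(s)-p(s-1)\|_1$ is far too lossy. I would instead control it by a summation-by-parts (Abel) argument that pushes the telescoping interior contributions onto the $O(d)$ boundary terms — this is exact when the reward trajectory is (nearly) stationary, and in particular is trivial for the complete network, where $\Delta=1$ and the corollary is tight — while using the one-step stability of the exponential weights established above to absorb the residual interior fluctuation into the leader's own regret bound. Combining the leader bound, the warm-up cost, and the delay-mismatch bound then gives $R = O\!\big(\Delta + \sqrt{(1+\tfrac{K}{1+b_{max}})T\ln K}\big)$ uniformly over all nodes.
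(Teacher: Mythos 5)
Your skeleton is the same as the paper's: bound the leader $v^\star$ by invoking Theorem~\ref{thm:mabn_ub} with $q^\ell_j(t)=p_j(t-1)$ so that $\gamma_t\approx K/(1+b_{max})$, then charge every other node a delay cost. (The paper's entire proof is the two sentences after the theorem statement, so you have in fact reconstructed it and gone further.) Your leader analysis is the more careful of the two, and can even be streamlined: for any arm \emph{not} observed at time $t-1$ the weight is unchanged while the normalizer can only grow, so $p_j(t)\le p_j(t-1)$ and that arm's contribution to $\gamma_t$ is at most $1/(1+b_{max})$ with no stability constant needed; only the at most $1+b_{max}$ arms actually observed at time $t-1$ require your multiplicative-closeness argument. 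One wrinkle there: your cap on $\delta_{t-1}\hat g_j(t-1)$ appeals to the exploration floor of the decoupled ($\eta>0$) variant, but Theorem~\ref{thm:mabn_ub} is proved for the adaptive variant with $\eta=0$, where $\hat g_j\le 1/p'_j$ has no a priori bound; this mismatch is fixable but real.

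The genuine gap is the delay-mismatch term $\sum_t \langle p^{v^\star}(t)-p^{v^\star}(t-d),\,g(t)\rangle$, and it cannot be closed the way you propose. Summation by parts yields $O(d)$ only when $g(t)$ can be pulled out of the sum, i.e.\ when the rewards are (nearly) stationary --- which is exactly what an adversary need not provide. Against non-stationary rewards the term is genuinely of order $d\sum_t\delta_t$: with $K=2$ and rewards alternating in blocks of length $\approx 1/\delta_t$ (long enough for the exponential weights to swing by a constant), each block contributes $\approx d$ to the mismatch and there are $\approx T\delta$ blocks, giving $\Theta(d\,\delta T)$, i.e.\ $\Delta\sqrt{T\ln K}$ up to the $\gamma$-improvement --- multiplicative in $\Delta$, not additive. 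Moreover, no alternative argument can rescue the additive-$\Delta$ claim: a node at distance $d$ plays a distribution measurable with respect to rewards up to time $t-d-1$ only, so the standard delayed-information lower bound (block the horizon into segments of length $d+1$ with i.i.d.\ segment rewards) forces worst-case regret $\Omega\left(\sqrt{(d+1)T}\right)$ for that node; on a path with $1\ll\Delta\ll T$ this exceeds the claimed $O\left(\Delta+\sqrt{\left(1+\frac{K}{1+b_{max}}\right)T\ln K}\right)$. So the step you flagged as ``the main obstacle'' is not merely unproved in your write-up --- it is the step at which the statement itself fails for worst-case adversaries (the paper's sketch makes the same silent leap); the bound can only hold with the delay inside the square root, or under a stationarity/stochasticity assumption on the rewards.
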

The proof follows, with minor modifications, from the proof of Theorem~\ref{thm:mabn_ub}; the main difference regards accounting for the delay (of at most $\Delta$ time steps) for the farthest node from $v^\star$ to update their probability distribution. 
By replacing $\gamma_t$ with $\frac{K}{1+b_{max}}$, this gives us the resulting regret bound.  
In the simple case of a complete network on $N$ nodes, no coordinator is required, and we obtain the following corollary. 
\begin{cor}
\label{cor:complete}
On a complete network with $b$ nodes, if all nodes use the $\EXPN$ algorithm, then they  attain 
$R = {O}\left(\sqrt{\left(1+\frac{K}{b}\right)T\ln K}\right)$, 
which is optimal (up to log factors) for any centralized solution.
\end{cor}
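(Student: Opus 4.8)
The plan is to obtain the upper bound as a direct specialization of Theorem~\ref{thm:mabn_ub} using the symmetry of the complete network, and then to prove a matching optimality bound through an information-theoretic argument that applies to \emph{any} centralized scheme. First I would show that when all $b$ nodes run $\EXPN$ from the common initialization $w_j(0)=1$, they remain perfectly synchronized, by induction on $t$. If at time $t$ every node holds the same weight vector, then every node has the same distribution $p_j(t)$; since the graph is complete, each node observes the \emph{same} set of played arms and the same shared reward vector $\mathbf g(t)$, so each computes the identical $p'_j(t)=1-(1-p_j(t))^{b}$ and hence the identical estimator $\hat g_j(t)$ from \eqref{eq:MABNestimator}, whence all weight vectors still agree at time $t+1$. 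Synchronization gives $q^\ell_j(t)=p_j(t)$ for every neighbor $\ell$ (a node in a complete network on $b$ vertices has $b-1$ neighbors), so
\[
\gamma_t=\sum_{j=1}^{K}\frac{p_j(t)}{p_j(t)+(b-1)p_j(t)}=\frac{K}{b}
\]
for all $t$. Substituting $\sum_{t=1}^{T}\gamma_t=KT/b$ into Theorem~\ref{thm:mabn_ub} yields $R\le 2\sqrt{(T+KT/b)\ln K}=O\!\big(\sqrt{(1+K/b)T\ln K}\big)$. The claim that no coordinator is needed is then immediate: the leader-and-copy construction of Theorem~\ref{thm:bmax} exists only to synchronize distributions across the network, but on the complete graph symmetry provides this for free, and since the diameter is $1$ the delay term $\Delta$ in \eqref{eq:bmax} is a constant absorbed into the $O(\cdot)$.

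For optimality I would prove a matching lower bound $R=\Omega\!\big(\sqrt{(1+K/b)T}\big)$ valid for \emph{every} centralized algorithm, including those exploiting assumption~(3); note that the algorithm-dependent bounds (Theorems~\ref{thm:gen_lb} and~\ref{thm:arm_lb}) do not suffice, since the latter is provably beaten by $\EXPN$. The key point is that access to neighbors' distributions cannot create new \emph{samples}: because all nodes face the same reward vector $\mathbf g(t)$, at each round the system learns at most $\min(b,K)$ fresh realizations, one per distinct arm played. I would take a stochastic instance (one arm with gap $\Delta$, the rest tied), which lower-bounds the adversarial regret, and view the aggregate of the $b$ nodes as a single learner that observes at most $b$ distinct arms per round over $T$ rounds. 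This is exactly a multiple-play bandit, whose total regret against the best fixed arm is $\Omega(\sqrt{bKT})$ for $b\le K$ by the standard KL/Pinsker argument that balances the $\Omega(K/\Delta^{2})$ samples needed to identify the best arm against the $\Delta$ cost of each suboptimal play, optimized at $\Delta\sim\sqrt{K/(bT)}$. Dividing the total regret by the $b$ nodes gives a per-node bound $\Omega(\sqrt{KT/b})$; combined with the full-information experts lower bound $\Omega(\sqrt{T})$, which holds even when every arm is observed each round, this matches the upper bound up to the $\sqrt{\ln K}$ factor in all regimes, the $\sqrt{KT/b}$ term dominating when $b\le K$ and the $\sqrt{T}$ term when $b>K$.

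The routine part is the upper bound, where the induction and the evaluation of $\gamma_t$ are short. The main obstacle is the lower bound: I must argue carefully that no centralized strategy, even one with full access to all neighbors' distributions, can extract more than $\min(b,K)$ informative samples per round from the shared reward vector, so that the sample-complexity barrier behind the $\Omega(\sqrt{bKT})$ multiple-play bound genuinely applies. Making the batched nature of the $b$ plays precise — the $b$ choices at round $t$ are committed before any of their rewards are observed, which can only increase regret relative to a fully sequential learner with the same number of samples — and then converting the total-regret bound into the stated per-node bound are the steps requiring the most care.
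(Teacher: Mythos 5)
Your upper bound follows exactly the paper's route: the paper obtains Corollary~\ref{cor:complete} by specializing Theorem~\ref{thm:mabn_ub} to the complete graph, where each node has $b-1$ neighbors and, by symmetry, $q_j^\ell(t)=p_j(t)$, so that $\gamma_t=K/b$ and $R\le 2\sqrt{(1+K/b)T\ln K}$; your induction making the synchronization explicit (identical weights $\Rightarrow$ identical distributions $\Rightarrow$, on the complete graph, identical observed arm sets and hence identical estimators) is a detail the paper leaves implicit, and it is correct. Where you genuinely diverge is the optimality half: the paper does not prove it but simply cites \cite{AKTT2015} for the fact that any centralized solution has average regret $\Omega\left(\sqrt{\left(1+\frac{K}{b}\right)T}\right)$. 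You instead sketch a self-contained derivation: pass to a stochastic instance, view the $b$ nodes as one learner that gains at most $\min(b,K)$ fresh samples per round (correct, since all nodes face the same reward vector), invoke the multiple-play lower bound $\Omega(\sqrt{bKT})$ on total regret via the standard KL/Pinsker sample-complexity tradeoff, divide by $b$ to get the average per-node bound $\Omega(\sqrt{KT/b})$, and combine with the full-information bound $\Omega(\sqrt{T})$. This is essentially a re-derivation of the result the paper cites; it buys self-containedness, and your observation that neither Theorem~\ref{thm:gen_lb} (algorithm-specific) nor Proposition~\ref{thm:arm_lb} (only for algorithms lacking assumption~(3)) can deliver optimality against \emph{all} centralized schemes is exactly right and explains why the citation (or your argument) is needed. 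The cost is that you must carefully justify the batched, with-replacement nature of the $b$ simultaneous plays in the reduction to multiple-play bandits --- the step you yourself identify as the delicate one --- whereas the paper's citation absorbs that work.
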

This again follows from the proof of Theorem~\ref{thm:mabn_ub} using the fact that the number of neighbors is $b-1$ on a complete network, and that {a centralized solution has average regret $\Omega(\sqrt{\left(1+\frac{K}{b}\right)T})$ as shown in \cite{AKTT2015}}.

\begin{figure}
%\centering
	\vspace{-.5in}
	\hspace{-.6in}
	\subfigure[Regret in a complete network on 5 nodes. We vary $T$ for  $K = 50$.\label{fig:regret_complete_T}]{\includegraphics[width=3.4in]{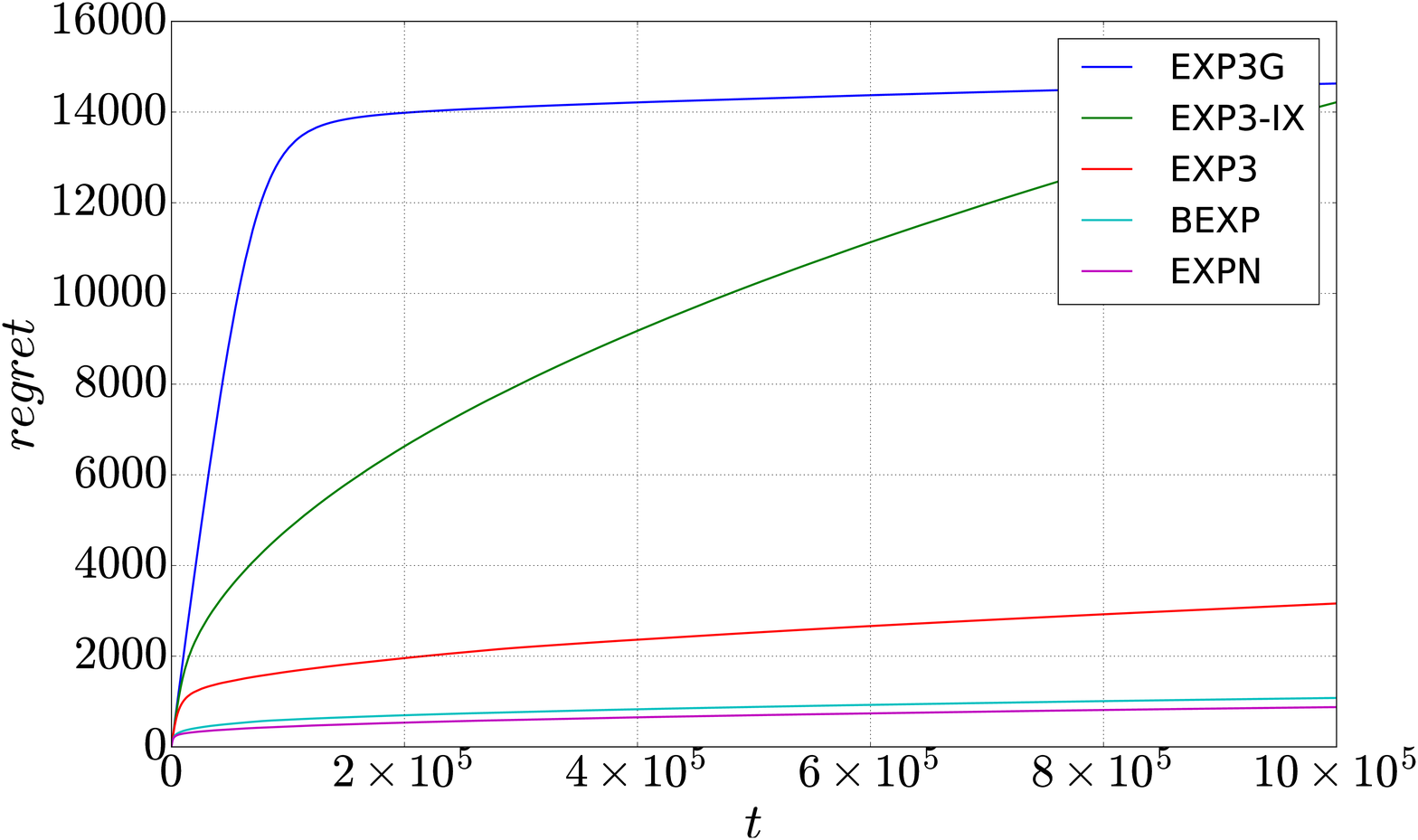}} \hfill
	\hspace{0.1in}
	\subfigure[Regret in a random 5-regular network with 50 nodes. We vary $T$ for $K = 50$.\label{fig:regret_breg}]{\includegraphics[width=3.4in]{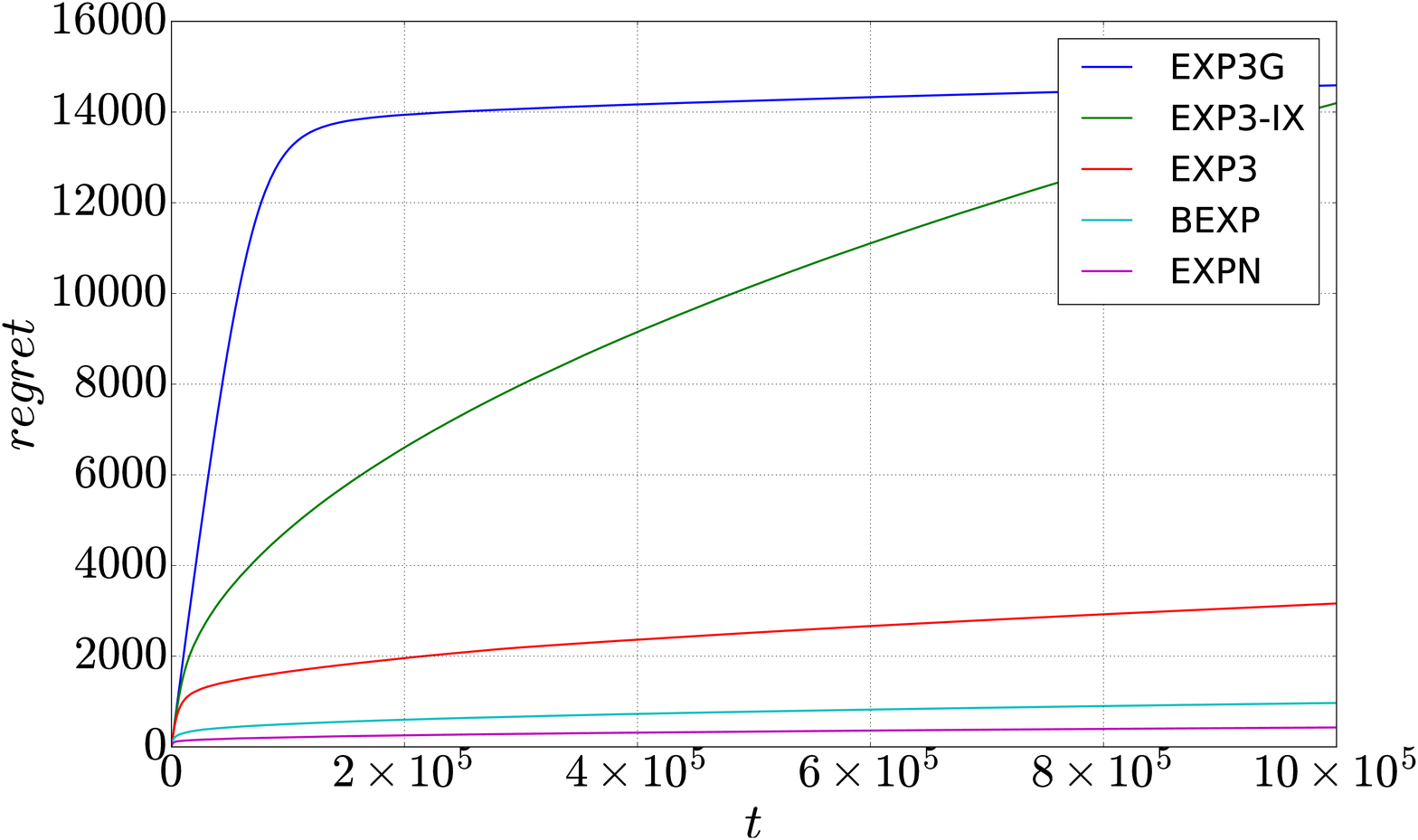}} 
	%
	
%	\vspace{-.05in}
	\hspace{-.6in}
	\subfigure[The average regret in various network topologies. The size of the networks is 10.\label{fig:topology}]{\includegraphics[width=3.4in]{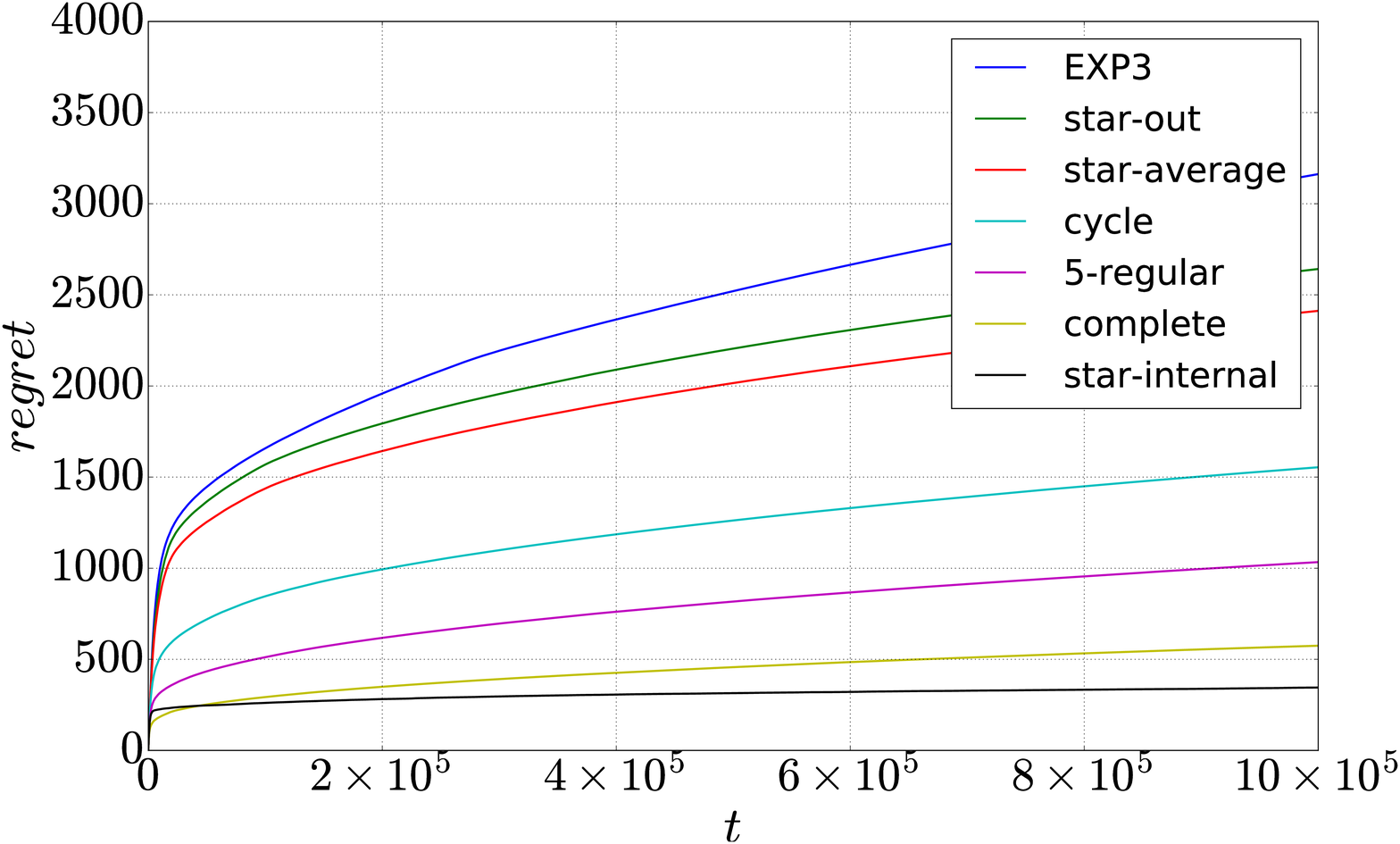}} \hfill
	\hspace{0.in}
	\hspace{0.1in}
	\subfigure[Regret ratio. We vary time $T$ for $K = 50$ and $N = 5$.\label{fig:ratio_complete_T}]{\includegraphics[width=3.4in]{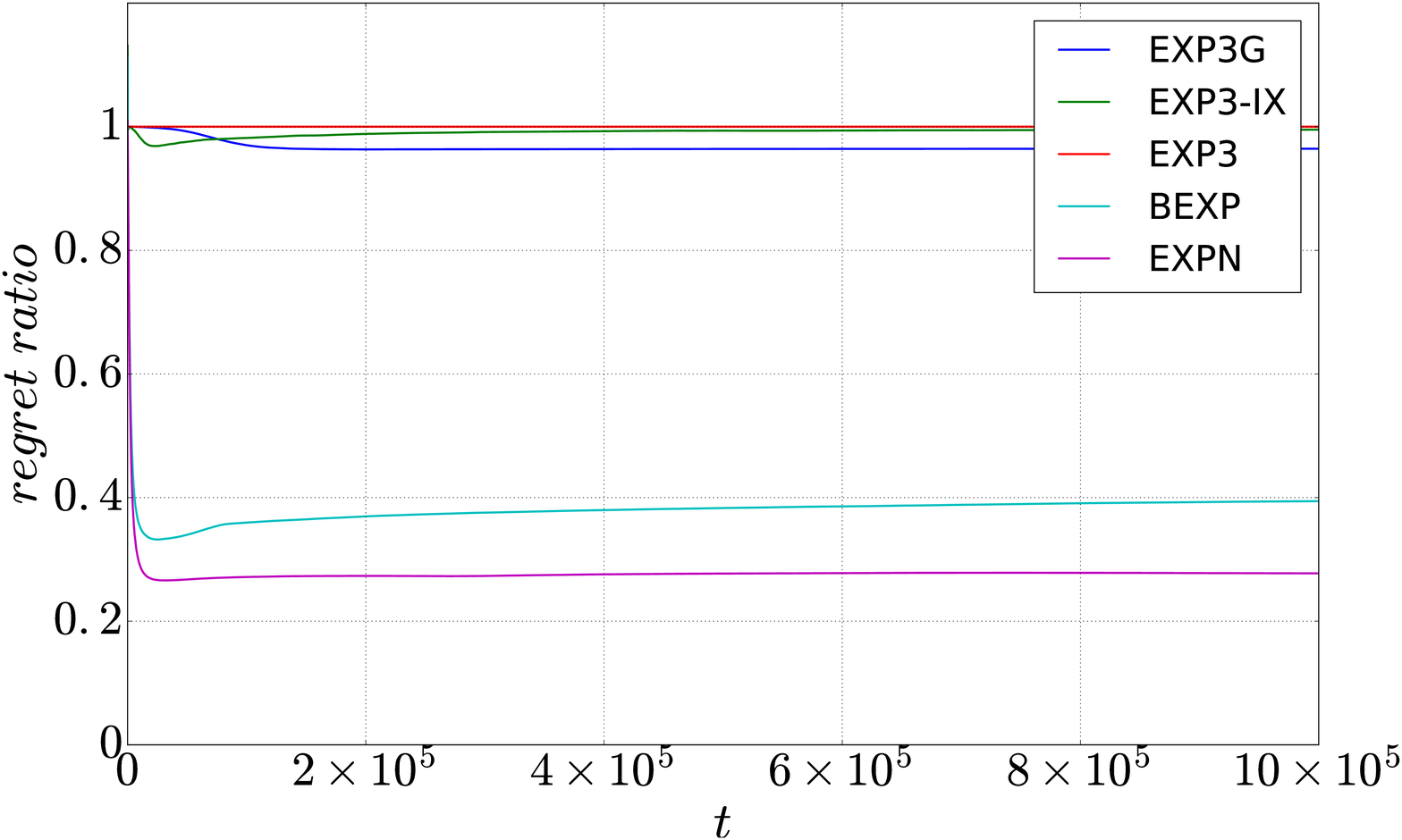}} 

%	\vspace{-.05in}
	\hspace{-.6in}
	\subfigure[Regret ratio. We vary the number of arms $K$ for $T = 5 \cdot 10^4$ and $N = 5$.\label{fig:ratio_complete_K}]{\includegraphics[width=3.4in]{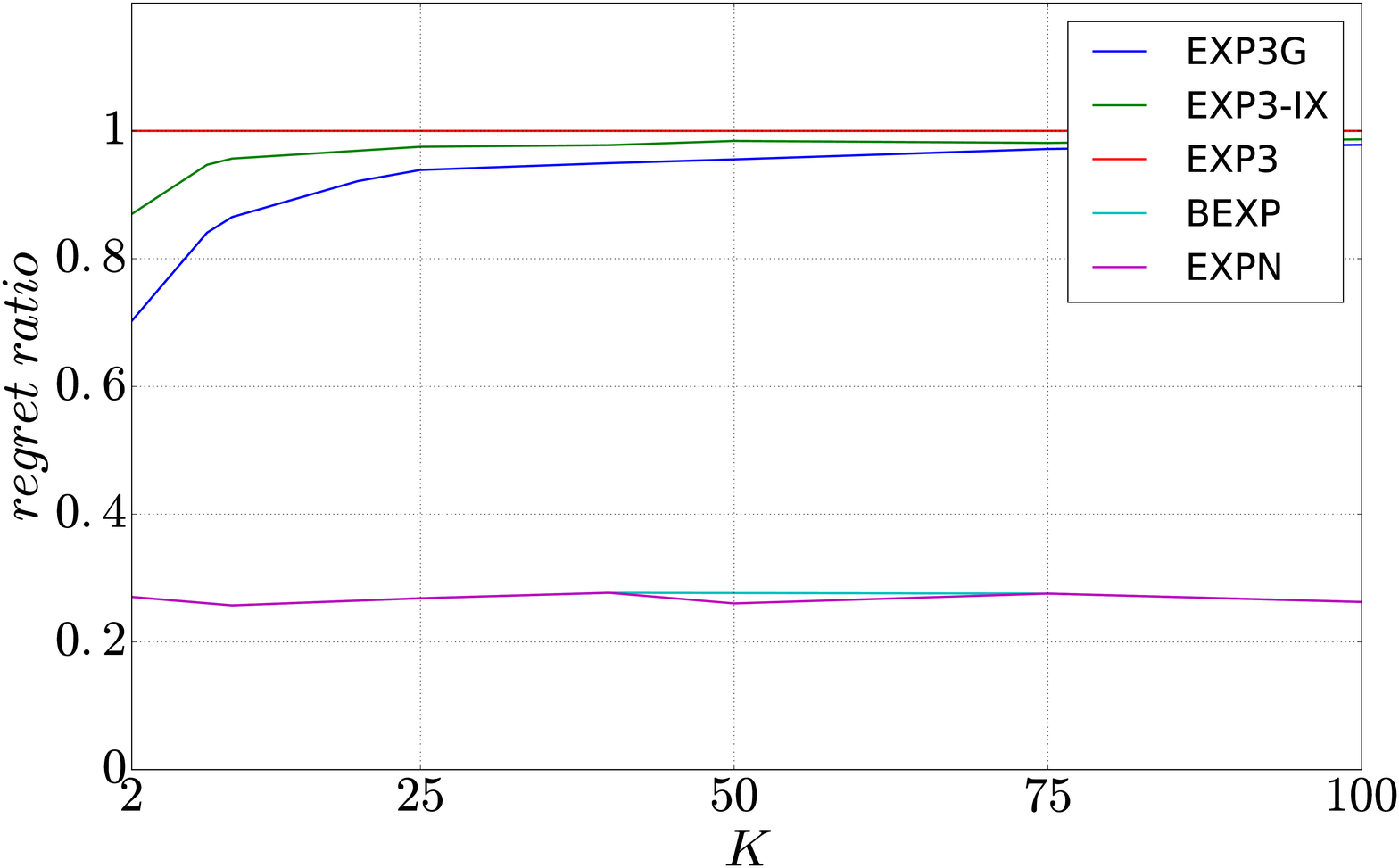}} \hfill
	\hspace{0.1in}
	\subfigure[Regret ratio. We vary the number of nodes $N$ for $ K = 50$ and $T = 10^6$. 
	 \label{fig:ratio_complete_b}]{\includegraphics[width=3.4in]{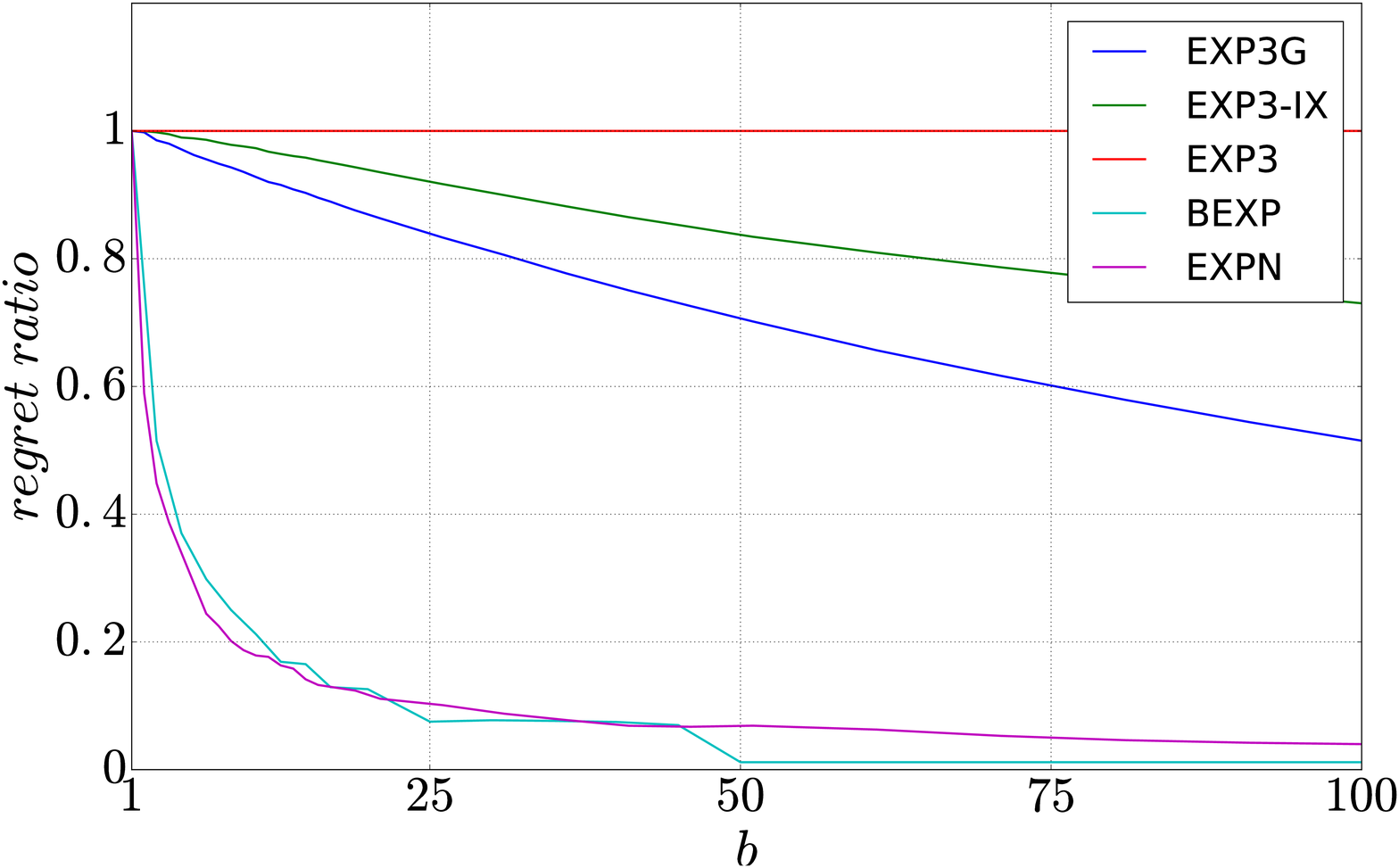}} \hfill
\vspace{-.05in}
\captionsetup{width=\textwidth}
\caption{{Performance of our algorithm ($\EXPN$) for the adversarial setting against benchmarks. Our algorithm significantly outperforms $\EXPThree$, indicating that the presence of neighbors indeed improves learning. It also significantly outperforms arm-network algorithms ($\EXPG$ and $\EXPIX$) that could be applied in our setting. Surprisingly, it's performance is as good as $\BEXP$, which would require a single node to dictate the choices of her neighbors; hence, our distributed algorithm is performing as well as a centralized one. Figures (a)-(c) depict the regret. Figures (d)-(f) depict the regret ratio, i.e., the ratio between an algorithm's regret with $b$ neighbors over its regret with 0 neighbors (where $b$ depends on the network structure addressed in the corresponding subfigure).}}
\label{fig:main}
\end{figure}

\section{Empirical Results}
\label{sec:empirical}

\subsection{Adversarial Setting}
\label{sec:empirical_adv}

\paragraph{\textbf{Benchmarks.}}
We compare our algorithm against the bandit algorithms developed for various settings with side-information, namely $\EXPG$ (\cite{ACDK2015}), $\EXPIX$ (\cite{KNVM2014}) and $\BEXP$ (\cite{AKTT2015}). 
The first two are designed for the arm-network setting as described in {Section~\ref{sec:better}}, while the latter is designed for the free-exploration setting described in Section~\ref{sec:related_work}.
Recall that in free-exploration there are no neighbors; rather there is a budget $B$, and at each time step the individual can choose up to $B$ arms to select. 
In order to attain a fair comparison, we assume we have budget $B = b+1$ for $\BEXP$, where $b$ is the number of neighbors.

\paragraph{\textbf{Experimental Setup.}}
For the simulations we use the decoupled version of the $\MABN$ algorithm as presented in Section~\ref{sec:algorithm_adv}; the results for the adaptive algorithm version would be even better. 
We consider a bandit with Bernoulli rewards that has a single \emph{good} arm with mean $0.7$, while the remaining arms have mean $0.5$. This is similar to the worst case (minimax) bandit; the difficulty arises from the fact that it is hard, in an information theoretic sense, to distinguish the single good arm from the rest with few samples. Indeed the performance for our algorithm in comparison to our benchmarks is only improved for all other settings we attempted.

\paragraph{\textbf{Performance in Networks.}}
In addition to exploring the effect of the various algorithm on a single individual, we are able to consider various network topologies and consider the regret as a whole. Towards this, in the first set of simulations, all nodes in the specified networks use the same algorithm. 
We first compare the regret of $\EXPN$ against the benchmarks in a complete network on $5$ nodes (Figure~\ref{fig:regret_complete_T}); even on such a small network the difference in regret is dramatic.\footnote{Indeed, on larger networks the differences are only more pronounced -- we present the results on a small network in order to be able to visualize them adequately.} We significantly outperform arm-network algorithms ($\EXPG$ and $\EXPIX$), which empirically are initially worse than even $\EXPThree$. Asymptotically $\EXPG$ eventually outperforms $\EXPThree$, although $\EXPIX$ does not. Surprisingly, our algorithm performs as well as $\BEXP$, which would be equivalent to identifying a single node as the leader and having them dictate the action of all other nodes. Hence, our distributed algorithm is as good as a centralized one. For comparison, we also consider a random $5$-regular graph on $50$ nodes (Figure~\ref{fig:regret_breg}), and observe that the performance of all algorithms is roughly equivalent to the complete network on $5$ nodes; i.e., the primary determining factor in the regret appears to be the number of neighbors rather than the topology of the network.

We also consider the regret of $\EXPN$ on various network topologies on 10 vertices: the complete network, a random {5}-regular network, a cycle, and a star network {(Figure~\ref{fig:topology})}. When the number of neighbors differ in a topology, the regret of the nodes may differ; the star is the extreme example and we depict the minimum (for the center node), maximum (for one of the leaves) and average regret. As expected, the more neighbors one has, the better the regret is, with the internal node of star outperforming all. We also observe that there is an advantage to  having neighbors that are not well-connected; despite a node in the complete network having the same degree as the center node of the star, the former has more regret. Because the nodes that are not well-connected receive less information, they must  explore more -- this is advantageous for their neighbors.

\paragraph{\textbf{Performance of Individuals.}}
Moving back to analyzing the performance for an individual, consider a setting where her neighbors all use the $\EXPThree$ algorithm.  
We measure the \emph{regret ratio}, i.e., the ratio between the regret of bandit algorithm $\mathcal A$ when the node has $b$ neighbors divided by the regret of $\mathcal A$ when the node has 0 neighbors. This allows us to better visualize the improvement in regret that each algorithm obtains as a function of the number of neighbors. We vary time $T$ (Figure  \ref{fig:ratio_complete_T}), the number of arms $K$ (Figure \ref{fig:ratio_complete_K} and the number of neighbors $b$ (Figure \ref{fig:ratio_complete_b}). We observe that, in all cases, our $\EXPN$ algorithm always matches or outperforms the benchmarks.  %, 
The fact that the performance of our $\EXPN$ is comparable to that of $\BEXP$ is surprising, as we could not hope to do any better.

\subsection{Stochastic Setting}
\label{sec:empirical_stoch}

\begin{figure}
	%\begin{centering}
		\vspace{-.15in}
		\hspace{-.6in}
		\subfigure[Regret in a complete network on 5 agents. We vary $T$ for  $K = 5$.\label{fig:UCBregret_complete_T}]{\includegraphics[width= 3.4in ]{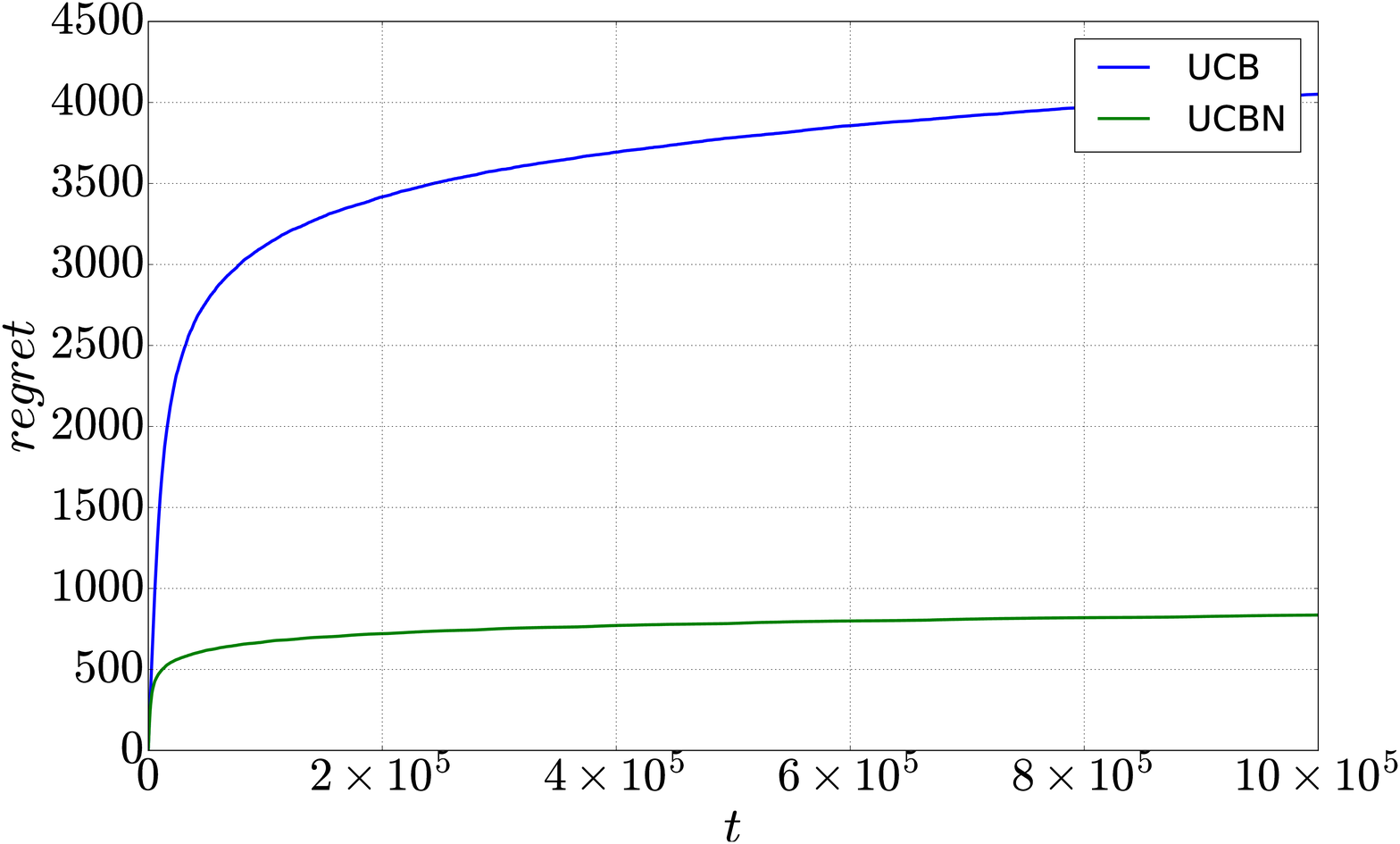}} \hfill
		\hspace{0.1in}
		\subfigure[Regret in a random 5-regular network with 10 agents. We vary $T$ for $K = 5$.\label{fig:UCBregret_breg}]{\includegraphics[width= 3.4in ]{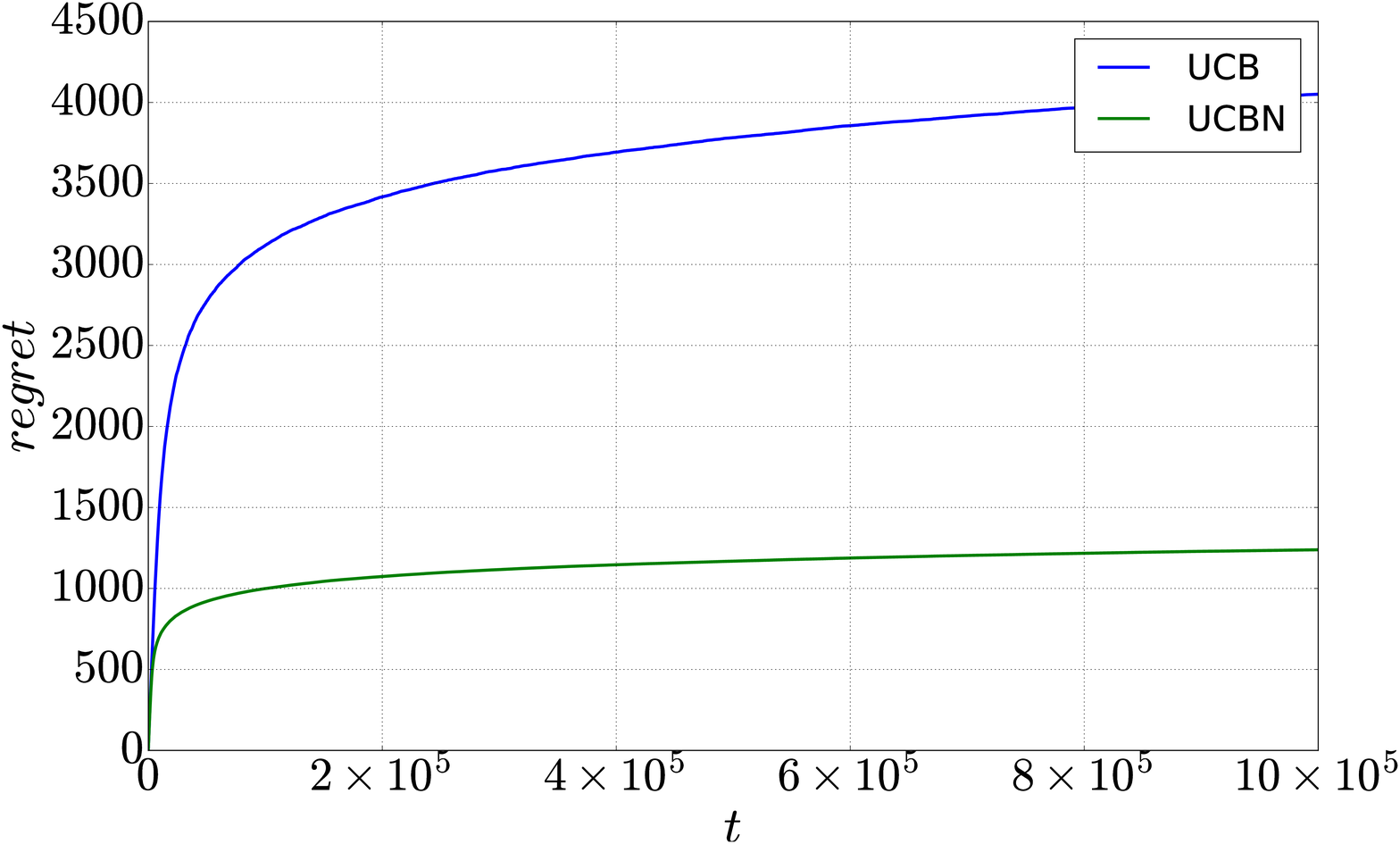}} \hfill

				\vspace{-.15in}
		\hspace{-.6in}
		\subfigure[The average regret in various network topologies. The size of the networks is 10, $K=50$ and all agents use $\UCBN$.\label{fig:UCBtopology}]{\includegraphics[width= 3.4in ]{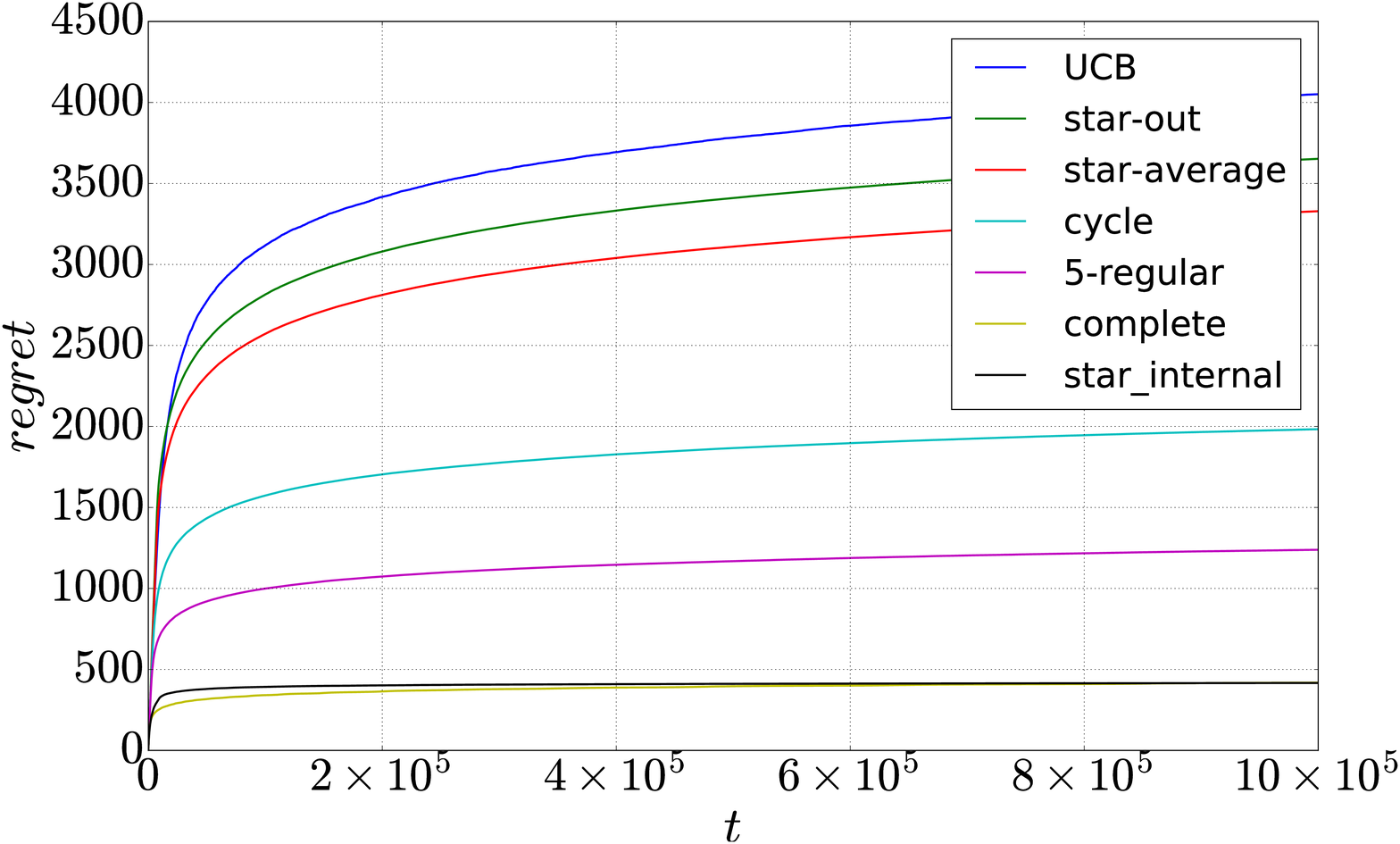}} \hfill		
		\hspace{0.1in}
	\subfigure[Regret in the complete network when we vary the number of agents for $ K = 5$ and $T = 25\cdot10^4$ .\label{fig:UCBcompare_GOB}]{\includegraphics[width= 3.4in ]{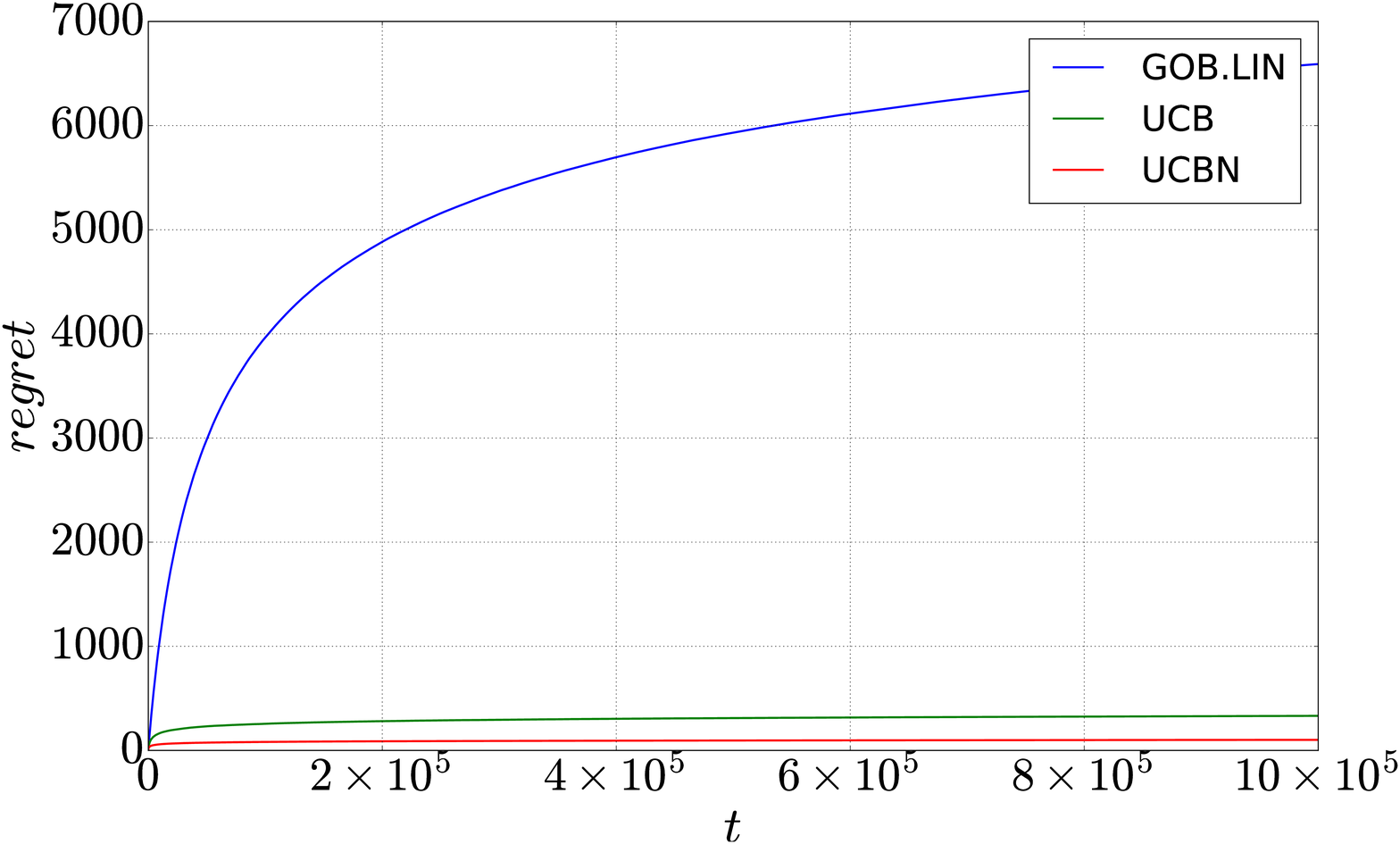}} \hfill

		\vspace{-.15in}
	\hspace{-.6in}
	\subfigure[Regret ratio in the complete network on 5 agents. We vary $K$ for $T = 25 \cdot 10^4$.\label{fig:UCBratio_complete_K}]{\includegraphics[width= 3.4in ]{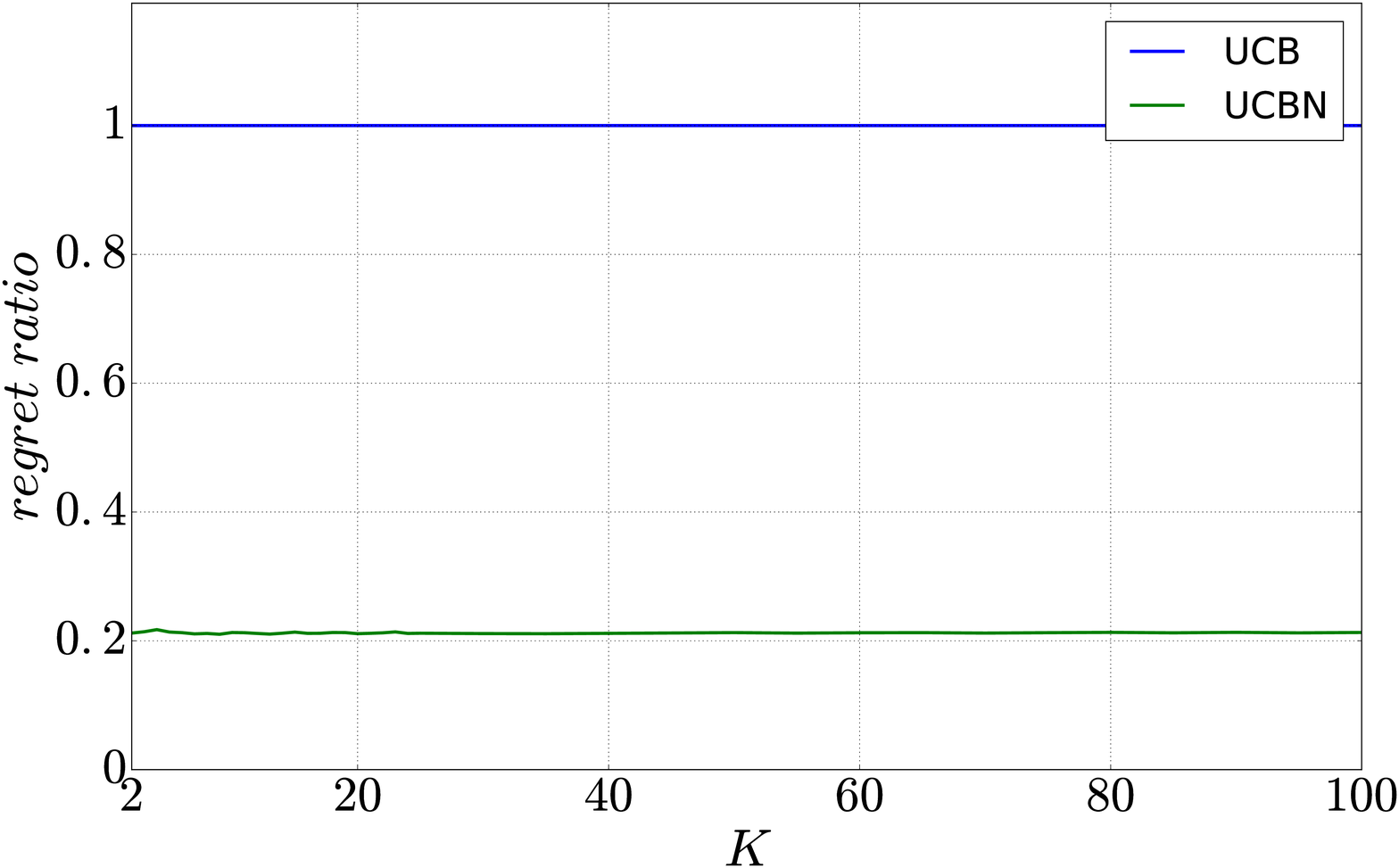}} \hfill
	\hspace{0.1in}
	\subfigure[Regret ratio in the complete network when we vary the number of agents for $ K = 50$ and $T = 25\cdot10^4$ .\label{fig:UCBratio_complete_b}]{\includegraphics[width= 3.4in ]{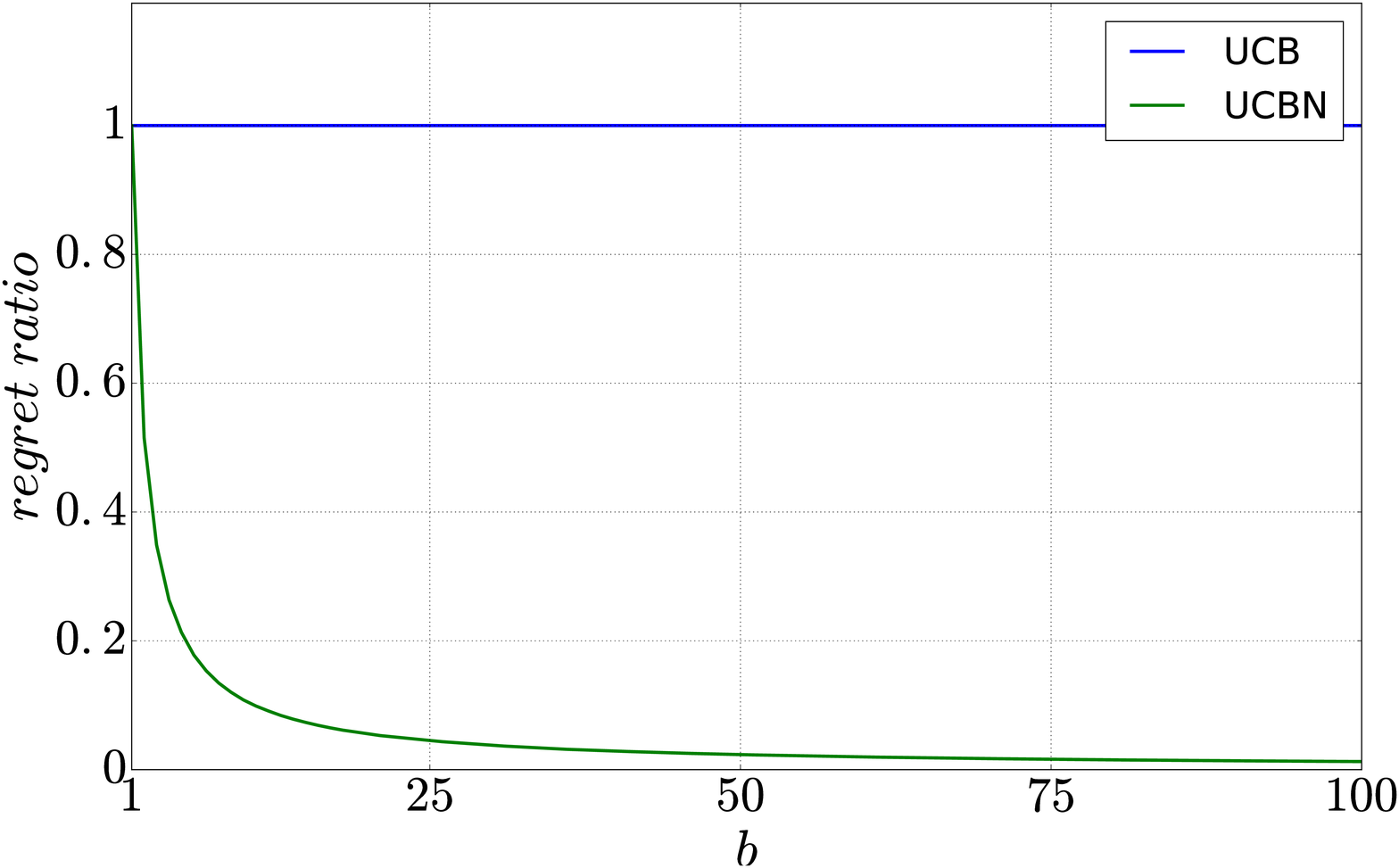}} \hfill

\captionsetup{width=\textwidth}
\caption{{Performance of our algorithm ($\UCBN$) for the stochastic setting against benchmarks. Our algorithm significantly outperforms $\UCB$, indicating that the presence of neighbors indeed improves learning. It also significantly outperforms $\GOBLIN$ that could be applied in our setting (Figure~\ref{fig:UCBcompare_GOB}). Figures (a)-(d) depict the regret. Figures (e)-(f) depict the regret ratio, i.e., the ratio between an algorithm's regret with $b$ neighbors over its regret with 0 neighbors (where $b$ depends on the network structure addressed in the corresponding subfigure).}}
	\label{fig:UCBmain}
\end{figure}

The setup for the empirical results in this section parallel that of Section \ref{sec:empirical_adv}. %, but for the stochastic setting. 
Recall that we make no assumption in our algorithm about our neighbors or how they play. We simply observe their actions and rewards. We let $\alpha = 2.5$ in the $\UCBN$ algorithm; the performance could be improved by optimizing $\alpha$. 
We first observe that more neighbors leads to less regret (Figure~\ref{fig:UCBratio_complete_b}). 

We then consider the regret of $\UCBN$ on various network topologies on 10 vertices: the complete network, a random {5}-regular network, and a star network {(Figure~\ref{fig:UCBtopology})}. Similar to the previous experiments, for networks in which all vertices have the same number of neighbors (all but the star network), all agents attain the same regret and hence we report the average regret. However, this is not the case if the number of neighbors differ; the star is the extreme example and we depict the minimum (for the center node), maximum (for one of the leaves) and average regret. As expected, the more neighbors one has, the better the regret is, with the complete network and center of star outperforming all. We also observe that there is advantage to having neighbors that are not well-connected; despite a node in the complete network having the same degree as the center node of the star, the former has slightly more regret. The reason is a neighbor with lower degree attains less information from neighbors and explore the suboptimal arms more which itself is in the favor of its neighbors (here the center of star).

We then consider the \emph{regret ratio}, i.e., the ratio between the regret of bandit algorithm $\mathcal A$ when the agent has $b$ neighbors divided by the regret of $\mathcal A$ when the agent has 0 neighbors. 
This allows us to better visualize the improvement in regret that each algorithm obtains as a function of the number of neighbors. We vary 
the number of arms $K$ (Figure \ref{fig:UCBratio_complete_K} and the number of neighbors $b$ (Figure \ref{fig:UCBratio_complete_b}). We observe that, in all cases, our algorithm $\UCBN$ attains the theoretical regret ratio, i.e, in the complete graph when all agents use $\UCBN$ the regret ration $\A_b \to \sfrac 1 b$ as $T \to \infty$.

Finally, we compare our algorithm to the one proposed in \cite{CGZ2013} ($\GOB$); see Figure \ref{fig:UCBcompare_GOB}. Although this algorithm is a centralized and developed for a different setting (namely, for {linear contextual bandits}), it can be adapted to our setting by assuming that individuals are cooperative instead of selfish. Despite the centralized nature of $\GOB$, our algorithm outperforms its regret.

\section{Conclusion \& Future Work}
In this paper, we consider a model for social learning that puts the problem in the the bandit framework
This model allows the problem to be analyzed both in the stochastic and adversarial bandit setting, and we provide algorithms for both cases. 
The regret of our algorithms interpolates between the regret of the traditional bandit setting (e.g., when an individual has no neighbors) and the regret of the full information setting (e.g., when the number of neighbors goes to infinity), and are optimal up to log factors.
We show, both theoretically and empirically, that we outperform state-of-the-art bandit algorithms that one could also apply to this setting, and illustrate how our approach could also lead to centralized algorithms of interest. 

With respect to improvements to the social learning model,  
relaxing assumption (3) would be ideal. As we have shown (see {Proposition~\ref{thm:arm_lb}}), removing it entirely results in strictly weaker regret bounds. Would an alternate relaxed assumption suffice? 
Lastly, it remains to formally study the effect of arbitrary network topologies on the regret, both for the individual (based on their position in the network) and on average.

\bibliography{bandits}
\bibliographystyle{alpha}

\appendix

%%%%%%%%%%%%%%%%%%%%%%%%%%%%%%%%%%

\section{Adversarial Bandits}
\label{sec:adversarial}

\subsection{Regret bound for Equation \ref{regretbound_simple}}
\label{sec:oldub}

We first state and prove a slightly simpler regret upper bound that is more intuitive, and then show how to enhance the proof to give Theorem~\ref{thm:mabn_ub} in Appendix~\ref{sec:gammaub}. 
In this regret bound the algorithm is as described in the main body of the paper, and we use a fixed (as opposed to adaptive) parameters $\delta$ and $\eta$.

\begin{thm}
	\label{thm:mabn2_ub}
	Given an agent with $b$ neighbors who are playing arbitrarily, the regret of an agent using the EXPN algorithm is 
	\[{R}\in O(\sqrt{\beta T \ln K})\]
	where
	\begin{equation*}
	\Theta = \Pi_{i=1}^b \left(1-\frac{\eps_i}{K}\right)  \mbox{ and } \beta = \frac{1}{1-\left(1-\frac{1}{K}\right)\Theta} + 1.
	\nonumber
	\end{equation*}
	for an optimal choice of $\eta$ and $\delta$ that depends on $\Theta$.
\end{thm}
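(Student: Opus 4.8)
The plan is to collapse the argument onto the standard multiplicative-weights template and then isolate the single quantity that carries all the dependence on the neighbors. First I would run the usual potential-function analysis for the weights $w_j(t+1)=w_j(t)e^{\delta\hat g_j(t)}$ using the new unbiased estimator $\hat g_j$ of Equation~\eqref{eq:MABNestimator}. Since $\E[\hat g_j(t)]=g_j(t)$ and $\E[p_j(t)\hat g_j^2(t)]=p_j(t)g_j^2(t)/p_j'(t)\le p_j(t)/p_j'(t)$ by the identities in Equation~\eqref{Expectation}, telescoping the log-potential together with the exploration floor $p_j(t)\ge \eta/K$ yields the decomposition
\[ R \le \frac{\ln K}{\delta} + \eta T + \delta T \max_{t}\sum_{j=1}^{K}\frac{p_j(t)}{p_j'(t)}, \]
which is exactly the bound highlighted after Theorem~\ref{thm:mabn_ub}. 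The terms $\ln K/\delta$ (initial potential) and $\eta T$ (cost of forced uniform exploration) are standard; all the novelty sits in the variance term $\sum_j p_j/p_j'$.

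The heart of the proof, and the step I expect to be the main obstacle, is bounding $\sum_j p_j(t)/p_j'(t)$ tightly and surfacing $\Theta$. The naive direction is unhelpful: the union bound gives $p_j'\le p_j+\sum_\ell q_j^\ell$, i.e.\ a \emph{lower} bound on $p_j/p_j'$, which points the wrong way. Instead I would work from the product form $p_j'(t)=1-(1-p_j(t))\prod_{\ell=1}^{b}(1-q_j^\ell(t))$ and invoke the interpretation assumption $q_j^\ell(t)\ge \eps_\ell/K$, so that $\prod_\ell(1-q_j^\ell(t))\le\prod_\ell(1-\eps_\ell/K)=\Theta$ and hence $p_j'(t)\ge 1-(1-p_j(t))\Theta$. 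This gives $p_j/p_j'\le f(p_j)$ with $f(p)=p/(1-(1-p)\Theta)$. A short computation shows $f''(p)=-2\Theta(1-\Theta)/(1-\Theta+\Theta p)^3\le 0$ on $[0,1]$, so $f$ is concave and $\sum_j f(p_j)$, subject to $\sum_j p_j=1$, is maximized at the uniform distribution; by Jensen, $\sum_j f(p_j)\le K f(1/K)=\frac{1}{1-(1-1/K)\Theta}=\beta-1\le\beta$. Lemmas~\ref{lem:helper} and \ref{lem:UB} are the formal statements that carry out this reduction and the concavity estimate.

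Combining the two steps gives $R\le\frac{\ln K}{\delta}+\eta T+\delta T\beta$, and it only remains to optimize the free parameters as a function of $\Theta$ (equivalently $\beta$). Under the stated assumption the neighbors already guarantee $p_j'\ge 1-\Theta>0$ for every arm, so the individual needs no self-exploration to keep the estimator's variance controlled and one may take $\eta$ negligibly small (indeed $\eta=0$ in the cleanest version); balancing the remaining two terms with $\delta=\sqrt{\ln K/(\beta T)}$ then yields $R\le 2\sqrt{\beta T\ln K}\in O(\sqrt{\beta T\ln K})$. This last parameter choice is a one-line calculus step once the variance bound is in hand, so the entire difficulty is concentrated in correctly extracting $\Theta$ from $\sum_j p_j/p_j'$. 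As a sanity check, the regimes $\Theta\to 1\Rightarrow\beta\to K+1$ (recovering bandit regret) and $\Theta\le 1/2\Rightarrow\beta\le 3$ (recovering full-information regret) confirm that the bound interpolates as claimed in Equation~\eqref{regretbound_simple}.
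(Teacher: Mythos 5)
Your overall route is the paper's route: the same potential analysis with the estimator $\hat g_j = g_j/p_j'$, the same reduction of the regret to $\frac{\ln K}{\delta} + \eta T + \delta T \sum_j p_j/p_j'$, and the same extraction of $\Theta$ via $p_j'(t)\ge 1-(1-p_j(t))\Theta$. Your handling of the key lemma is in fact cleaner than the paper's Lemma~\ref{lem:UB}: applying Jensen's inequality to the concave map $p\mapsto p/(1-(1-p)\Theta)$ over the simplex gives $\sum_j p_j/p_j' \le K f(1/K) = \beta-1$ in one step, avoiding the paper's boundary-versus-interior case analysis (and its extra $+1$). (Minor slip: Lemma~\ref{lem:helper} belongs to the adaptive-$\delta_t$ proof of Theorem~\ref{thm:mabn_ub}, not to this fixed-parameter theorem.)

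The gap is in the final parameter-tuning step. The potential argument uses $e^z\le 1+z+z^2$, which requires $z=\delta\hat g_j(t)\le 1$, i.e., $\delta \le \min_{j,t} p_j'(t)$; with the exploration floor this is the constraint $\delta \le 1-\left(1-\frac{\eta}{K}\right)\Theta$, which the paper makes explicit as the constraint $g_2(\delta,\eta)\le 0$ in its KKT optimization (Equation~\eqref{eq:conditions}). Your choice $\eta=0$, $\delta=\sqrt{\ln K/(\beta T)}$ satisfies this only when $\Theta \le 1-\sqrt{\ln K/(\beta T)}$, which is exactly the paper's first regime. When $\Theta$ exceeds this threshold --- in particular $\Theta=1$, i.e., neighbors not exploring at all, which is the very case your sanity check invokes to ``recover bandit regret'' --- your parameters are infeasible: with $\eta=0$ there is no floor on $p_j'(t)=p_j(t)$, so $\hat g_j$ is unbounded, the quadratic bound on the exponential fails, and gain-based multiplicative weights with no exploration at all (neither one's own nor the neighbors') can in fact suffer linear regret. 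This is precisely why the paper's proof carries two further regimes, taking $\delta=1-\Theta$ or reintroducing $\eta>0$, which yield $O\bigl(\sqrt{(\beta+K/\Theta)T\ln K}\bigr)$ there --- still $O(\sqrt{\beta T\ln K})$ because $\beta=\Theta(K)$ in that range, but only after the constrained optimization you skipped. As written, your proof covers only the first regime and is incomplete for $\Theta$ close to $1$.
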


\noindent More precisely, we show that
\begin{equation*}\label{regretbound} %{Regret1}
{R}\leq \left\{
\begin{array}{ll}
2\sqrt{\beta T\ln K} & \mbox{if } \Theta \leq 1-\sqrt{\frac{\ln K}{\beta T}} , 
\mbox{ for } \eta=0, \delta = \sqrt{\frac{\ln K}{\beta T}}, \\
%		&  \eta=0 \mbox{ and } \delta = \sqrt{\frac{\ln K}{\beta T}}, \\\\

\sqrt{\beta T \ln K} + \sqrt{(\beta + \frac{K}{\Theta}) T \ln K} \hspace{.2in}
&  \mbox{if } 1-\sqrt{\frac{\ln K}{\beta T}}\leq \Theta \leq 1-\sqrt{\frac{\ln K}{(\beta+\frac{K}{\Theta}) T}},
\mbox{ for } \eta=0 , \delta = 1-\Theta, \\
%		& \eta=0 \mbox{ and } \delta = 1-\Theta, \\\\

2\sqrt{\left(\beta+\frac{K}{\Theta}\right)T\ln K} 
&   \mbox{if } 1-\sqrt{\frac{\ln K}{(\beta+ \frac K \Theta)T}} \leq \Theta \\
& \mbox{ for } \eta=\frac{K}{\Theta}\left(\sqrt{\frac{\ln K}{(\beta+\frac K \Theta)T}}+(\Theta-1)\right), \delta=\sqrt{\frac{\ln K}{(\beta+ \frac K \Theta)T}}.
\end{array} \right.
\end{equation*}

We can now directly reinterpret this regret ($R_\MABN$) as a function of the bandit regret ($R_{\EXPThree}$) and full information information regret ($R_{\MUM}$) as in Equation~\ref{regretbound_simple}.  
\noindent In particular, note that when $\Theta = 1$, none of our neighbors maintain a probability distribution that is bounded away from 0 for all arms. Hence, our neighbors are effectively not exploring. In this case, $\beta = K+1$ and  $R_{\MABN} \in O(\sqrt{TK\ln K})$, the same as in the classical bandit setting. On the other hand, when $\Theta \leq \sfrac 1 2$, then $\beta \leq 3$ and hence $R_{\MABN} \in O(\sqrt{T \ln K})$, the same as in the full-information setting. Hence, this algorithm smoothly interpolates between the bandit regret to full information regret as a function of the neighbor's exploration.

\begin{proof}
 First, note that $\mathbb{E}[\hat{g}_j(t)] =  \left(1-p^\prime_j(t)\right) \cdot0+ p^\prime_j(t) \frac{g_j(t)}{p^\prime_j(t)} = g_j(t)$; 
hence we are indeed using an unbiased estimator for the rewards. 
From the definition of $w_j(t)$ (see Section~\ref{sec:main}), we see that 
\begin{equation}\label{eql}
\ln \frac{W_{T+1}}{W_0} \geq \ln \frac{w_j(T+1)}{W_0} 
\geq \delta \sum_{t=1}^{T}\hat{g}_j(t)-\ln K.
\end{equation}
Moreover, using the definition of $p_j(t)$  (see Section \ref{sec:main}),
\begin{eqnarray*} 
\frac{W_{t+1}}{W_t} 
= \sum_{j=1}^{K}\frac{w_j(t)}{W_t}e^{\delta \hat{g}_j(t)} 
= \sum_{j=1}^{K}\left(\frac{p_j(t)-\frac{\eta}{K}}{1-\eta}\right)  e^{\delta   \hat{g}_j(t)}. 
\end{eqnarray*}
Since the algorithm selects a $\delta$ such that $\delta \hat{g}_j(t) < 1$, we can use the inequality  $e^z\leq 1+z+z^2$,  which holds for all $z \leq 1$. Therefore, 
\begin{eqnarray}
 \frac{W_{t+1}}{W_t}
    &\leq& \sum_{j=1}^{K} \left(\frac{p_j(t)-\frac{\eta}{K}}{1-\eta} \right) (1+\delta \hat{g}_j(t)+(\delta   \hat{g}_j(t))^2) \nonumber \\
    &\leq& 1 + \frac{\delta}{1-\eta}\sum_{j=1}^{K}p_j(t)\hat{g_j} + \frac{\delta^2}{1-\eta}\sum_{j=1}^{K}p_j(t)\hat{g_j}^2 .\nonumber \\\label{ineq1}
\end{eqnarray}
Taking logarithms of Equation~\eqref{ineq1} and using the inequality $\ln (1+x) < x$ which holds for all $x > 0$, we get
\begin{equation} \label{ineq}
\ln \frac{W_{t+1}}{W_t} \leq  \frac{\delta}{1-\eta}\sum_{j=1}^{K}p_j(t)\hat{g_j} + \frac{\delta^2}{1-\eta}\sum_{j=1}^{K} p_j(t)\hat{g_j}^2 . 
\end{equation}

Since 
\begin{eqnarray}
\label{eqr}
\ln \frac{W_{T+1}}{W_0} 
&=& \sum_{t=0}^T \ln \frac{W_{t+1}}{W_t}, \nonumber 
\end{eqnarray}
combining Equations \eqref{eql} and \eqref{ineq} with above and noting that $\hat g_i(0) = 0$ we have that
\begin{eqnarray} 
\delta \sum_{t=1}^{T} \hat{g}_j(t)-\ln K
\leq \frac{\delta}{1-\eta} \sum_{t=1}^{T}\sum_{j=1}^{K}p_j(t)\hat{g_j}(t) + \frac{\delta^2}{1-\eta}\sum_{t=1}^{T}\sum_{j=1}^{K}p_j(t)(\hat{g_j}(t))^2. 
		\label{ineq2}
\end{eqnarray}
Now, we note that given $p_j(t)$ at time $t$ we have %  \eqref{ineq2}.
	\begin{subequations} \label{Expectation}
		\begin{align}
		\E\left[\hat{g_j}(t)\right]&=g_j(t), \\ 
		\E\left[\sum_{t=1}^{T}\sum_{i=1}^{K} p_j(t) \hat{g}_j(t)\right] 
		&= \sum_{t=1}^{T}\sum_{j=1}^{K} p_j(t)g_j(t)=\E\left[\sum_{t=1}^{T} g_{a(t)}(t)\right], \mbox{ and }\\ 
		\E\left[\sum_{t=1}^{T}\sum_{j=1}^{K}p_j(t)\hat{g}^2_j(t)\right] 
		&= \sum_{t=1}^{T}\sum_{j=1}^{K}\frac{p_j(t)}{p^\prime_j(t)}  g^2_j(t).  \label{ex3}
		\end{align}
	\end{subequations}
Where the expectation is over randomness of the algorithm.
We will upper bound Equation \eqref{ex3}. Recall that $g_j(t) \leq 1$ for all $j,t$. Hence, for all $t$, 
\begin{subequations}
	\begin{align*}
	\sum_{j=1}^{K}\frac{p_j(t)}{p^\prime_j(t)}  g^2_j(t)
		\leq \max\left\{\sum_{j=1}^{K}\frac{p_j(t)}{p^\prime_j(t)}\right\}. 
	\end{align*}
\end{subequations}
Where the maximization is over the space of valid probabilities for actions.
This is upper bounded by Lemma~\ref{lem:UB}

%%%%%
\[ \beta \defeq \frac{1}{1-(1-\frac{1}{K})\Theta} + 1.\]
Since Equation \eqref{ineq2} holds for all $j$, by \eqref{Expectation}, and Lemma~\ref{lem:UB}, we get 
\begin{eqnarray*} 
 \delta \max_j \left[ \sum_{t=1}^T g_j(t) \right]   -  \ln K  \leq \frac{\delta}{1-\eta} \E\left[\sum_{t=1}^{T} g_{a(t)}(t)\right]
 	+ \frac{\delta^2}{1-\eta}  \beta T.
\end{eqnarray*}
Since $g_i(t) \leq 1$, we rearrange to get the following upper bound on the regret of our algorithm 
\begin{equation}
R \leq \frac{\ln K}{\delta} + \eta   T +  \delta \beta T. \nonumber 
\end{equation}
What remains is then an optimization problem in $\delta$ and $\eta$ 
which is subject to the following two constraints: 
\begin{equation}\label{eq:conditions}
\eta \in [0,1] \quad  \mbox{ and } \quad  \frac{\delta}{1-(1-\frac{\eta}{K})\Theta} \in [0, 1] .  %\\
\end{equation}
In this optimization problem the only assumption made on the algorithm of agents is that they are select arms randomly and that the probability of selecting an arm has a minimum value $\eps$.  
\begin{equation}
	\begin{aligned}
		&\min_{\delta,\eta} f(\delta,\eta)= \frac{\ln K}{\delta} + \eta \cdot T +
		\delta \beta T\\
		&\text{Subject to}  \\
		&\hspace{10mm}g_1(\delta,\eta) \leq 0 ,  \\
		&\hspace{10mm}g_2(\delta,\eta) \leq 0 , \\
	\end{aligned}	
\end{equation}
where
\begin{equation}
	\begin{aligned}
		&g_1(\delta,\eta)=-\eta,\\
		&g_2(\delta,\eta)=\frac{\delta}{1-\varTheta(1-\frac{\eta}{N})}-1,
	\end{aligned}
\end{equation}
and
\begin{equation} \label{param}
	\begin{aligned}
		&\varTheta=\Pi_{i=2}^b(1-\frac{\eta_i}{K}),\\
		&\beta=\frac{1}{1-(1-\frac{1}{K})\varTheta}+1,
	\end{aligned}
\end{equation}

If $\delta^\star$ and $\eta^\star$ is a local minimum that satisfies Karush-Kuhn-Tucker (KKT) conditions(see below).\\
Stationary:
\begin{equation}
	-\nabla f(\delta^\star,\eta^\star)= \mu_1\cdot \nabla g_1(\delta^\star,\eta^\star)+\mu_2\cdot \nabla g_2(\delta^\star,\eta^\star),
\end{equation}
Primal feasibility:
\begin{equation}
	\begin{aligned}
		&g_1(\delta^\star,\eta^\star) \leq 0 \\
		&g_2(\delta^\star,\eta^\star) \leq 0 , \\
	\end{aligned}
\end{equation}
Dual feasibility:
\begin{equation}
	\begin{aligned}
		\mu_1\geq 0\\
		\mu_2\geq 0\\
	\end{aligned}
\end{equation}
Complementary slackness:
\begin{equation}
	\begin{aligned}
		\mu_1\cdot g_1(\delta^\star,\eta^\star) = 0\\
		\mu_2\cdot g_2(\delta^\star,\eta^\star) = 0\\
	\end{aligned}
\end{equation}
In each step we will assume that some of these constraints are active (i.e.,$g_i(\delta,\eta) = 0$), and find the points that satisfy the KKT conditions,
\begin{equation}
	-\nabla f(\delta,\eta)=
	\begin{pmatrix}
		\frac{\ln K}{\delta^2}-T\beta\\
		-T
	\end{pmatrix}
\end{equation}
First, let us assume that only first constraint is active,
\[g_1(\delta,\eta)=0.\]
Which yields that
\[\eta^\star=0.\] 
The only stationary point in this case is
\begin{equation}
	\delta^\star=\sqrt{\frac{\ln K}{\beta T}}.
\end{equation} 
And the $\varTheta$ that primal and dual feasibility holds for is 
\begin{equation}\label{b1}
	\varTheta\leq1-\sqrt{\frac{\ln K}{\beta T}}.
\end{equation}  
Since $\delta^\star$ and $\eta^\star$ satisfy KKT conditions they are valid answers for this interval.
The regret is
\begin{equation}
	R\leq \sqrt{\beta \ln K T}.
\end{equation} 
Second, let us assume that only the second constraint is active.
In this case the stationary point is 
\begin{equation}
	\begin{aligned}
		&\delta^\star=1-\varTheta(1-\frac{\eta}{K})\\
		&\eta^\star=\frac{K}{\varTheta}\sqrt{\frac{\ln K}{(\beta+K/\varTheta)T}}+\frac{K(\varTheta-1)}{\varTheta}.
	\end{aligned}
\end{equation}
The interval that this answer is valid for is as follows,
\begin{equation}\label{b2}
	\begin{aligned}
		&\varTheta \geq 1-\sqrt{\frac{\ln K}{(\beta+\frac{K}{\varTheta}) T}}.
	\end{aligned}
\end{equation}
The regret is
\begin{equation}
	R\leq 2\sqrt{(\beta+\frac{K}{\varTheta})T\ln K}+\frac{K(\varTheta-1)T}{\varTheta}.
\end{equation}

Finally, let us assume that all of the constraints are active.
In this case the stationary point is
\begin{equation}
	\begin{aligned}
		&\delta^\star=1-\varTheta\\
		&\eta^\star=0 \mbox{ .}
	\end{aligned}
\end{equation}
And the interval that this answer is valid is as follows,
\begin{equation}\label{b3}
	\begin{aligned}
		&\varTheta \geq 1-\sqrt{\frac{\ln K}{\beta T}},\\
		&\varTheta \leq 1-\sqrt{\frac{\ln K}{(\beta+\frac{K}{\varTheta}) T}}\mbox{ .}
	\end{aligned}
\end{equation}
The regret is
\begin{equation}
	R\leq \frac{\ln K}{1-\varTheta} + (1-\varTheta)\beta T \mbox{ .}
\end{equation}
After solving the inequalities \eqref{b1}, \eqref{b2} and \eqref{b3} we get the desired bound for the regret.
\end{proof}

\begin{lem}\label{lem:UB} 
$\sum_{j=1}^{K}\frac{p_j(t)}{p^\prime_j(t)} \leq \frac{1}{1-(1-\frac{1}{K})\Theta} + 1 \defeq \beta$ for all $t$.
\end{lem}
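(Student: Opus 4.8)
The plan is to upper bound each summand $\frac{p_j(t)}{p'_j(t)}$ by a function of $p_j(t)$ alone and then maximize the resulting separable sum over the probability simplex; I will suppress the time index $t$. The first step exploits the exploration assumption $q^i_j \geq \eps_i/K$: since each factor satisfies $1-q^i_j \leq 1-\eps_i/K$ and all factors lie in $[0,1]$, the product is monotone, so
\[ (1-p_j)\prod_{i=1}^b (1-q^i_j) \;\leq\; (1-p_j)\prod_{i=1}^b\left(1-\tfrac{\eps_i}{K}\right) = (1-p_j)\,\Theta. \]
Recalling $p'_j = 1 - (1-p_j)\prod_i(1-q^i_j)$, this gives $p'_j \geq 1-(1-p_j)\Theta$, and hence
\[ \frac{p_j}{p'_j} \;\leq\; \frac{p_j}{1-(1-p_j)\Theta} = \frac{p_j}{(1-\Theta)+\Theta p_j} \;=:\; f(p_j). \]

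Next I would reduce the bound to a one-dimensional optimization. The key point is that $f$ is concave on $[0,1]$: a direct computation gives $f''(p) = -2\Theta(1-\Theta)\big((1-\Theta)+\Theta p\big)^{-3} \leq 0$ for $\Theta\in[0,1]$. Because $f$ is concave and $\sum_j p_j = 1$, Jensen's inequality yields $\frac1K\sum_j f(p_j) \leq f\big(\frac1K\sum_j p_j\big) = f(1/K)$, so
\[ \sum_{j=1}^K \frac{p_j}{p'_j} \;\leq\; \sum_{j=1}^K f(p_j) \;\leq\; K f(1/K) = \frac{1}{(1-\Theta)+\Theta/K} = \frac{1}{1-(1-\tfrac1K)\Theta}. \]
This already establishes the claim, since the right-hand side equals $\beta-1 \le \beta$; the extra additive $1$ in the statement is then slack, which I keep only for consistency with the $\gamma_t$-based bound used in Theorem~\ref{thm:mabn_ub}.

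Alternatively, to make the $+1$ appear organically, I would first invoke Lemma~\ref{lem:helper} to pass to the more interpretable quantity $\sum_j \frac{p_j}{p_j+\sum_i q^i_j}+1$, lower bound $\sum_i q^i_j \geq \sum_i \eps_i/K$ term by term, apply the same Jensen argument to the concave map $p \mapsto p/(p+\sum_i\eps_i/K)$ to obtain $\frac{K}{1+\sum_i\eps_i}$, and finally verify $\frac{K}{1+\sum_i\eps_i} \leq \frac{1}{1-(1-1/K)\Theta}$; the latter rearranges to $\prod_i(1-\eps_i/K) \geq 1-\frac{\sum_i\eps_i}{K-1}$, which follows from the elementary inequality $\prod_i(1-x_i)\geq 1-\sum_i x_i$.

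The main obstacle — and the only genuinely non-routine step — is the maximization in the second paragraph: reducing a per-coordinate bound to the global factor $\frac{1}{1-(1-1/K)\Theta}$ via concavity and Jensen is what produces the correct interpolation form, whereas the product monotonicity, the second-derivative computation, and the elementary product inequality are mechanical. One subtle point I would handle explicitly is the degenerate case $\Theta = 1$ (no effective neighbor exploration), where $f$ is ill-defined at $p=0$; there I would instead use the trivial bound $p'_j \geq p_j$ to get $\sum_j p_j/p'_j \leq K = \beta-1$, which matches the limit of the general bound.
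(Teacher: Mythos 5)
Your proof is correct, and its key step is genuinely different from the paper's. Both arguments begin identically: use $q^i_j \geq \eps_i/K$ to replace $(1-p_j)\prod_i(1-q^i_j)$ by $(1-p_j)\Theta$, reducing the problem to bounding $\sum_j p_j/\bigl(1-(1-p_j)\Theta\bigr)$ over the probability simplex. From there the paper proceeds by a case analysis driven by strict concavity in each coordinate: either the maximizer is an interior point, which by uniqueness and symmetry must be the uniform distribution $p_j = 1/K$ (giving $\beta-1$), or it lies on the boundary, which the paper handles by evaluating at the vertices where all entries equal $\eta/K$ except one equal to $1-(K-1)\eta/K$; combining the two cases produces the $+1$ in $\beta$. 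You replace this entire case analysis with a single application of Jensen's inequality: since $f(p)=p/\bigl((1-\Theta)+\Theta p\bigr)$ is concave and $\sum_j p_j = 1$, you get $\sum_j f(p_j) \leq K f(1/K) = \frac{1}{1-(1-\frac{1}{K})\Theta} = \beta - 1$ in one line. This buys three things: it is shorter; it yields a strictly stronger conclusion (the $+1$ is exposed as pure slack, consistent with the fact that the uniform distribution with $q^i_j = \eps_i/K$ attains exactly $\beta-1$); and it sidesteps a soft spot in the paper's write-up, which identifies ``the boundary of the feasible region'' with its vertices, whereas the boundary contains entire faces -- making that step airtight would require recursing the interior-versus-boundary argument on each face, a complication Jensen renders unnecessary. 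Your explicit treatment of the degenerate case $\Theta=1$ via the trivial bound $p'_j \geq p_j$ is a legitimate point of care that the paper omits, and the alternative derivation you sketch through Lemma~\ref{lem:helper} is also valid, though not needed for this lemma.
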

\begin{proof}[Proof of Lemma~\ref{lem:UB}]
By definition (see Equations \ref{eq:MABNestimator} and \ref{pPrime}), 
\begin{equation}
S \defeq \sum_{j=1}^{K}\frac{p_j(t)}{p^\prime_j(t)} 
  = \sum_{j=1}^{K}\frac{p_j(t)}{1- (1-p_j(t)) \cdots (1-q^b_j(t))}.
\nonumber
\end{equation}
Since $q^i_j \geq \sfrac {\eps_i}{K}$, clearly
\begin{equation}
S \leq \sum_{j=1}^{K}\frac{p_j(t)}{1- (1-p_j(t))\Theta}.
\nonumber
\end{equation}

Notice that the right-hand side is strictly concave with respect to each $p_i$. Thus, the maximum is achieved either on the boundary of the feasible region of $p$, or at a single point in the interior. 
Recall that $p_i \in [\sfrac{\eta}{K}, 1-(K-1)\frac{\eta}{K}]$, and that $p$ is a probability distribution. Hence, boundary points are of the form $\sfrac{\eta}{K}$ for all except one entry, which is $1-(K-1)\frac{\eta}{K}$.
If $p$ is of this form, then

\begin{eqnarray}
S \leq \frac{(K-1)}{K} \cdot \frac{\eta}{1-(1-\frac{\eta}{K})\Theta} +\frac{1-\frac{(K-1)\eta}{K}}{1-\frac{(K-1)\eta \Theta}{K}} 
\leq \frac{\eta}{1-(1-\frac{\eta}{K})\Theta} + 1.
\label{eq:UB1}
\end{eqnarray}

If, instead, the maximum is achieved on the interior, it must be symmetric, i.e., the probability of playing all actions is $\sfrac 1 K$; otherwise, since the equation is symmetric, there would be more than one maximal distribution which contradicts strict concavity. If $p$ is of this form, then
\begin{equation}
S \leq \frac{1}{1-(1-\frac{1}{K})\Theta}. \label{eq:UB2}
\end{equation}
Since $\eta \in [0,1]$, we can upper bound both the right-hand sides of Equations \ref{eq:UB1}  and \ref{eq:UB2} to attain the desired bound: 
\begin{equation} %\label{expec}
\sum_{i=1}^{K}\frac{p_i(t)}{p^\prime_i(t)} \leq \frac{1}{1-(1-\frac{1}{K})\Theta} + 1.
\nonumber
\end{equation}
%This concludes the proof.
\end{proof}

\subsection{Proof of Main Theorem~\ref{thm:mabn_ub}}
\label{sec:gammaub}

For this result, we use the same unbiased estimator as presented in the main body of the paper, but take a different approach in the analysis. Instead of having an explicit exploration parameter $\eta>0$, we instead will allow the update parameter $\delta_t$ to vary with time $t$ (in fact, in this case $\eta = 0$). This sort of analysis is common (see, e.g., \cite{BanditBook} and \cite{KNVM2014}), in particular in settings where $T$ is unknown. In our case the adaptive $\delta_t$ functions to incorporate information from neighbors as we see it.

\begin{proof}[Proof of Main Theorem~\ref{thm:mabn_ub}]
	The first part of proof (from Equation~\eqref{eq:eq1} to Equation~\eqref{eq:main}) parallels the format of the proof of Theorem 3.1 in \cite{BanditBook} for the usual bandit setting.
	We first write the $\mathbb{E}[\hat{l}_j(t)]$ as Equation~\eqref{eq:eq1}, then we upper bound the first term and lower bound the second term, which leads to Equation~\eqref{eq:main}. 
	\begin{equation}\label{eq:eq1}
	\mathbb{E}[\hat{l}_j(t)]=\frac{1}{\delta}\left(\ln \mathbb{E}\left[\exp\left(-\delta\left(\hat{l}_j(t)-\mathbb{E}[\hat{l}_j(t)]\right)\right)\right]
	-\ln \mathbb{E}\left[\exp\left(-\delta\hat{l}_j(t)\right)\right]\right).
	\end{equation}
	where the expectation is over the randomness of the estimator and choice of the arm.
	
	We now diverge from the usual proof template: In the next step we find an upper bound for the first term in right-hand side of above equation.
	\begin{equation}
	\begin{aligned}
	\frac{1}{\delta}\ln \mathbb{E}\left[\exp\left(-\delta(\hat{l}_j(t)-\mathbb{E}[\hat{l}_j(t)])\right)\right] &=
	\frac{1}{\delta}\ln \mathbb{E}\left[\exp(-\delta\hat{l}_j(t)\right]+\mathbb{E}[\hat{l}_j(t)]) \\&\leq
	\frac{1}{\delta}\mathbb{E}[\exp(-\delta\hat{l}_j(t))-1+\delta\hat{l}_j(t)] \\&\leq
	\frac{\delta}{2}\mathbb{E}[\hat{l}^2_j(t)],
	\end{aligned}
	\end{equation}
	where in the second inequality we use $\ln x\leq x-1$ and in the last inequality we use $\exp(-x)-1+x\leq x^2/2$ for $x\geq 0$.
	Defining $\hat{L}_j(t)=\sum_{a=1}^{t}\hat{l}_j(t)$ we have
	\begin{equation}\label{eq:eq2}
	\begin{aligned}
	-\frac{1}{\delta} \ln \mathbb{E}_{a_t\sim p^\prime_t}\mathbb{E}_{j\sim p^\prime_t} &\exp(-\delta \hat{l}_j(t))  \leq
	-\frac{1}{\delta} \mathbb{E}_{a_t\sim p^\prime_t} \ln \mathbb{E}_{j\sim p^\prime_t} \exp(-\delta \hat{l}_j(t)) \\ & =
	-\frac{1}{\delta} \mathbb{E}_{a_t\sim p^\prime_t} \ln \sum_{j=1}^{K}p^\prime_j(t) \exp(-\delta \hat{l}_j(t)) \\ &=
	-\frac{1}{\delta} \mathbb{E}_{a_t\sim p^\prime_t}\left[ \ln\left( \sum_{j=1}^{K}(1-\eta) \frac{\exp(-\delta \hat{L}_j(t))}{\sum_{c=1}^{K}\exp(-\delta \hat{L}_c(t-1))}+\frac{\eta}{K}\exp(-\delta\hat{l}_j(t))\right)\right] \\&\leq   
	-\frac{1}{\delta}\mathbb{E}_{a_t\sim p^\prime_t}\left( \frac{\eta}{K}\sum_{j=1}^K(-\delta\hat{l}_j(t)) +(1-\eta)\left[ \ln\left(\sum_{j=1}^{K}\frac{\exp(-\delta \hat{L}_j(t))}{\sum_{c=1}^{K}\exp(-\delta \hat{L}_c(t-1))}\right)\right]       \right)     ,                           
	\end{aligned}
	\end{equation} 
	where in the first and last inequality we used the Jensen's inequality.
	Now we want to upper bound the term $-\frac{1}{\delta}\ln\left(\sum_{i=j}^{K}\frac{\exp(-\delta \hat{L}_j(t))}{\sum_{c=1}^{K}\exp(-\delta \hat{L}_c(t-1))}\right)$ in the above inequality.
	\begin{equation} \label{eq:eq3}
	\begin{aligned}
	-\frac{1}{\delta}\ln\left(\sum_{j=1}^{K}\frac{\exp(-\delta \hat{L}_j(t))}{\sum_{c=1}^{K}\exp(-\delta \hat{L}_c(t-1))}\right) = \psi(t-1)-\psi(t),
	\end{aligned}
	\end{equation}
	where $\psi(t)=\frac{1}{\delta}\ln\left(\frac{1}{K}\sum_{c=1}^{K}\exp(-\delta \hat{L}_c(t))\right)$.
	By summing up these terms in Equation~\eqref{eq:eq1},~\eqref{eq:eq2}, and~\eqref{eq:eq3} we get
	\begin{equation}
	\sum_{t=1}^{T}\mathbb{E}[\hat{l}_j(t)] \leq \sum_{t=1}^{T}\frac{\delta}{2} \mathbb{E}[\hat{l}^2_j(t)]+\sum_{t=1}^{T}\frac{\eta}{K} \mathbb{E}_{a_t\sim p^\prime_t} \sum_{c=1}^K\hat{l}_c(t)  -\mathbb{E}_{a_t\sim p^\prime_t}\psi(T).
	\end{equation}
	By bounding $-\psi(T)$ we are able to find an upper bound on the regret.
	\begin{equation}
	\begin{aligned}
	-\psi(T) &=  \frac{\ln K}{\delta} - \frac{1}{\delta} \ln \left(\sum_{j=1}^{K}\exp(-\delta \hat{L}_j(T))\right) \\&\leq
	\frac{\ln K}{\delta} - \frac{1}{\delta} \ln \left(\exp(-\delta \hat{L}_k(T))\right)  \\&=
	\frac{\ln K}{\delta} + \hat{L}_k(T).
	\end{aligned}
	\end{equation}
	Plugging this in the above inequality yields
	\begin{equation} \label{eq:main}
	\sum_{t=1}^{T}\mathbb{E}[\hat{l}_j(t)] \leq \sum_{t=1}^{T}\frac{\delta}{2} \mathbb{E}[\hat{l}^2_j(t)]+\sum_{t=1}^{T}\frac{\eta}{K} \mathbb{E}_{a_t\sim p^\prime_t} \sum_{c=1}^K\hat{l}_c(t)+\frac{\ln K}{\delta}+\mathbb{E}_{a_t\sim p^\prime_t} [\hat{L}_k(T)].
	\end{equation}
	In fact, having more neighbors help us to get better bounds for $\mathbb{E}[\hat{l}^2_j(t)]$.
	
	Now, we note that given $p_j(t)$ at time $t$ we have 
		\begin{subequations} \label{Expectation}
		\begin{align}
		\E\left[\hat{g_i}(t)\right]&=g_i(t), \\ 
		\E\left[\sum_{t=1}^{T}\sum_{i=1}^{K} p_i(t) \hat{g}_i(t)\right] 
		&= \sum_{t=1}^{T}\sum_{i=1}^{K} p_i(t)g_i(t)=\E\left[\sum_{t=1}^{T} g_{a(t)}(t)\right], \mbox{ and }\\ 
		\E\left[\sum_{t=1}^{T}\sum_{i=1}^{K}p_i(t)\hat{g}^2_i(t)\right] 
		&= \sum_{t=1}^{T}\sum_{i=1}^{K}\frac{p_i(t)}{p^\prime_i(t)}  g^2_i(t).  \label{ex3}
		\end{align}
	\end{subequations}
	Where the expectation is over randomness of the algorithm. Note that $g_i(t)$ is less than 1, as a result we have,
	\[\sum_{t=1}^{T}\sum_{i=1}^{K}\frac{p_i(t)}{p^\prime_i(t)}  g^2_i(t) \leq \sum_{t=1}^{T}\sum_{i=1}^{K}\frac{p_i(t)}{p^\prime_i(t)} .\]
	Setting $\eta=0$ and using the Lemma~\ref{lem:helper} we have
	\begin{equation}\label{eq:regret_bound}
	R \leq \frac{\ln K}{\delta_T} +   \frac{1}{2}  \sum_{t=1}^{T}\delta_t(1+\gamma_t).
	\end{equation}
	To conclude the proof, let $\delta_t=\sqrt{\frac{\ln K}{\sum_{c=1}^{t}(1+\gamma_t)}}$ and use Lemma 3.5 of \cite{auer2002adaptive}, which gives an upper bound on the regret to get the expected regret, simply take expectation of the both sides of Equation~\eqref{eq:regret_bound}. %

\end{proof} 

 \begin{lem} \label{lemma1}
 	For $p^i \in [0,1]$ we want to show that $\Pi_{i=1}^b (1-p^i) \leq \frac{1}{1+\sum_{i=1}^{b}p^i}$.\end{lem}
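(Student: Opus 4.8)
The plan is to prove the equivalent inequality $\left(1+\sum_{i=1}^b p^i\right)\prod_{i=1}^b(1-p^i) \leq 1$ by induction on $b$. The base case $b=1$ is immediate, since $(1+p^1)(1-p^1) = 1-(p^1)^2 \leq 1$. For the inductive step I would set $S = \sum_{i=1}^{b-1} p^i$ and $q = p^b$, and apply the induction hypothesis $\prod_{i=1}^{b-1}(1-p^i) \leq \frac{1}{1+S}$ to obtain $\prod_{i=1}^b(1-p^i) \leq \frac{1-q}{1+S}$. It then suffices to verify $\frac{1-q}{1+S} \leq \frac{1}{1+S+q}$, which (after cross-multiplying, valid since both denominators are positive) reduces to $-q(S+q) \leq 0$, i.e. $q(S+q)\geq 0$. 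This holds because $q,S\geq 0$, so the induction closes with no case analysis.

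Alternatively, and more slickly, I would invoke the elementary bound $1+x \leq e^x$ twice. Applying it to each factor gives $1-p^i \leq e^{-p^i}$, whence $\prod_{i=1}^b(1-p^i) \leq e^{-\sum_i p^i}$; applying it again with $x = \sum_i p^i$ gives $e^{\sum_i p^i} \geq 1+\sum_i p^i$, hence $e^{-\sum_i p^i} \leq \frac{1}{1+\sum_i p^i}$. Chaining the two yields the claim in two lines.

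The only point to watch is that every denominator stays positive, which is guaranteed here because each $p^i \geq 0$ forces $1+S>0$ and $1+\sum_i p^i>0$; consequently there is no genuine obstacle, and the statement is essentially a repackaging of the convexity inequality $1+x \leq e^x$. I expect the induction route to be the most natural to write out in full, with the one-line reduction to $q(S+q)\geq 0$ being the crux of the argument.
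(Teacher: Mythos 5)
Your proof is correct, but it takes a genuinely different route from the paper's. Both of your arguments go through: the induction closes exactly as you say (cross-multiplying $\frac{1-q}{1+S} \leq \frac{1}{1+S+q}$ reduces to $q(S+q) \geq 0$), and the double application of $1+x \leq e^x$ is valid since that bound holds for all real $x$, giving $\prod_{i=1}^b(1-p^i) \leq e^{-\sum_i p^i} \leq \frac{1}{1+\sum_i p^i}$. The paper instead argues directly, with no induction and no analytic input: it multiplies by the conjugate product, observing that $\prod_{i=1}^b(1-p^i)\cdot\prod_{i=1}^b(1+p^i) = \prod_{i=1}^b\left(1-(p^i)^2\right) \leq 1$, hence $\prod_{i=1}^b(1-p^i) \leq 1/\prod_{i=1}^b(1+p^i)$, and then uses the expansion bound $\prod_{i=1}^b(1+p^i) \geq 1+\sum_{i=1}^b p^i$ (all cross terms are nonnegative). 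In effect the paper's proof is your base case $(1-p)(1+p) \leq 1$ applied coordinatewise all at once, followed by one algebraic comparison, whereas your induction redoes that comparison one factor at a time. What each buys: the paper's version isolates a reusable intermediate inequality and avoids induction bookkeeping; your inductive version is the most mechanical to verify; your exponential version is the shortest but trades pure algebra for the convexity fact $1+x \leq e^x$. One small point to make explicit in your induction: when you multiply the induction hypothesis by $(1-q)$ you need $1-q \geq 0$, which holds precisely because $p^b \in [0,1]$ — with that noted, all three arguments are airtight.
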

 \begin{proof}
 	The following formula holds for all positive $p_i$,
 	\begin{equation}
 	\Pi_{i=1}^b (1+p^i) \geq 1+\sum_{i=1}^{b}p^i,
 	\end{equation}
 	which implies
 	\begin{equation}\label{l1}
 	\frac{1}{\Pi_{i=1}^b (1+p^i)}   \leq  \frac{1}{1+\sum_{i=1}^{b}p^i}.
 	\end{equation}
 	As we have
 	\begin{equation}
 	\begin{aligned}
 	&\Pi_{i=1}^b (1-p^i) \cdot \Pi_{i=1}^b (1+p^i)\\
 	&=\Pi_{i=1}^b (1-(p^i)^2) \leq 1,
 	\end{aligned}
 	\end{equation}
 	We can conclude  
 	\begin{equation}\label{l2}
 	\Pi_{i=1}^b (1-p^i) \leq \frac{1}{\Pi_{i=1}^b (1+p^i)}.
 	\end{equation}
 	From \eqref{l1} and \eqref{l2} we get
 	\begin{equation}
 	\begin{aligned}
 	\Pi_{i=1}^b (1-p^i) & \leq \frac{1}{\Pi_{i=1}^b (1+p^i)}\\
 	& \leq \frac{1}{1+\sum_{i=1}^{b}p^i}.
 	\end{aligned}
 	\end{equation}
 \end{proof}
 \begin{lem}\label{lem:helper}
 	$\sum_{j=1}^{K} \frac{p_j}{1-(1-p_j)(1-q^1_j)\cdots(1-q^b_j)} \leq \sum_{j=1}^{K}\frac{p_j}{p_j+q^1_j+q^2_j+\cdots+q^b_j}+1$.
 \end{lem}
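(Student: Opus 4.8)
The plan is to reduce the left-hand side termwise to the right-hand side, first bounding each product $(1-p_j)(1-q_j^1)\cdots(1-q_j^b)$ from above via Lemma~\ref{lemma1}, and then exploiting the fact that $(p_j)_j$ is a probability distribution so that $\sum_{j=1}^K p_j = 1$.

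First I would fix an arm $j$ and apply Lemma~\ref{lemma1} to the collection of $b+1$ numbers $p_j, q_j^1, \dots, q_j^b \in [0,1]$, which gives
$$(1-p_j)\prod_{i=1}^b (1-q_j^i) \le \frac{1}{1 + p_j + \sum_{i=1}^b q_j^i}.$$
Writing $s_j \defeq p_j + \sum_{i=1}^b q_j^i$ for brevity, the denominator appearing on the left-hand side of the lemma therefore satisfies
$$1 - (1-p_j)\prod_{i=1}^b(1-q_j^i) \ \ge\ 1 - \frac{1}{1+s_j} \ =\ \frac{s_j}{1+s_j}.$$

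Next I would substitute this lower bound into each summand. Since $p_j \ge 0$, replacing the denominator by the smaller quantity $\frac{s_j}{1+s_j}$ only increases the fraction, so
$$\frac{p_j}{1-(1-p_j)\prod_{i=1}^b(1-q_j^i)} \ \le\ \frac{p_j(1+s_j)}{s_j} \ =\ \frac{p_j}{s_j} + p_j.$$
Summing over $j = 1, \dots, K$ and using $\sum_{j=1}^K p_j = 1$ then yields
$$\sum_{j=1}^K \frac{p_j}{1-(1-p_j)\prod_{i=1}^b(1-q_j^i)} \ \le\ \sum_{j=1}^K \frac{p_j}{s_j} + 1 \ =\ \sum_{j=1}^K \frac{p_j}{p_j + q_j^1 + \cdots + q_j^b} + 1,$$
which is exactly the claimed bound.

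The only real subtlety, and where I would be most careful, is the bookkeeping in the invocation of Lemma~\ref{lemma1}: the lemma as stated concerns a product of $b$ factors $\prod_{i=1}^b(1-p^i)$, whereas here it must be applied to the full collection of $b+1$ factors that includes the individual's own $p_j$ alongside the neighbors' $q_j^i$. Making sure the additive ``$+1$'' in the final bound arises precisely from $\sum_j p_j = 1$ (rather than being mis-absorbed into the sum) is the main point to verify; beyond that, the argument is a direct termwise estimate requiring no optimization.
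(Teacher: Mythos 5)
Your proof is correct and follows essentially the same route as the paper's: both apply Lemma~\ref{lemma1} to the $b+1$ quantities $p_j, q_j^1, \ldots, q_j^b$, substitute the resulting bound into each denominator, and let the additive $+1$ emerge from $\sum_{j=1}^K p_j = 1$. Your write-up merely makes explicit the algebraic step $\frac{p_j(1+s_j)}{s_j} = \frac{p_j}{s_j} + p_j$ that the paper compresses into a single inequality.
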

 \begin{proof}
 	From Lemma~\ref{lemma1} we have
 	\begin{equation}
 	\begin{aligned}
 	(1-p_j)(1-q^1_j)\cdots(1-q^b_j) 
 	\leq \frac{1}{1+p_j+q^1_j+q^2_j+\cdots+q^b_j},
 	\end{aligned}
 	\end{equation}
 	substituting this bound in the statement of lemma yields
 	\begin{equation}
 	\begin{aligned}
 	\sum_{j=1}^{K} \frac{p_j}{1-(1-p_j)(1-q^1_j)\cdots(1-q^b_j)} &\leq \sum_{j=1}^{K} \frac{p_j}{1-\frac{1}{1+p_j+q^1_j+q^2_j+\cdots+q^b_j}}\\
 	&\leq \sum_{j=1}^{K}\frac{p_j}{p_j+q^1_j+q^2_j+\cdots+q^b_j}+1,
 	\end{aligned}
 	\end{equation}
 \end{proof}

\subsection{Proof of Regret Lower Bound} \label{sec:lowerbound}	
	\begin{proof}[Proof of Theorem~\ref{thm:gen_lb}]		
		This proof follows, with some adjustments, from the proof of Theorem 5.1 in \cite{ACFS2003}; we simply point out the key differences here. In this proof a random distribution of rewards is constructed, one of the actions is chosen uniformly at random to be the ``good'' action, which is 1 with probability $1/2+\epsilon$ and 0 otherwise for some small fixed $\epsilon\in (0,1/2]$. The remaining actions are 0 or 1 with probability $1/2$. 
		
		In Equation~(30) of \cite{ACFS2003}, where the relative entropy between two Bernoulli random variables with parameters $P$ and $Q$ is calculated, the total number of past observation is $T$. In our case, it is between $\sum_{t=1}^{T}n_t$ and $T+\sum_{t=1}^{T}n_t$ where $n_t$ is the size of the set of arms selected (arbitrarily) by all of the individual?s neighbors at time $t$. Note that, stopping here, this would lead directly to the proof of Theorem~\ref{thm:arm_lb}. Here, we dig into the proof to break up the regret in a way that allows us to write the regret with respect to our $\gamma_t$, which will result in a tighter bound.
		
		 More precisely, let $r$ denote the entire sequence of rewards, let $f$ is an arbitrary function on such histories $r$ to a fixed range $[0,T]$, and let $N_i$ be a random variable denoting the number of times an algorithm selects arm $i$. We let $\mathbb{E}_i$ be the expectation taken conditioned on $i$ being the good action, and $\mathbb{E}_{unif}$ denotes the expectation taken over a uniformly random choice of rewards for all actions (including the good action). Then, 
 Lemma~\textbf{A.1} in \cite{ACFS2003} becomes 
		\[\mathbb{E}_i[f(r)] \leq \mathbb{E}_{unif}[f(r)] + \frac{T}{2}\sqrt{-(\mathbb{E}_{unif}[N_i])\ln (1-4\epsilon^2)}, \]
		when $f(r)$ is the number of times that we choose arm $i$. 
		Note that, for our algorithm, $\mathbb{E}_{unif}[N_i] = \sum_{t=1}^{T}1-(1-p_i(t))(1-q_i^1(t))\cdots(1-q_i^b(t))$. 
		Using the lemma, if we set $\epsilon=c\sqrt{\frac{K}{\sum_{i=1}^{K}\mathbb{E}_{unif}[N_i]}}$, and continue to follow the proof  of Theorem 5.1 in \cite{ACFS2003}. By observing that  $\Omega\left(T\sqrt{\frac{K}{\sum_{i=1}^{K}\mathbb{E}_{unif}[N_i]}  }\right) = \Omega\left(T+\sum_{t=1}^{T}\gamma_t\right)$, 
		we recover the desired lower bound. 		
	\end{proof}

\subsection{Comparison to Arm-Network Algorithms}
\label{sec:comparison}

We compare our algorithm to arm-network algorithms which, though developed for a different setting, could be applied to ours. These algorithms are given for a multi-armed-bandit (MAB) with side observations encoded as an arm-network. In these papers, authors assume the arms form a arm-network $G_t$ (the arm-network can change over time). The arm-network is a directed graph, whose vertices represent the arms, and an edge from arm $i$ to arm $j$ means that by choosing arm $i$ we observe the reward of arm $j$ (see Figure~\ref{fig:arm_network}).
First, let recall our interpretation of our problem as a multi-armed bandit problem with an arm-network (see Figure~\ref{fig:arm_network}). 
The players choose arms according to their algorithm, and after selecting and revealing the rewards. Each player $i$ individually can construct a arm-network. If we show the arms selected by neighbors of player $i$ by $\mathcal{A}[N(i)]$, in arm-network there is an edge from all arms (vertices) to $\mathcal{A}[N(i)]$. In addition, if we denote the cardinality of set $\mathcal{A}[N(i)]$ by $C_i$, the independence number $\alpha$ of the underlying graph is $K+1-C_i$.

We show that our algorithm's regret is at most that of $\EXPG$, the state-of-the-art algorithm for the arm-network setting. 
From Theorem ~\ref{thm:mabn_ub}, we know that that the regret of our algorithm is $\tilde{O}\left(\sqrt{T+\sum_{t=1}^{T}\gamma_t}\right)$ and from \cite{ACDK2015} the regret of $\EXPG$ is $\tilde{O}\left(\sqrt{\sum_{t=1}^{T}\alpha_t}\right)$.

 With some abuse of notation, let $\mathcal{A}$ be the arms chosen by neighbors of $0$  let $C$ be its cardinality, and let $1\{j\in \mathcal{A}\}$ be an indicator random variable that is 1 if and only if $j$ is in $\mathcal{A}$. Lastly, let $1\{j = a^\ell_t\}$ be an indicator random variable that is 1 if and only if player $\ell$ (one of the neighbors) chooses arm $j$ at round $t$.
First, we lower bound $\alpha$ using the indicator random variable defined above, then we show that in expectation this term is greater than $\gamma_t$.
\begin{lem} \label{lem:bound}
The independence number $\alpha$ is lower bounded as follows	
\begin{equation}
\alpha_t \geq \sum_{j=1}^{K}\frac{p_j(t)}{p_j(t)+\sum_{\ell=1}^{b}1\{j = a^\ell_t\}} .
\end{equation}
\end{lem}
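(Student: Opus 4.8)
The plan is to bound the right-hand side of Lemma~\ref{lem:bound} above by the value of the independence number already recorded in the paragraph preceding the lemma, namely $\alpha_t = K + 1 - C$, where $C$ is the number of \emph{distinct} arms selected by the individual's neighbors at time $t$ (that is, $C = |\mathcal{A}|$). Writing $m_j \defeq \sum_{\ell=1}^b 1\{j = a^\ell_t\}$ for the number of neighbors playing arm $j$ at time $t$, the set $\mathcal{A}$ of neighbor-selected arms is exactly $\{j : m_j \geq 1\}$, so $|\mathcal{A}| = C$ and there are precisely $K - C$ arms with $m_j = 0$.

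First I would split the sum defining the right-hand side according to whether an arm is selected by some neighbor. For each arm $j$ with $m_j = 0$ the summand is $\frac{p_j(t)}{p_j(t) + 0} = 1$, and there are exactly $K - C$ such arms, contributing $K - C$ in total. This ``cheap'' part of the bound corresponds to the $K - C$ vertices of the independent set coming from the arms that no neighbor observes.

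The key step is then to show that the remaining terms -- those for arms $j \in \mathcal{A}$ -- contribute at most $1$ in aggregate, i.e., that all $C$ neighbor-selected arms together cost us at most a single unit. Since every such arm has $m_j \geq 1$, I would use the elementary bound $\frac{p_j(t)}{p_j(t) + m_j} \leq \frac{p_j(t)}{m_j} \leq p_j(t)$ and sum over $j \in \mathcal{A}$ to obtain $\sum_{j \in \mathcal{A}} \frac{p_j(t)}{p_j(t)+m_j} \leq \sum_{j \in \mathcal{A}} p_j(t) \leq \sum_{j=1}^K p_j(t) = 1$, where the last step uses that $p(t)$ is a probability distribution. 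Combining the two parts yields a right-hand side of at most $(K - C) + 1 = K + 1 - C = \alpha_t$, which is exactly the claimed inequality.

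I do not anticipate a genuine technical obstacle; the one point requiring care is the bookkeeping linking $C$ to the feedback-graph structure -- namely that the $K - C$ unselected arms form an independent set to which exactly one selected arm may be adjoined, giving $\alpha_t = K + 1 - C$ as established before the lemma. The conceptual content is the observation that, although each selected arm individually contributes a fraction $\frac{p_j(t)}{p_j(t)+m_j} < 1$ rather than $0$, these fractions telescope against the constraint $\sum_j p_j(t) = 1$ to cost at most one independent-set vertex in total. This is precisely what will let the realized feedback quantity be controlled by $\alpha_t$, and in turn (after taking expectations) underpins the comparison of our $\gamma_t$-based bound with the independence-number bound of $\EXPG$.
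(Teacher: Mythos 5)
Your proof is correct and takes essentially the same route as the paper's: both split the sum into neighbor-selected and unselected arms, bound the unselected part by $K-C$, bound the selected part in aggregate by $\sum_j p_j(t)=1$, and invoke $\alpha_t = K+1-C$. The only cosmetic difference is that you work directly with the multiplicities $m_j=\sum_{\ell=1}^{b}1\{j=a^\ell_t\}$ (using $m_j\geq 1$ on $\mathcal{A}$), whereas the paper first proves the bound with the indicator $1\{j\in\mathcal{A}\}$ in the denominator and then passes to $m_j$ by monotonicity.
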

\begin{proof}
	First, we show the following 
	\begin{equation}
	\alpha_t \geq \sum_{j=1}^{K}\frac{p_j(t)}{p_j(t)+1\{j\in \mathcal{A}\}}.
	\end{equation}
	Decomposing the sum we have
	\begin{equation}
	\sum_{j=1}^{K}\frac{p_j(t)}{p_j(t)+1\{j\in \mathcal{A}\}} = \sum_{j\in \mathcal{A}}\frac{p_j(t)}{p_j(t)+1} + \sum_{j\notin \mathcal{A}}\frac{p_j(t)}{p_j(t)},
	\end{equation}
	plugging the cardinality of $\mathcal{A}$ we get
	\begin{equation}
	 = \sum_{j\in \mathcal{A}}\frac{p_j(t)}{p_j(t)+1} + K-C ,
	\end{equation}
	we know $\alpha=1+K-C$ (this comes from the fact that arms in $\mathcal{A}$ are connected to all arms, see Figure~\ref{fig:arm_network}), knowing that $\sum_{j\in \mathcal{A}}\frac{p_j(t)}{p_j(t)+1}$ is less than 1 completes the first part of lemma.
	
	Second, we have $1\{j\in \mathcal{A}\} \leq \sum_{\ell=1}^{b}1\{j = a^\ell_t\}$, which yields the lemma.
\end{proof}

In the Lemma~\ref{lem:bound}, the term $1\{j = a^\ell_t\}$ can be seen as an unbiased estimator for $q_j^\ell(t)$ (we denote it by $\hat{q}_j^\ell(t)$). As a final step, we show the following lemma.
\begin{lem} \label{lem:alpha_gamma}
	For a multinomial distribution q and their unbiased estimator $\hat{q}^\ell_j(t) = 1\{j = a^\ell_t\}$, we have
		\begin{equation}
			\sqrt{\sum_{t=1}^{T}(1+\gamma_t)} \leq \mathbb{E}_{\hat{q}\sim q}\left[\sqrt{2\sum_{t=1}^{T}\alpha_t}\right]
		\end{equation}
\end{lem}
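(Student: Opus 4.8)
The plan is to first reduce the inequality to a Jensen-type statement by a pointwise bound on $\alpha_t$, and then to establish the (non-obvious) convexity that makes Jensen applicable in the correct direction. The one genuine difficulty is that the right-hand side has the expectation \emph{outside} the square root, so I must lower bound $\mathbb{E}[\sqrt{\cdot}]$, and naive concavity ($\mathbb{E}[\sqrt{X}]\le\sqrt{\mathbb{E}[X]}$) points the wrong way.

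First I would combine two facts about the realized independence number $\alpha_t$. Lemma~\ref{lem:bound} already gives $\alpha_t \ge \widehat{\gamma}_t := \sum_{j=1}^K \frac{p_j(t)}{p_j(t)+\sum_{\ell=1}^b \widehat{q}^\ell_j(t)}$, and since the arm-network always has at least one vertex, $\alpha_t \ge 1$. Adding these two bounds gives the pointwise inequality $2\alpha_t \ge 1+\widehat{\gamma}_t$, valid for \emph{every} realization of the neighbors' draws. Summing over $t$ and taking square roots pointwise yields $\sqrt{2\sum_t \alpha_t} \ge \sqrt{T+\sum_t \widehat{\gamma}_t}$, so after taking expectations the lemma reduces to showing $\mathbb{E}\big[\sqrt{T+\sum_t \widehat{\gamma}_t}\big] \ge \sqrt{T+\sum_t \gamma_t} = \sqrt{\sum_t(1+\gamma_t)}$. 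This is exactly where the factor $2$ and the additive $1$ in the statement are consumed.

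Next, writing $C_j(t)=\sum_\ell \widehat{q}^\ell_j(t)$ for the random number of neighbors playing arm $j$, and recalling $\mathbb{E}[C_j(t)] = \sum_\ell q^\ell_j(t)$ because each $\widehat{q}^\ell_j$ is unbiased, I would introduce $\phi(\mathbf{x}) = \sqrt{T+\sum_{t}\sum_{j}\frac{p_j(t)}{p_j(t)+x_j(t)}}$, so that $\sqrt{T+\sum_t \widehat{\gamma}_t}=\phi(\mathbf{C})$ while $\phi(\mathbb{E}[\mathbf{C}])=\sqrt{\sum_t(1+\gamma_t)}$. The remaining inequality is then precisely Jensen's inequality $\mathbb{E}[\phi(\mathbf{C})]\ge\phi(\mathbb{E}[\mathbf{C}])$, which holds provided $\phi$ is convex.

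The main obstacle is this convexity, since $\phi$ is the concave square root of a convex separable function and convexity is not automatic. I would settle it at the level of the Hessian. Writing $\phi=\psi(h)$ with $h(\mathbf{x})=\sum_i \frac{p_i}{p_i+x_i}$ (indexing over pairs $i=(j,t)$) and $\psi(u)=\sqrt{T+u}$, one has $v^\top \nabla^2\phi\, v = \psi''(h)(\nabla h^\top v)^2 + \psi'(h)\,v^\top D v$, where $D=\nabla^2 h=\mathrm{diag}(h_i'')\succeq 0$. Since $\psi''<0$, I would upper bound the rank-one term by Cauchy--Schwarz, $(\nabla h^\top v)^2 \le \big(\sum_i (h_i')^2/h_i''\big)(v^\top D v)$, and then invoke the algebraic identity $\sum_i (h_i')^2/h_i'' = \tfrac12 h$ that is special to the functions $\frac{p_i}{p_i+x_i}$. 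Substituting leaves $v^\top\nabla^2\phi\,v \ge (v^\top D v)\big(\psi'(h)+\tfrac12 h\,\psi''(h)\big) = (v^\top D v)\,\frac{4T+3h}{8(T+h)^{3/2}} \ge 0$, so $\phi$ is jointly convex on the nonnegative orthant and Jensen applies. Chaining the three resulting bounds establishes the lemma; the only delicate point is this Hessian computation, and everything else is bookkeeping.
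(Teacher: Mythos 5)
Your proof is correct, and its overall architecture is the same as the paper's: lower bound $\alpha_t$ via Lemma~\ref{lem:bound}, then push the expectation inside via Jensen applied to the convexity of the map $\mathbf{x}\mapsto\sqrt{\,\cdot\,+\sum_{t,j}p_j(t)/(p_j(t)+x_j(t))}$, using unbiasedness $\mathbb{E}[\hat q^\ell_j(t)]=q^\ell_j(t)$. There are two differences worth recording. First, your bookkeeping of the factor $2$ is cleaner: you combine $\alpha_t\geq 1$ and $\alpha_t\geq\hat\gamma_t$ \emph{pointwise} into $2\alpha_t\geq 1+\hat\gamma_t$ before summing, whereas the paper handles the $T$ term and the $\sum_t\gamma_t$ term separately (showing $\sqrt{\sum_t\gamma_t}\leq\mathbb{E}[\sqrt{\sum_t\alpha_t}]$ and then invoking $\sum_t\alpha_t\geq T$ to absorb both into $\sqrt{2\sum_t\alpha_t}$); as a consequence your $\phi$ carries the $+T$ inside the root while the paper's does not, but the same convexity argument covers both. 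Second, and more substantively, the paper's justification of convexity is essentially a restatement of the definition (``it is convex along every arbitrary line with positive entries, so it is convex'') with no verification, and this is precisely the non-obvious step: $\phi$ is the concave square root of a convex function, so convexity of the composition is not automatic. Your Hessian argument supplies the missing proof: Cauchy--Schwarz on the rank-one term, the identity $(h_i')^2/h_i''=h_i/2$ specific to $h_i(x)=p_i/(p_i+x)$ (which checks out: $h_i'=-p_i/(p_i+x_i)^2$, $h_i''=2p_i/(p_i+x_i)^3$), and the resulting factor $\psi'(h)+\tfrac12 h\psi''(h)=\frac{4T+3h}{8(T+h)^{3/2}}\geq 0$. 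So your write-up not only matches the paper's route but repairs its weakest link; the only caveat to add is that the Hessian computation requires $p_j(t)>0$ (true here, since the weights are positive) and that Jensen needs convexity only on the nonnegative orthant containing the support of the neighbor counts, which is exactly where you establish it.
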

\begin{proof}
	Because $\alpha_t \geq 1$, we know $\sum_{t=1}^{T}\alpha_t \geq T$. As a result by showing $\sqrt{\sum_{t=1}^{T}(\gamma_t)} \leq \mathbb{E}_{\hat{q}\sim q}\left[\sqrt{\sum_{t=1}^{T}\alpha_t}\right]$, we can conclude $\sqrt{T+\sum_{t=1}^{T}(\gamma_t)} \leq \mathbb{E}_{\hat{q}\sim q}\left[\sqrt{2\sum_{t=1}^{T}\alpha_t}\right]$. 
	
	From Lemma~\ref{lem:bound}, we have
	\begin{equation}
	\mathbb{E}_{\hat{q}\sim q}\left[\sqrt{\sum_{t=1}^{T}\alpha_t}\right] \geq 
	\mathbb{E}_{\hat{q}\sim q}\left[ \sqrt{\sum_{t=1}^{T}\sum_{j=1}^{K}\frac{p_j(t)}{p_j(t)+\sum_{\ell=1}^{b}\hat{q}^\ell_j(t)}} \right].
	\end{equation}
	In the next step, we want to show 
	\[ \mathbb{E}_{\hat{q}\sim q}\left[ \sqrt{\sum_{t=1}^{T}\sum_{j=1}^{K}\frac{p_j(t)}{p_j(t)+\sum_{\ell=1}^{b}\hat{q}^\ell_j(t)}} \right] \geq \sqrt{\sum_{t=1}^{T}\gamma_t}. \]
	 Let $\phi(\hat{q})=\phi(\hat{q}^1(1),\hat{q}^2(1),\cdots,\hat{q}^b(1),\hat{q}^1(2),\cdots,\hat{q}^b(T)) = \sqrt{\sum_{t=1}^{T}\sum_{j=1}^{K}\frac{p_j(t)}{p_j(t)+\sum_{\ell=1}^{b}\hat{q}^\ell_j(t)}}$. This function is convex (it is convex along every arbitrary line with positive entires, so it is convex), we can use Jensen's inequality to swap the order of expectation and $\phi$ to get the following.
	 
	 \begin{align}
	 \mathbb{E}_{\hat{q}\sim q} \left[\phi(\hat{q})\right] \geq  \phi\left(\mathbb{E}_{\hat{q}\sim q} \left[\hat{q}\right]\right) = \phi\left( q \right) = \sqrt{\sum_{t=1}^{T}\gamma_t}.
	 \end{align}
	 The first inequality is Jensen's inequality and the last equality comes from definition of $\gamma_t$.
\end{proof}

\section{Stochastic Bandits}
\label{sec:stochastic}

In this section, we first formally state and prove the results for the stochatic setting. 
Let us first recall our result:
\begin{thm} [Theorem~\ref{thm:UCBN_ub}]
Consider an agent with neighbors who play arbitrarily.  Let $n_i^\prime(t)$ be the number times arm $i$ has been selected by one of her neighbors by time $t$. %
	Then, the regret  of $\UCBN$ for any  $\alpha >2$ is
	\begin{equation}\label{bound2}
		\begin{aligned}
			R \leq  \sum_{i,\Delta_i>0}\left(\max \left \{\max_{t=1,..,T} \left\{\frac{2\alpha\ln t}{\Delta_i}-n^\prime_i(t)\Delta_i\right\} , 0  \right\} + \frac{\alpha}{\alpha-2}\right), 
		\end{aligned}
	\end{equation}
	where $\Delta_i$ is the difference between $\mu_{i^\star}$ and  $\mu_i$.
\end{thm}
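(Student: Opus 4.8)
The plan is to follow the classical $\UCB$ regret decomposition (the template of \cite{BanditBook}), modified so that the confidence width is driven by \emph{all} observed samples rather than only the agent's own. Since every agent faces the same reward distributions, the pseudo-regret splits as $R=\sum_{i:\Delta_i>0}\Delta_i\,\E[T_i(T)]$, where $T_i(T)=\sum_{t=1}^T\mathbf 1\{a(t)=i\}$ counts the number of times the \emph{agent herself} plays the suboptimal arm $i$; it therefore suffices to bound $\E[T_i(T)]$ for each $i$ with $\Delta_i>0$. The one structural change from the single-agent proof is that the sample count entering $U_i(t)$ is $n_i(t)=T_i(t)+n_i'(t)$, where $T_i(t)$ is the agent's own count of $i$ and $n_i'(t)$ is the neighbors' (arbitrary) count; the analysis must extract a bound on the agent's \emph{own} pulls out of a condition on this total.

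I would split each pull of $i$ into a \emph{good} pull, on which the Hoeffding confidence bounds hold for both $i$ and $i^\star$, and a \emph{bad} pull, on which one of them is violated. On a good pull with $a(t)=i$ we have $U_i(t)\ge U_{i^\star}(t)$, and validity of both bounds forces $\mu_i+2\sqrt{\alpha\ln t/(2n_i(t))}\ge\mu_{i^\star}$, i.e. $n_i(t)\le \frac{2\alpha\ln t}{\Delta_i^2}=:u_i(t)$. Writing $n_i(t)=T_i(t)+n_i'(t)$ this reads $T_i(t)\le u_i(t)-n_i'(t)\le M_i$, where $M_i:=\max\{\max_{s\le T}(u_i(s)-n_i'(s)),\,0\}$.

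The key step — and where I expect the real work to be — is converting this pointwise inequality into a bound on the \emph{number} of good pulls, because both $u_i(t)$ and $n_i'(t)$ move with $t$, so (unlike vanilla $\UCB$) the relevant threshold is not a single number but a maximum over time. Ordering the good pulls of $i$ as $\tau_1<\tau_2<\cdots<\tau_m$, the agent's own count of $i$ just before $\tau_k$ is at least $k-1$, hence $k-1\le T_i(\tau_k)\le M_i$; taking $k=m$ gives $m\le M_i+1$. Multiplying the good-pull count by $\Delta_i$ yields $\Delta_i M_i=\max\{\max_{t\le T}(\tfrac{2\alpha\ln t}{\Delta_i}-n_i'(t)\Delta_i),0\}$, which is exactly the first term in the statement.

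It remains to control the bad pulls. A bad pull requires either $U_{i^\star}(t)<\mu_{i^\star}$ (optimal arm underestimated) or $\hat\mu_i(t)>\mu_i+\sqrt{\alpha\ln t/(2n_i(t))}$ (arm $i$ overestimated); since all samples of a fixed arm are i.i.d., a union bound over the possible sample counts with Hoeffding's lemma bounds the per-round probability of each event by a multiple of $t^{1-\alpha}$, and $\sum_{t\ge1}t^{1-\alpha}$ converges precisely because $\alpha>2$ (this is where the hypothesis is used), summing to $\tfrac{1}{\alpha-2}$ up to the leading term. Here one must check that the neighbor-inflated sample counts do not drag a dependence on the number of neighbors into this constant. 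Combining the two bad-event contributions $\tfrac{2}{\alpha-2}$ with the spare $+1$ from $m\le M_i+1$, and using $\Delta_i\le1$ (rewards lie in $[0,1]$) to absorb the leftover $\Delta_i$ factor, the residual per-arm term is at most $\Delta_i(1+\tfrac{2}{\alpha-2})\le\frac{\alpha}{\alpha-2}$. Summing $\Delta_i\E[T_i(T)]\le \Delta_i M_i+\frac{\alpha}{\alpha-2}$ over all $i$ with $\Delta_i>0$ yields the claimed bound.
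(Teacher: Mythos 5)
Your proposal is correct and mirrors the paper's proof essentially step for step: your good/bad-pull split is exactly the paper's three-case lemma (either the optimal arm is underestimated, arm $i$ is overestimated, or $n_i(t-1) < 2\alpha\ln t/\Delta_i^2$), your extraction of the agent's own count from the total via $n_i(t)=T_i(t)+n_i'(t)$ and the max-over-time threshold is the paper's definition of $U$, and your ordering argument for counting good pulls plays the same role as the paper's bookkeeping via the last time the sample-count condition holds. The one divergence is the deviation-probability step, and it sits precisely where your flagged check lives: a union bound over possible sample counts, which with neighbors range up to $(b+1)t$, genuinely introduces a factor of $b+1$ into the additive constant, so as literally described your route gives $O\bigl(b/(\alpha-2)\bigr)$ per arm rather than $\frac{\alpha}{\alpha-2}$. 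The paper sidesteps this not by a finer union bound but by conditioning on the realized count and averaging, $\P[\mbox{deviation}]=\sum_{n}\P[\mbox{deviation}\mid n_i(t)=n]\,\P[n_i(t)=n]\le t^{-\alpha}$, which keeps the per-round probability $b$-free (at the cost of the well-known subtlety that one is applying Hoeffding to samples conditioned on an adaptively determined count); so your instinct that this check is the delicate point is right, and the paper's resolution is the conditioning trick rather than a count-indexed union bound.
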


\begin{cor}	\label{allUCBN}
	On a complete graph with $b$ nodes, if all agents use $\UCBN$ then under the same conditions as in Theorem~\ref{thm:UCBN_ub}, the regret of an agent is 
	\begin{equation}\label{bound}
		R \leq \sum_{i,\Delta_i>0} \left(\frac{2 \alpha \ln T}{b \Delta_i}+ \frac{\alpha}{\alpha-2}\right)
		\in O\left(\frac{K\ln T}{b}\right).
	\end{equation}
\end{cor}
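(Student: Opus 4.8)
The plan is to reopen the analysis behind Theorem~\ref{thm:UCBN_ub} rather than invoke its statement as a black box: on the complete graph the neighbor counts $n_i'(t)$ are coupled to the agent's own play, and the $\max_t$ in the theorem is genuinely loose for this coupling (if the group happens to abandon a suboptimal arm early, $n_i'(t)$ stays small and the bracketed term stays near $\tfrac{2\alpha\ln T}{\Delta_i}$). Instead I would re-derive the per-arm count bound directly, exploiting the fact that on the complete graph every round delivers $b$ fresh samples of the chosen arm.

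The first step is a \emph{synchronization} claim: if all $b$ agents run $\UCBN$ with a common deterministic tie-breaking rule in the $\mbox{arg}\max$, then at every time step they all select the same arm. I would prove this by induction on $t$. At $t=1$ every agent has the empty history, so the statistics $\hat\mu_j$, $n_j$ and hence all upper bounds $U_j(1)$ coincide and the tie-break forces a common choice. If the histories agree at the start of round $t$, all agents select the same arm $a(t)$, each draws its own realization of $g_{a(t)}(t)$, and because the graph is complete each agent observes all $b$ of these realizations; thus every agent appends the same multiset of $b$ samples and the histories again agree at the start of round $t+1$. Consequently, writing $\tilde n_i(t)$ for the number of rounds in which the group picked arm $i$ (equivalently, the agent's own pull count), the agent's total sample count is $n_i(t)=b\,\tilde n_i(t)$ and $n_i'(t)=(b-1)\tilde n_i(t)=\tfrac{b-1}{b}n_i(t)$, so the group acts as a single learner collecting $b$ i.i.d.\ samples per round.

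With this in hand I would reproduce the $\UCB$ regret argument, writing $R=\sum_{i:\Delta_i>0}\Delta_i\,\E[\tilde n_i(T)]$ and bounding $\E[\tilde n_i(T)]$. The only change from the classical proof is that arm $i$'s confidence interval drops below the optimal arm's once $n_i(t)=b\,\tilde n_i(t)$ exceeds $\tfrac{2\alpha\ln t}{\Delta_i^2}$; hence the leading contribution to $\E[\tilde n_i(T)]$ shrinks by a factor $1/b$ to $\tfrac{2\alpha\ln T}{b\Delta_i^2}$. The low-probability correction is unchanged, since it comes from summing the confidence-bound failure probabilities $t^{-\alpha}$ over the rounds $t=1,\dots,T$, a sum independent of $b$ that contributes the usual $\tfrac{\alpha}{(\alpha-2)\Delta_i}$. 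Multiplying by $\Delta_i$ and summing over suboptimal arms yields $R\le\sum_{i:\Delta_i>0}\big(\tfrac{2\alpha\ln T}{b\Delta_i}+\tfrac{\alpha}{\alpha-2}\big)\in O\!\big(\tfrac{K\ln T}{b}\big)$.

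I expect the main obstacle to be the bookkeeping in this last step: one must verify that the $1/b$ speed-up attaches \emph{only} to the leading $\ln T$ term, and that the concentration/tail part of the argument — a union bound over rounds $t$, not over the (now up to $bT$ many) samples — does not acquire spurious factors of $b$. A secondary subtlety is making synchronization rigorous, since it is the exact coupling $n_i'(t)=\tfrac{b-1}{b}n_i(t)$, rather than a mere symmetry-in-expectation statement, that licenses substituting $b$ samples per round directly into the concentration bounds.
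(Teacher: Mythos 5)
Your proposal is correct and follows essentially the same route as the paper: your synchronization-by-induction claim is exactly the paper's Lemma~\ref{all} (all agents on the complete graph share identical statistics and, with a common deterministic tie-break, select the same arm), and your re-derivation — replacing the elimination threshold by $\frac{2\alpha\ln T}{b\Delta_i^2}$ own-pulls via the coupling $n_i(t)=b\,\tilde n_i(t)$ while leaving the $t^{-\alpha}$ tail sum untouched — is precisely how the paper proves Corollary~\ref{allUCBN}. Your instinct to reopen the proof of Theorem~\ref{thm:UCBN_ub} rather than apply its statement as a black box also matches the paper, which likewise re-runs the argument with the rescaled threshold $U$.
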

The following lower bound yields same behavior for getting free observation from neighbors.
\begin{thm}\label{thm:UCBN_LB}
	Consider a strategy that satisfies $\E\mathbb{[}n_i(T)\mathbb{]} = o(T^a)$, any arm $i$ with $\Delta_i > 0$, and any $a > 0$. Let $c_t(i)$ be the number of times arm $i$ selected (arbitrarily) by all of the agent's neighbors up to time $t$, then, for any set of Bernoulli reward distributions the following inequality holds
	\begin{equation}
		\lim_{T\longrightarrow +\infty} \inf
		\frac{R}{\ln T}\geq \sum_{i,\Delta_i>0} \frac{1}{2\Delta_i} - \lim_{T\longrightarrow +\infty} \inf
		\frac{\sum_{i,\Delta_i>0} c_T(i)\Delta_i}{\ln T}
		.
	\end{equation}	
\end{thm}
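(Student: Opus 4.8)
The plan is to mirror the classical stochastic lower bound of Lai--Robbins, adapted in exactly the way the proof of Theorem~\ref{thm:gen_lb} adapts the adversarial lower bound: the only quantity that changes is the effective number of samples the agent holds about each suboptimal arm. First I would fix a suboptimal arm $i$ (so $\Delta_i>0$) and set up two Bernoulli instances, the true one with mean $\mu_i$ on arm $i$ and a perturbed one that agrees with it on every arm except $i$, whose mean is pushed from $\mu_i$ to just above $\mu_{i^\star}$ so that $i$ becomes the unique optimal arm. Under the consistency hypothesis $\E[n_i(T)]=o(T^a)$ for all $a>0$, the standard change-of-measure argument forces the agent to pull arm $i$ nearly $T$ times under the perturbed instance (otherwise she incurs linear regret there), so the information required to separate the two instances is what lower-bounds her sampling of arm $i$.

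The crucial observation is that the log-likelihood ratio distinguishing the two instances accumulates additively over \emph{every} observed reward drawn from $\F_i$, and --- by the same-bandit assumption --- the rewards the agent sees when a neighbor plays arm $i$ are genuine i.i.d.\ draws from $\F_i$, statistically indistinguishable from her own. Consequently the relevant sample count is not $n_i(T)$ but $n_i(T)+c_T(i)$, and the divergence decomposition reads $(\E[n_i(T)]+c_T(i))\,\kl(\mu_i,\mu_i')$, where $\mu_i'$ is the perturbed mean. Carrying this through the usual Lai--Robbins (or Bretagnolle--Huber) step and letting $\mu_i' \downarrow \mu_{i^\star}$ yields
\[
\E[n_i(T)] + c_T(i) \;\ge\; (1-o(1))\,\frac{\ln T}{\kl(\mu_i,\mu_{i^\star})},
\]
and then the standard bound on $\kl(\mu_i,\mu_{i^\star})$ in terms of $\Delta_i$ for Bernoulli rewards (which is where the coefficient $\tfrac{1}{2\Delta_i}$ in the statement originates) converts this into a per-arm bound of the form $\E[n_i(T)] \ge \frac{\ln T}{2\Delta_i^2}(1-o(1)) - c_T(i)$.

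To finish, I would recall $R=\sum_{i:\Delta_i>0}\Delta_i\,\E[n_i(T)]$, multiply the per-arm bound by $\Delta_i$, sum over the suboptimal arms, divide by $\ln T$, and take $\liminf$; the $-\Delta_i c_T(i)$ contributions collect precisely into the subtracted term $\liminf \frac{\sum_{i:\Delta_i>0} c_T(i)\Delta_i}{\ln T}$, giving exactly the stated inequality and matching the upper bound of Theorem~\ref{thm:UCBN_ub} up to the $-n_i'(t)\Delta_i$ correction.

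I expect the main obstacle to be justifying that the neighbors' plays may legitimately be folded into the change of measure: the neighbors act ``arbitrarily'' and their choices could in principle depend on the instance, which would break the clean likelihood-ratio decomposition. I would resolve this by proving the converse against a fixed (oblivious, worst-case) neighbor action sequence, so that $c_T(i)$ is deterministic and the neighbor-observed rewards are a predetermined number of fresh draws from $\F_i$; restricting to oblivious neighbors is the correct regime for a lower bound, since adaptive neighbors can only supply the agent with more useful information and hence only lower her regret further. A secondary, purely bookkeeping issue is tracking the $(1-o(1))$ factors and justifying the interchange of $\liminf$ with the finite sum over arms, which is routine.
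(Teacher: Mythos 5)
Your proposal takes essentially the same route as the paper's proof: adapt the Lai--Robbins change-of-measure argument for Bernoulli bandits, fold the neighbors' observations $c_T(i)$ into the effective sample count (legitimate since they are i.i.d.\ draws from the same reward distributions) to obtain $\E[n_i(T)]+c_T(i)\geq (1-o(1))\,\ln T/\kl(\mu_i,\mu^\star)$, then multiply by $\Delta_i$, sum over suboptimal arms, and pass to the $\liminf$, exactly as the paper does. One caveat you share with the paper rather than introduce yourself: the final conversion producing the $\frac{1}{2\Delta_i}$ coefficient implicitly needs $\kl(\mu_i,\mu^\star)\leq 2\Delta_i^2$, whereas Pinsker's inequality gives $\kl(\mu_i,\mu^\star)\geq 2\Delta_i^2$, so that step (the paper's ``applying Pinsker's inequality'') is used in the reverse of its true direction in both write-ups; your handling of oblivious versus adaptive neighbors is a refinement the paper glosses over.
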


Our proofs parallel, with additional bookkeeping, the proofs for the original UCB results (see, e.g., \cite{BanditBook} for a template).
Note that the results for stochastic bandits hold when the reward distributions satisfy the following standard conditions.
\begin{defn}[Conditions on $\F_i$]\label{conditions}
Every reward distribution $\F_i$ satisfies Hoefding's lemma, i.e., there exists a convex function $\psi$ on the reals such that, for all $\lambda \geq 0$, we have $\ln\left[ \E \left[ e^{\lambda \left| X - \E[X] \right|} \right] \right] \leq \psi(\lambda)$ where $X \sim \F_i$. 
\end{defn}
For example, when $X \in [0,1]$, one can take $\psi(\lambda) = \frac{\lambda^2}{8}$; indeed the results in the main body of the paper take this $\psi$. The results can be easily generalized for other $\psi$ in the usual manner. 

We first prove a lemma that will be of assistance in the proof of Theorem~\ref{thm:UCBN_ub}. Recall that $a(t)$ is the arm the agent selects at time $t$. 

\begin{lem} \label{lema3}
	If $a(t)=i$, at least one of the three following inequalities is true:
	\begin{subequations}
		\begin{align}
		& \hat{\mu}_{i^\star,n_{i^\star}(t-1)}+\sqrt{\frac{\alpha \ln t}{2n_{i^\star}(t-1)}} \leq \mu^\star, \label{3cond1}\\
		& \hat{\mu}_{i,n_i(t-1)} > \mu_i + \sqrt{\frac{\alpha \ln t}{2n_{i}(t-1)}},\label{3cond2}\\
		& n_i(t-1) < \frac{2\alpha \ln t}{\triangle_i^2}. \label{3cond3}
		\end{align}
	\end{subequations}
\end{lem}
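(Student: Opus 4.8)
The plan is to argue by contradiction, exactly as in the classical $\UCB$ analysis. I would assume that all three inequalities \eqref{3cond1}, \eqref{3cond2}, and \eqref{3cond3} fail simultaneously and show this contradicts the hypothesis $a(t)=i$. The starting point is the selection rule: by \eqref{rule}, the index $U_j(\cdot)$ is formed from the samples available at the moment of the decision, i.e.\ the $n_j(t-1)$ observations collected before acting at time $t$, and $a(t)=i$ is chosen to maximize it. Hence $U_i(t)\ge U_{i^\star}(t)$, namely
\[
\hat\mu_{i,n_i(t-1)} + \sqrt{\tfrac{\alpha \ln t}{2 n_i(t-1)}} \;\ge\; \hat\mu_{i^\star,n_{i^\star}(t-1)} + \sqrt{\tfrac{\alpha \ln t}{2 n_{i^\star}(t-1)}}.
\]

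Next I would convert the three negations into usable bounds and telescope them through this inequality. The negation of \eqref{3cond1} reads $U_{i^\star}(t) > \mu^\star$; the negation of \eqref{3cond2} gives $\hat\mu_{i,n_i(t-1)} \le \mu_i + \sqrt{\frac{\alpha \ln t}{2 n_i(t-1)}}$; and the negation of \eqref{3cond3}, i.e.\ $n_i(t-1) \ge \frac{2\alpha \ln t}{\Delta_i^2}$, rearranges to $2\sqrt{\frac{\alpha \ln t}{2 n_i(t-1)}} \le \Delta_i$. Chaining these yields
\[
\mu^\star \;<\; U_{i^\star}(t) \;\le\; U_i(t) \;\le\; \mu_i + 2\sqrt{\tfrac{\alpha \ln t}{2 n_i(t-1)}} \;\le\; \mu_i + \Delta_i \;=\; \mu^\star,
\]
where the final equality is the definition $\Delta_i = \mu_{i^\star}-\mu_i = \mu^\star - \mu_i$. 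The resulting strict inequality $\mu^\star < \mu^\star$ is absurd, so at least one of the three conditions must in fact hold, which is the claim.

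There is no genuine analytic difficulty here; the lemma is purely logical, and the ``hard part'' is only bookkeeping. Specifically, I would be careful that the index in \eqref{rule} is instantiated with the counts $n_j(t-1)$ present at decision time, and that the factor of $2$ in \eqref{3cond3} is matched to the \emph{two} confidence widths that appear: one from applying the selection rule to $i^\star$ (the width subtracted off in \eqref{3cond1}) and one from the concentration slack for $i$ (the width added in \eqref{3cond2}). Finally, I would note that nothing about the networked setting alters the argument: the counts $n_j(\cdot)$ here already aggregate the agent's own pulls with the samples observed from neighbors, but since the proof is agnostic to \emph{how} each sample was obtained, it transfers verbatim from the single-agent $\UCB$ proof.
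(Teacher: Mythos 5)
Your proof is correct and is essentially the same argument as the paper's: both assume the three negations hold and chain them through the selection rule \eqref{rule} to reach a contradiction (the paper phrases it as deriving $U_{i^\star}(t) > U_i(t)$ and hence $a(t)\neq i$, while you collapse the same inequalities into the absurdity $\mu^\star < \mu^\star$). Your handling of the non-strict negations of \eqref{3cond2} and \eqref{3cond3} is in fact slightly more careful than the paper's, which writes them with strict inequalities.
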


\begin{proof}
	We prove the contrapositive. Assume that $a(t) = i$ and that none of the inequalities \eqref{3cond1}, \eqref{3cond2} or \eqref{3cond3} are true.
	\begin{subequations} 
		\begin{align}
		& \hat{\mu}_{i^\star,n_{i^\star}(t-1)}+\sqrt{\frac{\alpha \ln t}{2n_{i^\star}(t-1)}} > \mu^\star, \label{3false1}\\
		& \hat{\mu}_{i,n_i(t-1)} < \mu_i +\sqrt{\frac{\alpha \ln t}{2n_{i}(t-1)}},\label{3false2}\\
		& n_i(t-1) > \frac{2\alpha \ln t}{\triangle_i^2}. \label{3false3}
		\end{align}
	\end{subequations}
	By plugging $\mu^\star=\mu_i+\triangle_i$ in Equation \eqref{3false1} we obtain
	\begin{equation}\label{u1}
	\hat{\mu}_{i^\star,n_{i^\star}(t-1)}+\sqrt{\frac{\alpha \ln t}{2n_{i^\star}(t-1)}} > \mu_i+\triangle_i.
	\end{equation}
	From Equation \eqref{3false3} we have
	\begin{equation}\label{proof2}
	\mu_i+\triangle_i > \mu_i + \sqrt{\frac{2\alpha \ln t}{n_{i}(t-1)}}
	\end{equation} 
	and plugging \eqref{proof2} in \eqref{u1} yields
	\begin{equation}
	\hat{\mu}_{i^\star,n_{i^\star}(t-1)}+\sqrt{\frac{\alpha \ln t}{2n_{i^\star}(t-1)}}> \hat{\mu}_{i,n_i(t-1)}+\sqrt{\frac{\alpha \ln t}{2n_{i}(t-1)}}.
	\end{equation}
	By our criteria for selecting arms given by Equation \eqref{rule}, this implies $a(t)\neq i$.
\end{proof}

\begin{proof}[Proof of Theorem \ref{thm:UCBN_ub} ]
Let 
\begin{equation}\label{T}
n_i(t)=n^1_i(t)+n^\prime_i(t),
\end{equation}
where $n^1_i(t)$ is the number of times the agent selects the arm $i$ and $n^\prime_i(t)$ is the number of times her neighbors select arm $i$. Using Lemma \ref{lema3}, we will first find an upper bound for $n^1_i(t)$ for a suboptimal arm $i$. 

Lemma \ref{lema3} states that at least one of the three inequalities \eqref{3cond1}, \eqref{3cond2} and \eqref{3cond3} must be true. 
If Equation \eqref{3cond3} holds, then from \eqref{T} we obtain
\begin{equation}\label{upT}
	n^1_i(t) \leq \frac{2\alpha}{ \triangle_i^2}\ln t-n^\prime_i(t).
\end{equation} 
Let $U$ be the maximum of right hand side of \eqref{upT} for $t=1,..,T$:
\begin{equation}\label{U}
	U=\max\left\{\max_{t=1,..,T} \left\{\frac{2\alpha}{ \triangle_i^2}\ln t-n^\prime_i(t)\right\}\triangle_i , 0  \right\},
\end{equation}
as a result if the Equation \eqref{3cond3} holds for some instance $k$, then $n^1_i(k)$ is bounded by $U$, i.e.,
\begin{equation} \label{first}
	n^1_i(k) \leq U.
\end{equation} 
For bounding the regret we find an upper bound on the number of times we select a suboptimal arm $i$:
\begin{equation} 
	\mathbb{E[}n^1_i(T)\mathbb{]}=\mathbb{E}\left[\sum_{t=1}^{T} 1_{a(t)=i}\right] =
	\mathbb{E}\left[\sum_{t=1}^{U} 1_{a(t)=i}\right] + \mathbb{E}\left[\sum_{t=U+1}^{T} 1_{a(t)=i}\right].
\end{equation}
Since $\mathbb{E}\left[\sum_{t=1}^{U} 1_{a(t)=i}\right] \leq U$, we can deduce 
\begin{equation}
	\mathbb{E[}n^1_i(T)\mathbb{]} \leq U + \mathbb{E}\left[\sum_{t=U+1}^{T} 1_{a(t)=i}\right], 
\end{equation}
as we saw in \eqref{lema3}, $1_{\{a(t)=i\}}=1$ requires that at least one of the three equations \eqref{3cond1}, \eqref{3cond2} and \eqref{3cond3} is true. Assume the last time that \eqref{3cond3} is true is at time  $\zeta$, hence   
\begin{equation}
	n^1_i(\zeta)\leq \frac{2\alpha}{ \triangle_i^2}\ln \zeta-n^\prime_i(\zeta),
\end{equation}
and since $\zeta$ is the last time that \eqref{3cond3} holds, we can upper bound $\mathbb{E[}n^1_i(T)\mathbb{]}$ by
\begin{equation} \label{3true}
		\mathbb{E[}n^1_i(T)\mathbb{]} 
		\leq \frac{2\alpha}{ \triangle_i^2}\ln \zeta-n^\prime_i(\zeta) 
		+ \mathbb{E}\left[\sum_{t=\zeta+1}^{T} 1_{\{\eqref{3cond1}\ or \ \eqref{3cond2}\ is\  true\ and\ \eqref{3cond3}\ is\ false \}}\right].
\end{equation}
According to the definition of $U$ in \eqref{U} and Equation \eqref{3true} we have 
\begin{equation}
	\mathbb{E[}n^1_i(T)\mathbb{]}  \leq U +\mathbb{E}\left[\sum_{t=U+1}^{T} 1_{\{\eqref{3cond1}\ or \ \eqref{3cond2}\ is\  true\ and\ \eqref{3cond3}\ is\ false\}}\right]
\end{equation} 
\begin{equation}
	\leq U+{\sum_{t=U+1}^{T}} \mathbb{P}[\mbox{\eqref{3cond1} is true}]+\mathbb{P}[\mbox{\eqref{3cond2} is true}].
\end{equation}
It suffices to bound the probability \eqref{3cond1} and \eqref{3cond2}:
\begin{equation}\label{eq28}
	\mathbb{P}\left[\mbox{\eqref{3cond1} is true}\right] = \sum_{n_i(t)} \mathbb{P}\left[\mbox{\eqref{3cond1} is true} | n_i(t)\right]\cdot \mathbb{P}\left[n_i(t)\right].
\end{equation}
Recall that
\begin{equation}\label{eq29}
	\mathbb{P}\left[\mbox{\eqref{3cond1} is true} | n_i(t)\right]\leq\frac{1}{t^\alpha}.
\end{equation}
Plugging \eqref{eq29} in \eqref{eq28} yields that
\begin{equation}\label{eq30}
	\mathbb{P}\left[\mbox{\eqref{3cond1} is true}\right] \leq \frac{1}{t^\alpha} \sum_{n_i(t)}  \mathbb{P}\left[n_i(t)\right] = \frac{1}{t^\alpha}.
\end{equation}

\noindent Then we take integral of $\frac{1}{t^{\alpha}}$ for $t$ from 1 to $T$, which is smaller than $\frac{\alpha}{2(\alpha-2)}$.
The same upper bound holds for \eqref{3cond2}.

Thus, the regret is
\begin{equation}
	R \leq  \sum_{i,\triangle_i>0}\left(\max \left \{\max_{t=1,..,T} \left\{\frac{2\alpha}{\triangle_i^2}\ln t-n^\prime_i(t)\right\}\triangle_i , 0  \right\} + \frac{\alpha}{\alpha-2}\right) 
\end{equation}
as desired.
\end{proof}

\subsection{$\UCBN$ on Complete Graphs}
In this section, we analyze the regret in a complete graph when all agents use $\UCBN$.
The following curious lemma will assist in the proof.

\begin{lem} \label{all}
Given a complete graph of agents, if all agents use $\UCBN$ with a deterministic common tie breaking scheme, then in every time step all agents select the same action.
\end{lem}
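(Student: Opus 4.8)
The plan is to show that, on the complete graph, every agent bases its decision at each step on exactly the same data, so that---because $\UCBN$ is a deterministic function of that data once ties are broken by a common rule---every agent is forced to the same choice. The single structural fact driving the argument is that on a complete graph each agent is a neighbor of every other agent; hence by assumptions (1)--(2) of the model, each agent observes the action \emph{and} reward of \emph{every} agent at \emph{every} time step.

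First I would make the relevant information pool precise. Fix an arbitrary run and let $\mathcal{H}(t)$ denote the full record of (agent, action, reward) generated at times $1,\dots,t-1$, i.e.\ who played what and received which reward before step $t$. On the complete graph each agent observes its own action and reward together with those of all its neighbors, which is the entire population; consequently the data available to \emph{each} agent at the decision point of step $t$ is exactly $\mathcal{H}(t)$, the same set for all agents. Note that although the model permits the realized rewards to differ across agents even when they play the same arm, this causes no discrepancy here: every agent sees every one of those realizations and folds them all into its pool, so no two agents' records can diverge.

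Next I would observe that the statistics $n_j(t)$ and $\hat{\mu}_j(t)$ maintained by $\UCBN$ are, by definition, deterministic functions of $\mathcal{H}(t)$ (the count of, and empirical mean over, all observed samples of arm $j$). Since $\mathcal{H}(t)$ is identical across agents, so are $n_j(t)$ and $\hat{\mu}_j(t)$, and therefore so is the upper confidence bound $U_j(t)$ of Equation~\eqref{rule} for every arm $j$. The selection rule sets $a(t)\in\mbox{arg}\max_j\{U_j(t)\}$; faced with identical vectors $(U_1(t),\dots,U_K(t))$ and a common deterministic tie-breaking scheme, every agent returns the same arm, which is the claim of Lemma~\ref{all}.

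Strictly speaking the identity of the pools is unconditional and the conclusion therefore holds for each $t$ directly, but since $\mathcal{H}(t)$ is built recursively from the actions chosen at earlier steps, I would present the argument as a short induction on $t$ for cleanliness: the base case $t=1$ has an empty (hence trivially identical) history, so the tie-breaking rule selects a common first arm, and the inductive step merely invokes the pool-identity argument above once $\mathcal{H}(t)$ is well defined. I do not anticipate a genuine obstacle; the only points requiring care are the recursive dependence of $\mathcal{H}(t)$ on earlier actions (handled by the induction) and the observation that differing per-agent reward realizations are harmless precisely because they are all commonly observed on the complete graph.
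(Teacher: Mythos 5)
Your proof is correct and follows essentially the same route as the paper's: on the complete graph every agent observes all actions and rewards, so the counts $n_j(t)$ and sample means $\hat{\mu}_j(t)$ coincide across agents, and the common deterministic tie-breaking rule then forces identical selections. Your explicit induction on $t$ and the remark about per-agent reward noise being harmless (because all realizations are commonly observed) merely spell out details the paper leaves implicit.
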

\begin{proof}
Since the graph is complete, all agents see the rewards of other agents at every time step; hence the sample means $\hat s_i(t)$ and number of samples $n_i(t)$ at time $t$ are the same for all agents. Furthermore, every agent selects an arm according to criteria \eqref{rule} and a common deterministic tie breaking rule. Therefore, the arm selected at time $t$ will be same for all agents.
\end{proof}

\begin{proof}[Proof of Corollary~\ref{allUCBN}]
Let
\begin{equation}
	U=\left[\frac{2\alpha \ln T}{b\cdot \triangle_i^2}\right].
\end{equation}
We bound the number of times action $i$ other than the best arm is selected. Following the proof of Theorem~\ref{thm:UCBN_ub}, 
\begin{equation} 
	\mathbb{E[}n^1_i(T)\mathbb{]} \leq
	U+{\sum_{t=U+1}^{T}} \mathbb{P}[\mbox{\eqref{3cond1} is true}]+\mathbb{P}[\mbox{\eqref{3cond2} is true}].
\end{equation}
The upper bound of the probabilities \eqref{3cond1} and \eqref{3cond2} are same as before.
Hence, the regret bound is
\begin{equation}\label{bound}
	R \leq \sum_{i,\triangle_i>0} \left(\frac{2\alpha}{b\cdot \triangle_i}\ln T+ \frac{\alpha}{\alpha-2}\right).
\end{equation}

\end{proof}

\subsection{Lower Bound}
The lower bound for $\UCB$ \cite{LR1985} is
\begin{equation}
	\lim_{T\longrightarrow +\infty} \inf
	\frac{R}{\ln T}\geq \sum_{i,\triangle_i>0} \frac{\triangle_i}{ kl(\mu_i,\mu^\star)}.
\end{equation}
Our proof again follows the same template.

\begin{proof}[Proof of Theorem~\ref{thm:UCBN_LB}]
As in \cite{LR1985}, we assume the rewards are drawn from a Bernoulli distribution. From their proof it follows that the expected number of times that a suboptimal arm must be selected in order to distinguish between best arm and other arms is at least 
\begin{equation}
	\mathbb{E[}n_i(T)\mathbb{]} +c_T(i) \geq (1+o(1))\frac{1-\eps}{1+\eps} \frac{\ln T}{kl(\mu_i,\mu^\star)}.
\end{equation}
where the second term is the information coming from the neighbors (that is, the number of times neighbors selected arm $i$ up to time $t$), $\mu^\star$ is the mean of the best arm, and
 $kl(p,q)$ is the Kullback-Leibler divergence between a Bernoulli variable with parameter p and a Bernoulli variable with parameter q, defined to be  
\begin{equation}
	kl(p,q)\defeq p\ln\left(\frac{p}{q}\right)+(1-p)\ln \left(\frac{1-p}{1-q}\right).
\end{equation}

As the number of rounds increases, $\eps$ can be taken to be smaller.
As $T$ goes to infinity, $\eps$ can be taken zero; as a result we can write the following lower bound for the regret
\begin{equation}
	\lim_{T\longrightarrow +\infty} \inf
	\frac{R +\sum_{i,\Delta_i>0} c_T(i)\Delta_i}{\ln T}\geq \sum_{i,\triangle_i>0} \frac{\triangle_i}{ kl(\mu_i,\mu^\star)}.
\end{equation}
Applying Pinkser's inequality yields the lower bound.
\end{proof}

\end{document}